\def\eqref#1{equation~\ref{#1}}
\def\floor#1{\lfloor #1 \rfloor}
\def\1{\bm{1}}
\DeclareMathAlphabet{\mathsfit}{\encodingdefault}{\sfdefault}{m}{sl}
\SetMathAlphabet{\mathsfit}{bold}{\encodingdefault}{\sfdefault}{bx}{n}
\def\gA{{\mathcal{A}}}
\def\gS{{\mathcal{S}}}
\def\gX{{\mathcal{X}}}
\newcommand{\R}{\mathbb{R}}
\newtheorem{proposition}{Proposition}
\definecolor{Gray}{gray}{0.9}
\newcommand{\ie}{\textit{i.e.}}
\newcommand{\eg}{\textit{e.g.}}
\renewcommand{\eqref}[1]{(\ref{#1})} 
\newcommand{\teacher}{\textcolor{\teachercolor}{Teacher}\xspace}
\newcommand{\student}{\textcolor{\studentcolor}{Student}\xspace}
\newcommand{\teacherblack}{\textcolor{black}{Teacher}\xspace}
\newcommand{\teachercolor}{black}
\newcommand{\studentcolor}{black}
\crefname{section}{\S\@gobble}{\S\@gobble}
\crefname{subsection}{\S\@gobble}{\S\@gobble}
\crefname{proposition}{Prop.}{Props.}
\crefname{figure}{Fig.}{Figs.}
\def\section{\@startsection {section}{1}{\z@}{-0.3ex}{0.3ex}{\large\sc\raggedright}}
\def\subsection{\@startsection{subsection}{2}{\z@}{-0.2ex}{0.2ex}{\normalsize\sc\raggedright}}
\def\subsubsection{\@startsection{subsubsection}{3}{\z@}{-0.1ex}{0.1ex}{\normalsize\sc\raggedright}}
\def\paragraph{\@startsection{paragraph}{4}{\z@}{0ex}{-1em}{\normalsize\bf}}
\def\subparagraph{\@startsection{subparagraph}{5}{\z@}{0ex}{-1em}{\normalsize\sc}}
\title{Adaptive teachers for amortized samplers}
\author{Minsu Kim\thanks{equal contribution, correspondence to \texttt{\{min-su, sanghyeok.choi\}@kaist.ac.kr}}\\
Mila, KAIST 
\And
Sanghyeok Choi$^{*}$\\
KAIST
\And
Taeyoung Yun\\
KAIST
\And
Emmanuel Bengio\\
Recursion
\AND
Leo Feng\\
Mila, Universit\'e de Montr\'eal
\And
Jarrid Rector-Brooks\\
Mila, Universit\'e de Montr\'eal
\And
Sungsoo Ahn\\
KAIST\\
\And
Jinkyoo Park\\
KAIST
\AND
Nikolay Malkin\\
University of Edinburgh
\And
Yoshua Bengio\\
Mila, Universit\'e de Montr\'eal
}
\begin{document}

\maketitle

\begin{abstract}
Amortized inference is the task of training a parametric model, such as a neural network, to approximate a distribution with a given unnormalized density where exact sampling is intractable. When sampling is implemented as a sequential decision-making process, reinforcement learning (RL) methods, such as generative flow networks, can be used to train the sampling policy. Off-policy RL training facilitates the discovery of diverse, high-reward candidates, but existing methods still face challenges in efficient exploration. We propose to use an adaptive training distribution (the \teacher) to guide the training of the primary amortized sampler (the \student). The \teacher, an auxiliary behavior model, is trained to sample high-loss regions of the \student and can generalize across unexplored modes, thereby enhancing mode coverage by providing an efficient training curriculum. We validate the effectiveness of this approach in a synthetic environment designed to present an exploration challenge, two diffusion-based sampling tasks, and four biochemical discovery tasks demonstrating its ability to improve sample efficiency and mode coverage. Source code is available at \url{https://github.com/alstn12088/adaptive-teacher}.
\end{abstract}

\section{Introduction}

Sampling from a complex distribution given its unnormalized density function is a fundamental problem in machine learning \citep{hinton2002training, lecun2006tutorial} and scientific discovery \citep{dellago1998efficient, noe2019boltzmann}. Amortized inference methods aim to fit a generative model that samples from a target distribution, possibly by a sequence of stochastic generation steps which is beneficial because it allows reusing a shared computational module for inference across multiple data points, as opposed to performing inference independently for each data point \citep{margossian2023amortized}.
However, unlike for generative models trained from data, samples from the ground truth distribution may not be available. Multi-step sampling from an unnormalized density function with amortized inference can be achieved with reinforcement learning (RL) methods but raises the challenge of \textit{exploration} -- specifically, the ability to discover new modes of the target distribution during training. This is due to the intractable size of the sample space and the fact that only sampling from the generator itself would be oblivious to modes that the generator misses.

Just as with generative models trained on data, it is often more natural and beneficial to approximate the generation of objects as a sequence of decisions made by a policy rather than using a single parametric family, due to the multi-modal expressivity of hierarchical inference (\eg, diffusion probabilistic models \citep{ho2020ddpm}). Sequential decision algorithms for amortized inference are unified by the theory of generative flow networks \citep[GFlowNets;][]{bengio2021flow}, which are a collection of off-policy RL methods \citep{tiapkin2023generative,deleu2024discrete}. GFlowNets have been used for such amortized inference problems as natural-language and biological sequence design by token-by-token sequence generation \citep{jain2022biological, shen2023towards, hu2023amortizing}, Bayesian inference over data structures~\citep{deleu2022bayesian}, molecular design by incremental addition of atoms or fragments \citep{bengio2021flow, jain2023gflownets}, or image refinement by a diffusion process in continuous space \citep{venkatraman2024amortizing}, \textit{inter alia}. GFlowNets have shown success in the sequential sampling problems at scale due to their advantageous off-policy training ability \citep{malkin2023gflownets}.

The prudent selection of training data is crucial to the success of such RL methods to model the full distribution faithfully, akin to the problem of active learning in supervised problems: to maximize sample efficiency, the most informative samples should be selected for training. To explore the full distribution, some applications of GFlowNets have either used exploration techniques borrowed from RL, such as noisy exploration \citep[first used by][]{bengio2021flow}, (prioritized) replay buffers \citep{deleu2022bayesian,schaul2015prioritized, vemgal2023empirical}, and delayed updates \citep{lau2023dgfn}. Others have employed search techniques in the target space: MCMC and local search \citep{kim2024local, kim2024ant, sendera2024diffusion, phillips2024metagfn}, genetic algorithms \citep{kim2024genetic}, and exploiting samples from the target distribution \citep{zhang2022generative, hu2023gfnem} when available.

\begin{figure}[t]
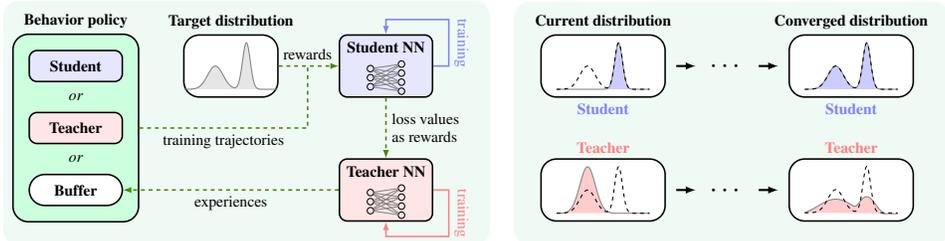

\vspace*{-1em}
    \centering
    \includestandalone[width=0.85\linewidth]{figures/figure1tikz}
    \caption{Training an amortized sampler (\student) with an adaptive \teacher. \textbf{Left}: The behavior policy mixes \student, \teacher, and replay buffer policies to generate trajectories that train \student and store experiences. \teacher is updated based on \student's loss. \textbf{Right}: \student and \teacher distributions co-evolve, with \teacher targeting uncovered modes until \student converges to the target distribution.   }
    \label{fig:main-figure}
    \vspace{-2.5em}
\end{figure}

However, challenges in mode coverage remain. All of the forementioned methods promote exploration through perturbation of the policy, \eg, replaying the samples \citep{vemgal2023empirical}, augmenting the reward function \citep{pan2023generative}, and local search starting from generated samples~\citep{kim2024local}. 
These exploration methods focus on capturing the modes that are already near those generated by the current policy and can hardly capture the ones sufficiently separated from the already explored modes. 

In this paper, we propose to explicitly explore the regions of high loss by introducing a \textbf{\teacher} model that guides the training of the primary, or \textbf{\student} sampler (\Cref{fig:main-figure}). Here, we believe that trajectories with high loss are particularly informative for mode coverage, as they are likely to lead to regions of the target distribution that are either undersampled (dropped modes) or oversampled (collapsed modes).
The \textbf{\teacher} is an adaptive behavior policy that is trained to sample target space regions where the \textbf{\student} model receives a high loss. In turn, the \textbf{\student} model is trained on samples from the \textbf{\teacher} model. 

Our approach can be seen as amortizing an ideal prioritized experience replay \citep[PER;][]{schaul2015prioritized} which samples high-loss objects from the entire sample space, instead of the finite-size replay buffer. Compared to (non-ideal) PER, the \textbf{\teacher} model has the potential to generalize across the high-loss regions of the {\bf \student}, without regard for whether they have previously been sampled or discovered. 

In contrast, prioritized replay requires the poorly captured modes to have already been visited in order for the model to learn from them.

We test our method on a diverse set of domains where GFlowNets have been used, including discrete tasks (biological sequence design and molecular design) and continuous tasks (diffusion-based sampling benchmarks). Comprehensive experiments demonstrate that our algorithm is effective in improving mode coverage and training efficiency across all tasks.

\section{Preliminaries}
\label{sec:preliminaries}

We give a summary of GFlowNets as algorithms for amortized sampling by sequential decision making. For simplicity, this exposition is about discrete space, where GFlowNets were originally defined \citep{bengio2021flow}, but GFlowNets have been extended to the case of continuous variables \citep{lahlou2023theory}, which is conceptually similar. The reader is directed to \citet{bengio2023gflownet} for an extended overview, \citet{malkin2023gflownets} for an introduction focused on connections to hierarchical variational inference, and to \citet{deleu2024discrete} for a maximum-entropy RL point of view.

GFlowNets are policies in deterministic Markov decision processes (MDPs), trained so as to sample from a distribution over terminal states whose mass function is proportional to a given reward. The MDP is assumed to be represented as a finite directed acyclic graph $G=(\gS,\gA)$, where $\gS$ is the set of \emph{states} and $(s\rightarrow s')\in\gA$ if there is a possible action to be taken at state $s$ leading to state $s'$. A policy is then the same as a collection of distributions $P_F(\cdot\mid s)$ over the children\footnote{If $(s\rightarrow s')\in\gA$, then $s'$ is a \emph{child} of $s$; the converse relation is called \emph{parent}.} of every state $s$ that has at least one child. As in other deep RL methods, the policy could be a neural network $P_F(s'\mid s;\theta)$ taking a representation of the state $s$ as input and outputting the logits of a distribution over children $s'$. We will sometimes leave out the $\theta$ as implicit to lighten notation.

We assume the existence of a unique state $s_0\in\gS$, called the \emph{initial state} that is not the child of any state. Conversely, states without children are called \emph{terminal}, and the set of terminal states is denoted $\gX$. A policy $P_F$ induces a distribution over \emph{complete trajectories} -- sequences $\tau=s_0\rightarrow s_1\rightarrow\dots\rightarrow s_n$ with $s_n\in\gX$ -- which can be sampled by starting at $s_0$ and iteratively transitioning to child states sampled according to $P_F$ until a terminal state is reached. This in turn a \emph{terminating distribution} $P_F^\top$ over $\gX$, which is the marginal distribution over the final states of trajectories sampled in this way. To be precise,
\begin{equation}\label{eq:terminating_dist}
    P_F^\top(x)=\sum_{\tau\rightsquigarrow x}P_F(\tau),\quad P_F(\tau=(s_0\rightarrow\dots\rightarrow s_n)):=\prod_{i=0}^{n-1}P_F(s_{i+1}\mid s_i),
\end{equation}
where $x\in \gX$ and $\tau\rightsquigarrow x$ indicates that the sum is restricted to trajectories $\tau$ whose last state is $x$.

Let $R:\gX\to\R_{>0}$ be a function on the terminal states, called the \emph{reward function}, and set $Z:=\sum_{x\in\gX}R(x)$ as the normalization constant. We would like to train $P_F$ so as to make $P_F^\top(x)=R(x)/Z$ for all $x\in\gX$, \ie, to make $P_F$ sample terminal states with probability proportional to the reward. 

Because the sum in \eqref{eq:terminating_dist} may be intractably large (if many trajectories could lead to the same $x$), achieving this requires introducing auxiliary objects into the optimization. One popular option is to use the trajectory balance (TB) objective \citep{malkin2022trajectory}. To train a model with TB, one introduces an additional \emph{backward policy} $P_B(\cdot\mid\cdot)$, which is a collection of distributions over the parents of every noninitial state (\ie, a policy on the reverse MDP, which can be either fixed -- as in our case -- or learned), as well as an estimate of the total reward $Z_\theta$ (usually parametrized in the log domain, making $\log Z_\theta$ a learnable parameter). We define the TB discrepancy for a trajectory $\tau$ with final state $x$ by
\begin{equation}\label{eq:tb}
\delta(\tau; \theta):= \underbrace{[\log R(x) + \log P_B(\tau\mid x)]}_{\text{backward flow}} - \underbrace{[\log Z_\theta + \log P_F(\tau; \theta)]}_{\text{forward flow}},
\end{equation}
where $P_B(\tau\mid x;\theta)$ is defined analogously to \eqref{eq:terminating_dist}, by
\begin{equation}
 P_B(\tau=(s_0\rightarrow\dots\rightarrow s_n)\mid x)=\prod_{i=0}^{n-1}P_B(s_i\mid s_{i+1}).   
\end{equation}
It can be shown that if $\delta(\tau;\theta)=0$ for all trajectories $\tau$, then $Z_\theta=Z$ and $P_F^\top(x)=R(x)/Z$ for all $x$, meaning that $P_F$ solves the sampling problem. Intuitively, this is the case because the reward distribution and $P_B$ would then determine the same distribution over trajectories as $P_F$, but factorized in reverse order. One thus attempts to enforce this by minimizing a loss, such as $\delta(\tau; \theta)^2$, on trajectories $\tau$ sampled from some behaviour policy $\pi$. (If $\pi$ is the current policy $P_F$ itself, the optimization is said to be \emph{on-policy}, otherwise \emph{off-policy}.)

We note that there exist other training procedures that use learned estimators and loss functions associated with individual states or transitions, rather than full trajectories, such as detailed balance \citep[DB;][]{bengio2023gflownet} and subtrajectory balance \citep[SubTB;][]{madan2023learning}, which all have advantages under certain conditions. This paper mostly focuses on TB due to its simplicity and popularity as a default choice in the literature; we investigate DB in \cref{app:db} to show our method's flexibility over objective functions.

\section{The \teacherblack: An adaptive training distribution}
\label{sec:method}

In this section, we introduce the \textbf{\teacher}, which is a secondary GFlowNet designed to enhance the efficiency of off-policy training for the primary, or \textbf{\student}, GFlowNet. The two GFlowNets share the same state and action space, but have different rewards.

The \teacher's role is to generate an adaptive training distribution for the \student, aiming to sample trajectories that yield high loss for the \student. Intuitively, samples with high loss tend to be less (or never) visited by the \student, implying high probability of the samples being in the unexplored modes. To this end, we train the \teacher with GFlowNet objective for amortization of sampling trajectories with high loss. Note that the \student's target distribution does not depend on the \teacher, but the \teacher's target distribution depends on the \student.

We henceforth denote the parameters of the \student GFlowNet by $\theta$ and those of the \teacher by $\phi$.

\subsection{Reward design for \teacherblack}
\label{sec:teacher-reward}

We define the reward function for the \teacher using the TB loss of the \student, \(\delta(\tau; \theta)^2\). In basic form, we could define the \teacher's reward as
\begin{equation}
    \log R^{\rm basic}_{\text{\teacher}}(x; \theta) = \mathbb{E}_{P_B(\tau \mid x; \theta)} \left[\log \left( \delta \left(\tau; \theta\right)^2 \right)\right] .\label{eq:teacher-reward0} 
\end{equation}
In~\cref{eq:teacher-reward0}, the \student's loss is marginalized over trajectories $\tau$ in the log domain over the backward policy \(P_B(\tau \mid x; \theta)\) of the \student, given a terminal state $x$. This is because we aim to train the \teacher as a sampler of terminal states, although what we obtain when training the \student is a \emph{trajectory-level} error $\delta(\tau; \theta)$, which we need to convert into a function of the terminal state $x$ only to form the \teacher's reward.  Having the \teacher model terminal states $x$ rather than full trajectories $\tau$ is motivated by the desire to obtain full mode coverage in the space of terminal states, but not necessarily in the space of trajectories that lead to these terminal states. 
In practice, this expectation is estimated using Monte Carlo sampling with a single sample $\tau\sim P_B(\tau \mid x; \theta)$, relying on the fact that stochastic gradient descent training of the \teacher will automatically average out the variability resulting from this sampling. In fact, this gives an unbiased estimator of the gradient if training with the full expectation \citep[see, \eg,][]{deleu2022bayesian,bengio2023gflownet}.

We propose two modifications to \eqref{eq:teacher-reward0} to facilitate mode discovery.

\paragraph{Favoring undersampled regions.} First, we hypothesize that because the \teacher should encourage the \student to discover unvisited modes, it should favor regions of the state space where the target density exceeds the \student's sampling probability. To this end, we increase the weight of the \teacher's reward for states where the backward flow exceeds the forward flow (cf.\ \eqref{eq:tb}), while adding a smoothing constant $\epsilon$:
\begin{align}
    \log R_{\text{\teacher}}^{\rm weighted}(x; \theta) = \mathbb{E}_{P_B(\tau \mid x; \theta)} \left[\log \left( \epsilon + \left(1 + C\mathbb{I}_{\delta \left(\tau; \theta\right)>0}\right)\delta \left(\tau; \theta\right)^2 \right)\right] ,\label{eq:teacher-reward1} 
\end{align}
The weighted term $(1 + C\mathbb{I}_{\delta \left(\tau; \theta\right)>0})$ gives additional weight when the TB discrepancy is positive, which indicates the \student is undersampling a high-rewarded terminal sample. Here, $C > 0$ represents the weighting constant, which we set to $C=19$ for every task; see \cref{app:C-ablation} for ablation study on our choice of $C$.

\paragraph{Reward mixing.} To ensure the \teacher covers the missing modes (the high-reward regions that the \student missed), it is important to focus the \teacher's search space more on the high-reward regions than the low-reward ones. This approach helps target both high-loss and high-reward areas effectively. To achieve this, we propose to mix the reward \eqref{eq:teacher-reward1}, which is based on the \student's loss, with the \student's log reward: 
\begin{equation} \label{eq:mixing} \log {R}_{\text{\teacher}}(x):= \log R^{\rm weighted}_{\text{\teacher}}(x) + \alpha \log R(x). \end{equation} 
This approach encourages the \teacher to sample regions with both high loss and high reward. Here the mixing constant $\alpha$ is a hyperparameter that trades off between high loss and high reward; see \cref{app:alpha} for analysis of the effect of the choice of $\alpha$.

\subsection{Jointly training \teacher and \student}
\label{sec:training}

Using the $R_{\text{\teacher}}(x; \theta)$, 
the training process is a joint optimization of the \teacher parameters \(\phi\) and the \student parameters \(\theta\) to jointly minimize the following loss functions:
\begin{align}
    \mathcal{L}_{\text{\student}}(\tau; \theta) &= \delta(\tau; \theta)^2 = \left( \log \frac{Z_{\theta} P_F(\tau; \theta)}{R(x) P_B(\tau \mid x)} \right)^2,
    \label{eq:student_loss}
    \\
    \mathcal{L}_{\text{\teacher}}(\tau; \phi) &= \delta_{\text{\teacher}}(\tau;  \phi)^2 = \left( \log\frac{Z_{\phi} P_F(\tau; \phi)}{R_{\text{\teacher}}(x; \theta) P_B(\tau \mid x)} \right)^2,
    \label{eq:teacher_loss}
\end{align}
Notice that \eqref{eq:student_loss} is the loss for regular TB training of the \student, while \eqref{eq:teacher_loss} relies on the \student's loss to provide a reward for the \teacher via \eqref{eq:mixing}.

To simplify the training process, we adopt in our experiments a fixed backward policy \(P_B(\tau \mid x)\), without trainable parameters, which is used by both the \teacher and the \student.

\begin{figure}[t!]
\vspace*{-2em}
\centering
\begin{minipage}[t!]{0.9\textwidth}
\centering
\begin{algorithm}[H]
\small
\begin{spacing}{1.05}
\caption{\teacher-\student Training of GFlowNets}
\label{algorithm1}
\small
\begin{algorithmic}[1]
\State $\mathcal{Q}_{\text{buffer}} \leftarrow \emptyset$ \Comment{{\it Initialize replay buffer with queue structure}}
\For{$t=1,\ldots, T$} \Comment{{\it Iteration of training rounds}}
\State Select behavior policy $P_{\beta}(\tau)=
\begin{cases} 
P_F(\tau;\theta), & \text{if select \student}  \\
P_F(\tau; \phi), & \text{if select \teacher} \\
P_B(\tau|x)P(x|\mathcal{Q}_{\text{buffer}}), & \text{if select Prioritized Buffer}.
\end{cases}$
\label{line3}
\State Sample trajectories $\tau_1, \ldots, \tau_B \sim P_{\beta}(\tau)$.  \Comment{{\it Exploration}}

\State \textcolor{gray}{(Optional) Refine trajectories using local search.}
\label{algo:local_search}
\State Compute rewards: $R(x_1), \ldots, R(x_B)$.
\label{line6}
\State Compute TB discrepancy of \student: $\delta(\tau_1;\theta), \ldots, \delta(\tau_B; \theta)$.
\label{line7}
\State Compute $R_{\text{\teacher}}(x_1), \ldots, R_{\text{\teacher}}(x_B)$ using $\{R(x_i)\}_{i=1}^B$ and $\{\delta(\tau_i; \theta)\}_{i=1}^B$.
\label{line8}
\State Compute TB discrepancy of \teacher: $\delta(\tau_1;\phi), \ldots, \delta(\tau_B; \phi)$.
\label{line9}
    \State Update \student parameters: $\theta \gets  \text{Optimizer}\left(\frac{1}{B}\sum_{i=1}^B \delta(\tau_i; \theta)^2\right)$ 
    \Comment{{\it \student training}}
    \label{line10}
    \State Update \teacher parameters: $\phi \gets  \text{Optimizer}\left(\frac{1}{B}\sum_{i=1}^B \delta(\tau_i; \phi)^2\right)$ 
    \Comment{{\it \teacher training}}
    \label{line11}
\State Add experiences to buffer: $\mathcal{Q}_{\text{buffer}} \leftarrow \mathcal{Q}_{\text{buffer}} \cup \{x_i, R(x_i), R_{\text{\teacher}}(x_i)\}_{i=1}^B$ 

\EndFor
\end{algorithmic}
\end{spacing}

\end{algorithm}

\end{minipage}
\vspace{-2em}
\end{figure}

\paragraph{Behavior policy for joint optimization.} Algorithm~\ref{algorithm1} describes the \teacher-\student training procedure, which simultaneously minimizes the loss functions of both the \teacher and the \student. In line~\ref{line3}, we select the behavior policy by choosing either the \student, the \teacher, or a prioritized buffer, ensuring that all three are sufficiently utilized (see \cref{app:implementation} for details on how they are chosen). Given a terminal state $x$ sampled from $P(x \mid Q_{\text{buffer}})$, we then generate a trajectory $\tau$ using the backward policy $P_B(\tau \mid x)$. This approach is similar to previous works \citep{shen2023towards, sendera2024diffusion}, which store only the terminal states $x$ in the buffer and sample trajectories $\tau$ using $P_B$.

The behavior policy can produce an adaptive distribution of trajectories $\tau$ with respect to the \student's learning state $\theta$ because the \teacher iteratively focuses on high-loss trajectories of the \student during training. This adaptivity is hypothesised to result in highly effective training for the \student.

\paragraph{Existence of a stationary point of training process.} The joint optimization for the parameters \(\phi\) and \(\theta\) over the support of $P_{\beta}(\tau)$ has a stationary point where the \student GFlowNet becomes an exact sampler and the \teacher GFlowNet samples proportional to $\epsilon R(x)^{\alpha}$; see \cref{thm:joint-optimum} in \cref{app:theory}.

\subsection{Mitigating non-stationarity with local search}
\label{sec:ls}

Joint optimization of the parameters $\phi, \theta$ with a non-stationary target $R_{\text{\teacher}}(x; \theta)$ poses significant challenges as the \teacher's reward is nonstationary, evolving as the \student learns. To address this issue, we use a local search method (\cref{algo:local_search}) that locally optimizes $R_{\text{\teacher}}(x; \theta)$ based on the \teacher's samples. We expect the dynamic nature of $R_{\text{\teacher}}(x; \theta)$ to be effectively managed by such search, as the \teacher's main role is to generalize to modes poorly modeled by the \student, while the search helps the \teacher track the \emph{local} changes in the \student's loss landscape.

Local search using a kernel defined by the policies -- first used by \citet{zhang2022generative,hu2023gfnem} and extensively studied by \citet{kim2024local} -- involves iteratively backtracking trajectories and reconstructing them to produce new samples. The method consists of the following steps:
\begin{enumerate}[left=0pt,nosep]
    \item \textbf{Backtracking}: Starting from a terminal state $x$, we backtrack to an intermediate state $s$ using the backward policy $P_B$, denoted as $(x \dashrightarrow \ldots \dashrightarrow s)$.
    \item \textbf{Reconstruction}: From the intermediate state $s$, we reconstruct a new terminal state $x'$ using the \teacher's forward policy $P_F$, represented as $(s \rightarrow \ldots \rightarrow x')$.
    \item \textbf{Accept or reject}: We accept the new sample $x'$ in place of $x$ with acceptance probability $A$. 
\end{enumerate}
The acceptance probability \( A \) can be determined using either a stochastic Metropolis-Hastings (MH) approach or a deterministic ascent criterion (see \citet{kim2024local} for details). This process is repeated iteratively to progressively improve the samples so that their reward better matches (with MH) or locally maximizes (with the deterministic ascent version) the target reward function. Ultimately, we use the enhanced sample $x'$ to train both the \teacher and the \student by generating trajectories \(\tau \sim P_B(\tau\mid x')\) and taking gradient steps on the losses \eqref{eq:student_loss} and \eqref{eq:teacher_loss}.

\renewcommand{\teachercolor}{black}
\renewcommand{\studentcolor}{black}

\section{Related work}
\textbf{GFlowNets.} GFlowNets were originally introduced by \citet{bengio2021flow} and extensively extended by \citet{bengio2023gflownet}. They aim to develop a sequential decision-making policy with a form of deep reinforcement learning that aims at sampling from the unnormalized density associated with a positive reward function. Aiming to improve credit assignment over long trajectories, \citet{malkin2022trajectory} introduced the trajectory balance (TB) objective mainly used in this paper. Building on this, \citet{madan2023learning} introduced a mixing scheme that combines losses associated with subtrajectories, trading off the lower variance of DB with the lower bias of TB, \citet{pan2023better} studied inductive biases that use partial reward information, and \citet{jang2023learning} extended this idea to learnable reward shaping schemes. \citet{shen2023towards} and \citet{jang2024pessimistic} studied auxiliary losses for better training of the backward policy. 

Orthogonal to those studies, other works focus on improving off-policy training. \citet{deleu2022bayesian,shen2023towards, vemgal2023empirical} studied the use of replay buffers in GFlowNets to enhance sample efficiency. \citet{kim2024local, kim2024genetic, kim2024ant, sendera2024diffusion} investigated local search methods to guide GFlowNets toward high-reward regions. \citet{kim2023learning} proposed to adjust the exploration-exploitation trade-off via amortized conditioning on reward temperature. Similarly, \citet{qgfn} introduced a method that mixes Deep Q-Networks \citep{mnih2013playing} (exploitation) with GFlowNets (exploration) to balance the exploration-exploitation trade-off. Our proposed method is also an off-policy training approach for GFlowNets. In contrast to the methods above, which use off-policy training to focus GFlowNets on high-reward regions, our method aims to address missing modes and underexplored regions. Note that the aforementioned reward-seeking off-policy methods are complementary to our approach; for example, local search with a \teacher is studied in \cref{sec:grid}.

While the above algorithmic work mostly concerns discrete space, \citet{lahlou2023theory} introduced the theory of GFlowNets in continuous space, leading to subsequent work on diffusion samplers~\citep{zhang2023diffusion,sendera2024diffusion}, posterior sampling under diffusion priors  \citep{venkatraman2024amortizing}, and applications to molecular dynamics \citep{seong2024collective}. Our proposed algorithms are effective in both discrete and continuous space (\cref{sec:diffusion}).

\textbf{Adaptive training distributions.} Adaptive training distributions are essential techniques in deep learning, ensuring that the training data evolves appropriately during model training. For instance, curriculum learning methods \citep{bengio2009curriculum} schedule the difficulty of training tasks by gradually increasing from easy to hard, thereby facilitating more efficient model training. These methods are also widely applied in reinforcement learning, \eg, \citet{narvekar2020curriculum}.

Active learning \citep{gal2017deep}, which is usually built for supervised learning, also falls into this category, where the training dataset actively changes to discover better strategies. Especially, uncertainty sampling-based methods \citep{sener2017active, yoo2019learning, kirsch2019batchbald, ash2019deep} that prioritize sampling data points having high predictive uncertainty of the classifier are relevant to our idea. The key difference is that our method is built for reinforcement learning, where we do not rely on the predictive uncertainty of a classifier but on the loss value defined by the compositional policy.

Our work is also relevant to few-shot experimental design \citep{wang2024robust}, as our Teacher plays the similar role as the entropy-regularized adversary that generates tasks.

\section{Experiments}

This section provides empirical validation of our method. Our primary goal is to demonstrate that our approach is beneficial over other off-policy training methods for trajectory balance (TB), a representative learning objective for amortized samplers. We also aim to show that our method can effectively be integrated with existing off-policy search methods, such as local search and replay buffer techniques. We benchmark our approach on three major tasks: two discrete tasks (\cref{sec:grid,sec:bioseq}) and one continuous task (\cref{sec:diffusion}). We also include three ablation studies and a pilot experiment on integrating local search in \cref{app:exp}. Additionally, we tested the versatility of our method by adopting another training objective, detailed balance~\citep[DB;][]{bengio2023gflownet} in~\cref{app:db}.

\subsection{Deceptive grid world}
\label{sec:grid}

\paragraph{Setting.} The deceptive grid world is a synthetic environment modified from the grid task introduced by \citet{bengio2021flow}. It consists of a $d$-dimensional hypercube of side length $H$, resulting in a search space of size $\mathcal{O}(H^d)$. The agent starts from the origin (position $\bm{0}$) and can only move in directions that increase a coordinate by 1 or terminate to receive the reward. The reward of each terminal state $x=(x_1,\ldots,x_d)$ is given by
\begin{equation}
R(x) = R_0 + R_1 \prod_{i=1}^{d} \mathbb{I}\left[ 
\left\vert \frac{x_i}{H - 1} - 0.5 \right\vert < 0.1 \right] + R_2 \prod_{i=1}^{d} \mathbb{I} \left[ 
\left\vert \frac{x_i}{H - 1} - 0.5 \right\vert \in (0.3, 0.4) \right].    
\end{equation}

The modes with the highest rewards ($R_2 + R_0$) are surrounded by walls with low rewards ($R_0$). In between them are deceptive regions offering relatively high rewards ($R_1 + R_0$), which can lure the agent into getting trapped. We set $R_0=10^{-5}$, $R_1=0.1$, and $R_2=2$. Following previous works, we use the number of modes discovered and the empirical $L_1$ distance from the target distribution as evaluation metrics. See~\cref{app:gridworld} for more details about the settings.

\begin{table*}[t!]
\vspace*{-1em}
    \caption{Evaluation results on deceptive grid worlds with dimension $d$ and grid length $H$. The number of modes discovered (\# modes) and empirical $L_1$ distance between target and sampled distributions are reported as mean $\pm$standard deviation over five runs. The $L_1$ distances are scaled appropriately for readability.}\vspace*{-0.75em}
    \label{tab:grid-results}
    \resizebox{1\linewidth}{!}{
    \begin{tabular}{@{}lcccccccc}
        \toprule
        Grid config. $\rightarrow$
        & \multicolumn{2}{c}{$d = 2$, $H = 128$}
        & \multicolumn{2}{c}{$d = 2$, $H = 256$}
        & \multicolumn{2}{c}{$d = 4$, $H = 16$}
        & \multicolumn{2}{c}{$d = 4$, $H = 32$}
        \\
        \cmidrule(lr){2-3}\cmidrule(lr){4-5}\cmidrule(lr){6-7}\cmidrule(lr){8-9}
        Algorithm $\downarrow$ Metric $\rightarrow$ 
        & \# modes ($\uparrow$) & $L_1 \times 10^{-5}$ ($\downarrow$) 
        & \# modes ($\uparrow$) & $L_1 \times 10^{-5}$ ($\downarrow$) 
        & \# modes ($\uparrow$) & $L_1 \times 10^{-5}$ ($\downarrow$) 
        & \# modes ($\uparrow$) & $L_1 \times 10^{-6}$ ($\downarrow$) \\
        \midrule
        TB \hfill(on-policy $\rightarrow$)
        & $645.4 \pm \text{\footnotesize 41.5}$ & $2.20 \pm \text{\footnotesize 0.58}$ 
        & $733.6 \pm \text{\footnotesize 25.1}$ & $1.74 \pm \text{\footnotesize 0.04}$ 
        & $6.6 \pm \text{\footnotesize 2.5}$ & $1.027 \pm \text{\footnotesize 0.012}$ 
        & $16.6 \pm \text{\footnotesize 4.8}$ & $1.635 \pm \text{\footnotesize 0.000}$ 
        \\
        \hspace{2pt} + $\epsilon$-expl.
        & $555.2 \pm \text{\footnotesize 66.2}$ & $3.59 \pm \text{\footnotesize 0.66}$ 
        & $672.6 \pm \text{\footnotesize 16.3}$ & $1.75 \pm \text{\footnotesize 0.02}$ 
        & $6.6 \pm \text{\footnotesize 4.2}$ & $1.030 \pm \text{\footnotesize 0.006}$ 
        & $24.2 \pm \text{\footnotesize 3.2}$ & $\mathbf{1.634} \pm \text{\footnotesize 0.000}$ 
        \\
        \hspace{2pt} + GAFN
        & $675.4 \pm \text{\footnotesize 0.5}$ & $\mathbf{1.66} \pm \text{\footnotesize 0.11}$ 
        & $1044.8 \pm \text{\footnotesize 276.4}$ & $1.55 \pm \text{\footnotesize 0.17}$ 
        & $11.8 \pm \text{\footnotesize 3.9}$ & $1.057 \pm \text{\footnotesize 0.022}$ 
        & $24.6 \pm \text{\footnotesize 7.4}$ & $1.664 \pm \text{\footnotesize 0.002}$ 
        \\
        \hspace{2pt} + PRT
        & $\mathbf{676.0} \pm \text{\footnotesize 0.0}$ & $4.54 \pm \text{\footnotesize 0.14}$ 
        & $2165.2 \pm \text{\footnotesize 64.5}$ & $1.55 \pm \text{\footnotesize 0.05}$ 
        & $38.8 \pm \text{\footnotesize 10.0}$ & $1.097 \pm \text{\footnotesize 0.006}$ 
        & $120.4 \pm \text{\footnotesize 19.1}$ & $1.648 \pm \text{\footnotesize 0.001}$ 
        \\
        \hspace{2pt} + PER
        & $669.0 \pm \text{\footnotesize 3.8}$ & $4.88 \pm \text{\footnotesize 0.44}$ 
        & $2055.2 \pm \text{\footnotesize 56.3}$ & $1.71 \pm \text{\footnotesize 0.08}$ 
        & $16.0 \pm \text{\footnotesize 2.8}$ & $1.129 \pm \text{\footnotesize 0.034}$ 
        & $46.6 \pm \text{\footnotesize 14.6}$ & $1.639 \pm \text{\footnotesize 0.001}$ 
        \\
        \midrule
        \hspace{2pt} + \teacher (\emph{ours})
        & $\mathbf{676.0} \pm \text{\footnotesize 0.0}$ & $2.13 \pm \text{\footnotesize 0.18}$ 
        & $\mathbf{2452.6} \pm \text{\footnotesize 21.7}$ & $\mathbf{0.94} \pm \text{\footnotesize 0.03}$ 
        & $\mathbf{51.4} \pm \text{\footnotesize 4.0}$ & $\mathbf{1.019} \pm \text{\footnotesize 0.016}$ 
        & $\mathbf{246.6} \pm \text{\footnotesize 14.7}$ & $\mathbf{1.634} \pm \text{\footnotesize 0.001}$ 
        \\
        \bottomrule
    \end{tabular}
    }
    \vspace{-5pt}
\end{table*}

\begin{figure}[t!]
\centering
\captionsetup[subfigure]{labelformat=empty}
\begin{minipage}[t]{0.98\textwidth}
\begin{subfigure}[b]{0.14\textwidth}
    \centering
\includegraphics[width=\textwidth]{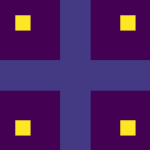}
    \caption{Target distr.}
\end{subfigure}\hfill
\begin{subfigure}[b]{0.14\textwidth}
    \centering
\includegraphics[width=\textwidth]{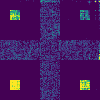}
    \caption{\teacher (\textit{ours})}
\end{subfigure}\hfill
\begin{subfigure}[b]{0.14\textwidth}
    \centering
\includegraphics[width=\textwidth]{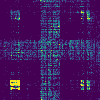}
    \caption{PER}
\end{subfigure}\hfill
\begin{subfigure}[b]{0.14\textwidth}
    \centering
\includegraphics[width=\textwidth]{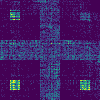}
    \caption{PRT}
\end{subfigure}\hfill
\begin{subfigure}[b]{0.14\textwidth}
    \centering
\includegraphics[width=\textwidth]{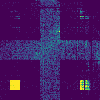}
    \caption{GAFN}
\end{subfigure}\hfill
\begin{subfigure}[b]{0.14\textwidth}
    \centering
\includegraphics[width=\textwidth]{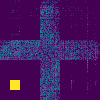}
    \caption{$\epsilon$-expl.}
\end{subfigure}\hfill
\begin{subfigure}[b]{0.14\textwidth}
    \centering
\includegraphics[width=\textwidth]{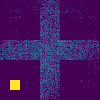}
    \caption{On policy}
\end{subfigure}
\end{minipage}
\vspace{-5pt}
\caption{Empirical distribution plots of $10^5$ test samples from policies on the $(d=2, H=256)$ grid.}
\vspace{-15pt}
\label{fig:grid_baselines}
\end{figure}

\paragraph{Baselines.} We compare our method with on-policy TB and TB with off-policy exploration methods, such as $\epsilon$-exploration \citep{bengio2021flow}, and GAFN~\citep{pan2023generative}, along with a baseline using a replay buffer prioritizing rewards (PRT) or \teacher rewards inspired by Prioritized Experience Replay~\citep[PER;][]{schaul2015prioritized}. PER can be seen as a non-amortized version of our method.

\paragraph{Results.} \cref{tab:grid-results} summarizes the results. TB with \teacher consistently outperforms baselines. The significant margin in the number of modes discovered in the larger-scale setting indicates that the \teacher effectively guides the \student to visit undiscovered modes. Please refer to~\cref{app:grid-results} for comprehensive results.

\paragraph{Effects of local search.} We assess the local search (LS) effect (\cref{sec:ls}) on a $(d=4, H=32)$ grid. The \teacher's local search uses the \teacher's reward $R_{\text{\teacher}}$ for acceptance, compared to two baselines -- on-policy TB and TB with PER -- which use the \student's reward $R$. Furthermore, local search accelerates mode discovery, likely by reducing the sensitivity to nonstationarity in \teacher learning (See \cref{app:teacher-local}).

\begin{table*}[t!]
\vspace*{-1em}
    \caption{\looseness=-1 Evaluation on multimodal continuous sampling tasks. Log-partition function estimation errors (evidence lower bound (ELBO), importance sampled ELBO (ELBO-IS), evidence upper bound (EUBO)) and 2-Wasserstein distances ($W_2^2$) to target samples are reported as mean{\scriptsize$\pm$std} over five runs. We compare MCMC methods (SMC, GGNS), a differentiable simulation method (PIS), and GFlowNets trained using the TB objective with various off-policy strategies, such as loss prioritized replay (PER) and reward prioritized replay (PRT). }\vspace*{-0.75em}
    \label{tab:diffusion_sampling}
    \resizebox{1\linewidth}{!}{
    \begin{tabular}{@{}lcccccccc}
        \toprule
        Energy $\rightarrow$
        & \multicolumn{4}{c}{25GMM ($d=2$, $\log Z = 0$, $W_{2}^{2} = 0.29$)} 
        & \multicolumn{4}{c}{Manywell ($d=32$, $\log Z = 164.6956753$, $W_{2}^{2} = 5.36$)} 
        \\
        \cmidrule(lr){2-5}\cmidrule(lr){6-9}
        Algorithm $\downarrow$ Metric $\rightarrow$ & ELBO ($\uparrow$) &  ELBO-IS ($\uparrow$) & EUBO ($\downarrow$)  & $W_2^2$ ($\downarrow$)  & ELBO ($\uparrow$) & ELBO-IS ($\uparrow$) & EUBO  ($\downarrow$) &$W_2^2$ ($\downarrow$) \\
        \midrule
        SMC
        & \multicolumn{3}{c}{- 0.569{\scriptsize$\pm$0.010}} & 0.86{\scriptsize$\pm$0.10}
        & \multicolumn{3}{c}{ 149.706{\scriptsize$\pm$1.078}} & 8.28{\scriptsize$\pm$0.32}
        \\
        GGNS
        & \multicolumn{3}{c}{- 0.016{\scriptsize$\pm$0.042}} & 1.19{\scriptsize$\pm$0.17}
        & \multicolumn{3}{c}{164.404{\scriptsize$\pm$0.454}} & 6.51{\scriptsize$\pm$0.32}
        \\
        \midrule
        PIS
        & -1.192{\scriptsize$\pm$0.177}
        & -1.192{\scriptsize$\pm$0.176}
        & 26.733{\scriptsize$\pm$5.107}
        & 4.95{\scriptsize$\pm$0.73}
        & 160.516{\scriptsize$\pm$1.025}
        & 162.017{\scriptsize$\pm$0.980}
        & 581.464{\scriptsize$\pm$240.916}
        & 6.15{\scriptsize$\pm$0.01}
        \\
        \midrule
        TB \hfill(on-policy $\rightarrow$)
        & -1.105{\scriptsize$\pm$0.007}
        & -1.008{\scriptsize$\pm$0.009}
        & 18.321{\scriptsize$\pm$0.932}
        & 4.64{\scriptsize$\pm$0.01}
        
        & 161.048{\scriptsize$\pm$0.036}
        & 162.019{\scriptsize$\pm$0.645}
        & 427.850{\scriptsize$\pm$80.082}
        & 6.15{\scriptsize$\pm$0.01}
        \\
        \hspace{2pt} + $\epsilon$-expl.
        & -1.056{\scriptsize$\pm$0.117}
        & -0.956{\scriptsize$\pm$0.118}
        & 15.135{\scriptsize$\pm$1.861}
        & 4.58{\scriptsize$\pm$0.08}
        & 161.064{\scriptsize$\pm$0.036}
        & 162.008{\scriptsize$\pm$0.062}
        & 355.787{\scriptsize$\pm$4.761}
        & 6.15{\scriptsize$\pm$4.761}
        \\
        \hspace{2pt} + PRT
        &  -0.750{\scriptsize$\pm$0.138}
        &  -0.640{\scriptsize$\pm$0.144}
        & 12.103{\scriptsize$\pm$2.273}
        & 4.28{\scriptsize$\pm$0.59}
        & 161.071{\scriptsize$\pm$0.085}
        & 161.998{\scriptsize$\pm$0.111}
        & 379.623{\scriptsize$\pm$77.409}
        & 6.14{\scriptsize$\pm$0.03}
        \\
         \hspace{2pt} + PER
        & -0.282{\scriptsize$\pm$0.158}
        & -0.147{\scriptsize$\pm$0.162}
        & 1.833{\scriptsize$\pm$2.366}
        & 1.87{\scriptsize$\pm$1.23}
        
        & 161.537{\scriptsize$\pm$0.186}
        & 162.582{\scriptsize$\pm$0.268}
        & 210.440{\scriptsize$\pm$6.888}
        & 5.91{\scriptsize$\pm$0.08}
        \\
        \hspace{1pt} + \teacher (\emph{ours})
        & \textbf{-0.137}{\scriptsize$\pm$0.004} 
        & -\textbf{0.005}{\scriptsize$\pm$0.007}
        & \textbf{0.115}{\scriptsize$\pm$0.009}
        & \textbf{0.86}{\scriptsize$\pm$0.07}
        
        & \textbf{163.484}{\scriptsize$\pm$0.049} 
        & \textbf{164.676}{\scriptsize$\pm$0.048}
        & \textbf{165.800}{\scriptsize$\pm$0.045}
        & \textbf{5.46}{\scriptsize$\pm$0.01}
        \\   
        \bottomrule
    \end{tabular}
    }
    \vspace{-5pt}
\end{table*}

\begin{figure}[t!]
\centering
\captionsetup[subfigure]{labelformat=empty}
\begin{minipage}[t]{0.9\textwidth}
\begin{subfigure}[b]{0.16\textwidth}
    \centering
\includegraphics[width=\textwidth]{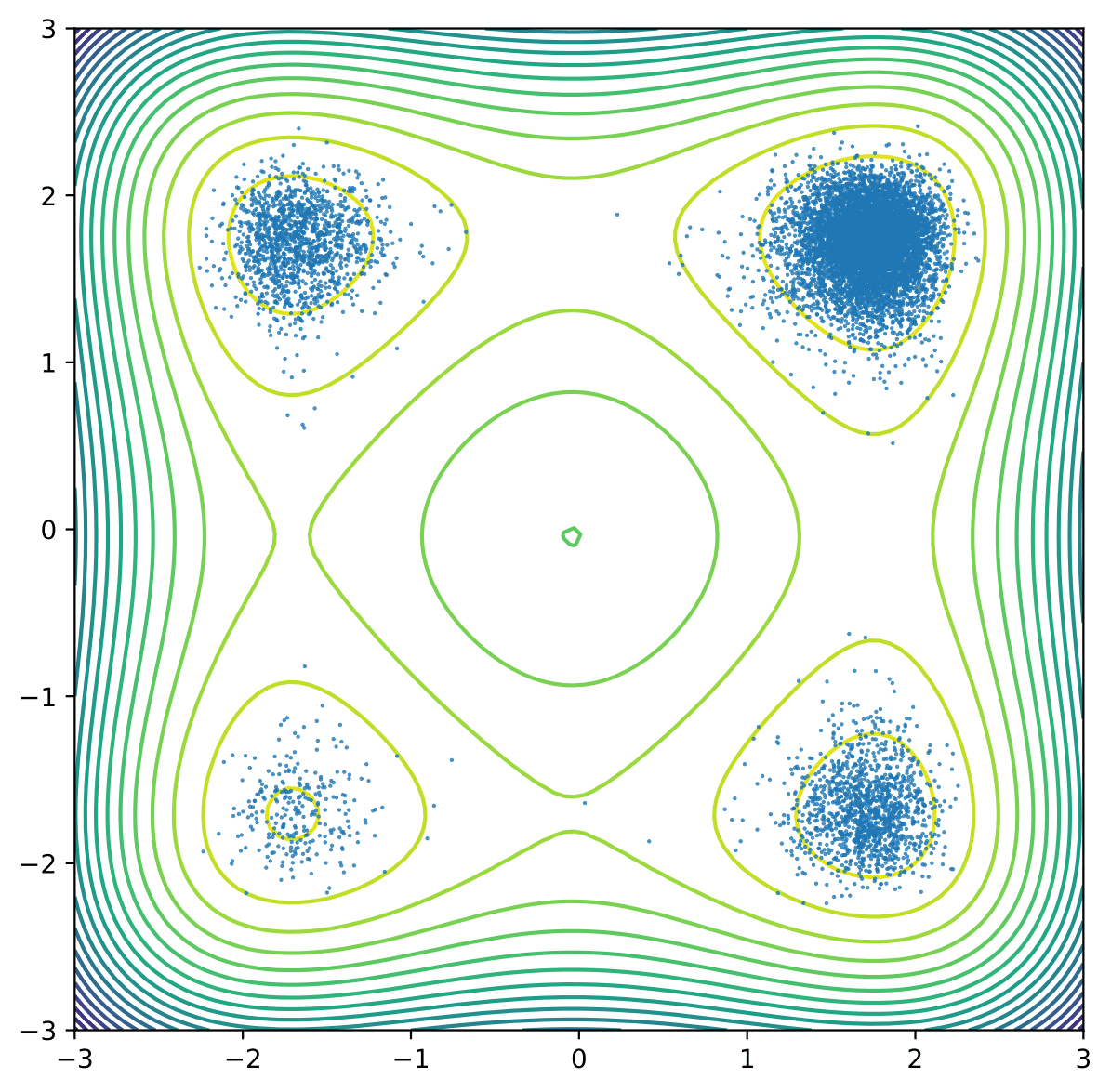}
    \caption{Target distr.}
\end{subfigure}\hfill
\begin{subfigure}[b]{0.16\textwidth}
    \centering
\includegraphics[width=\textwidth]{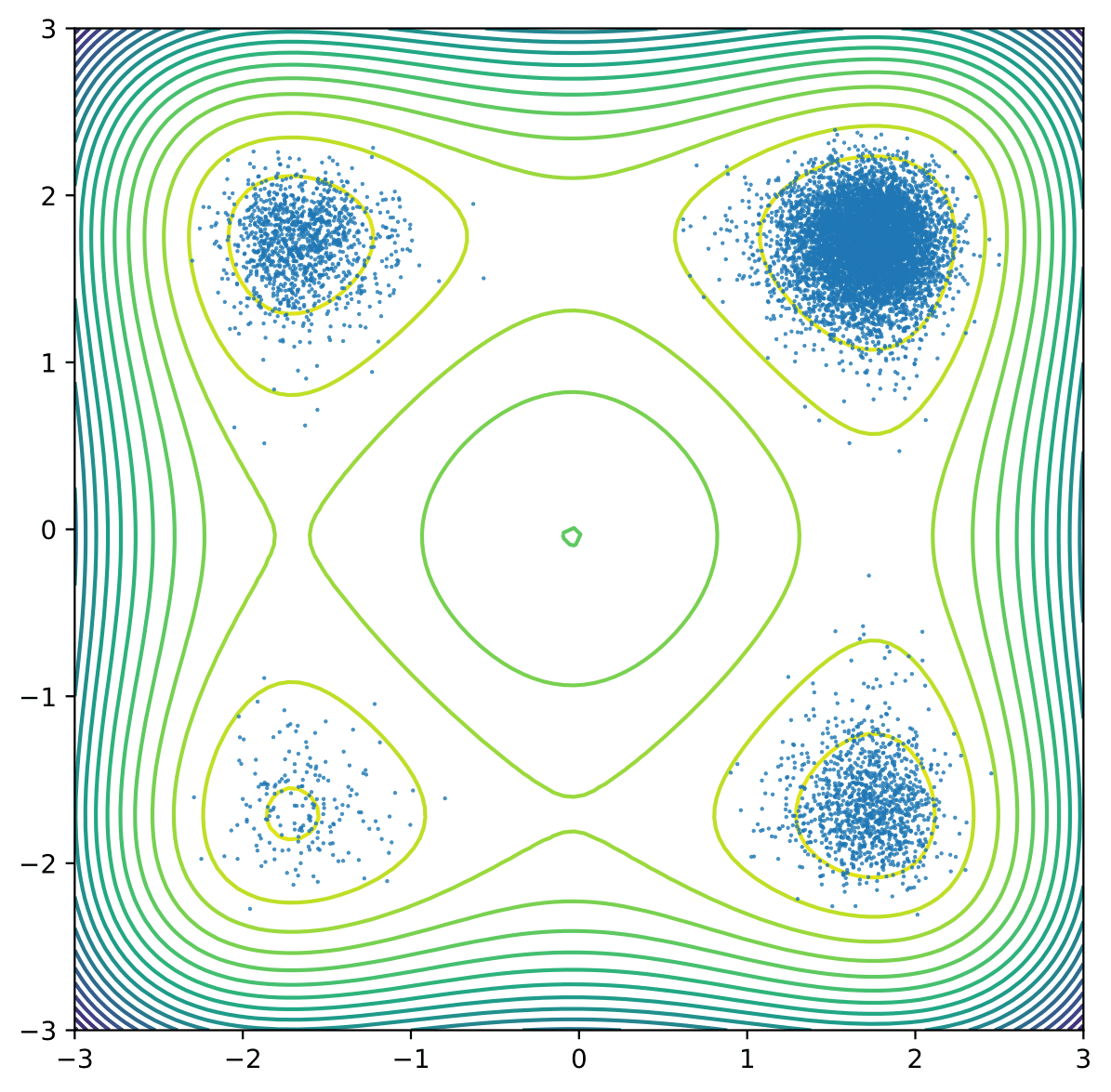}
    \caption{\teacher (\textit{ours})}
\end{subfigure}\hfill
\begin{subfigure}[b]{0.16\textwidth}
    \centering
\includegraphics[width=\textwidth]{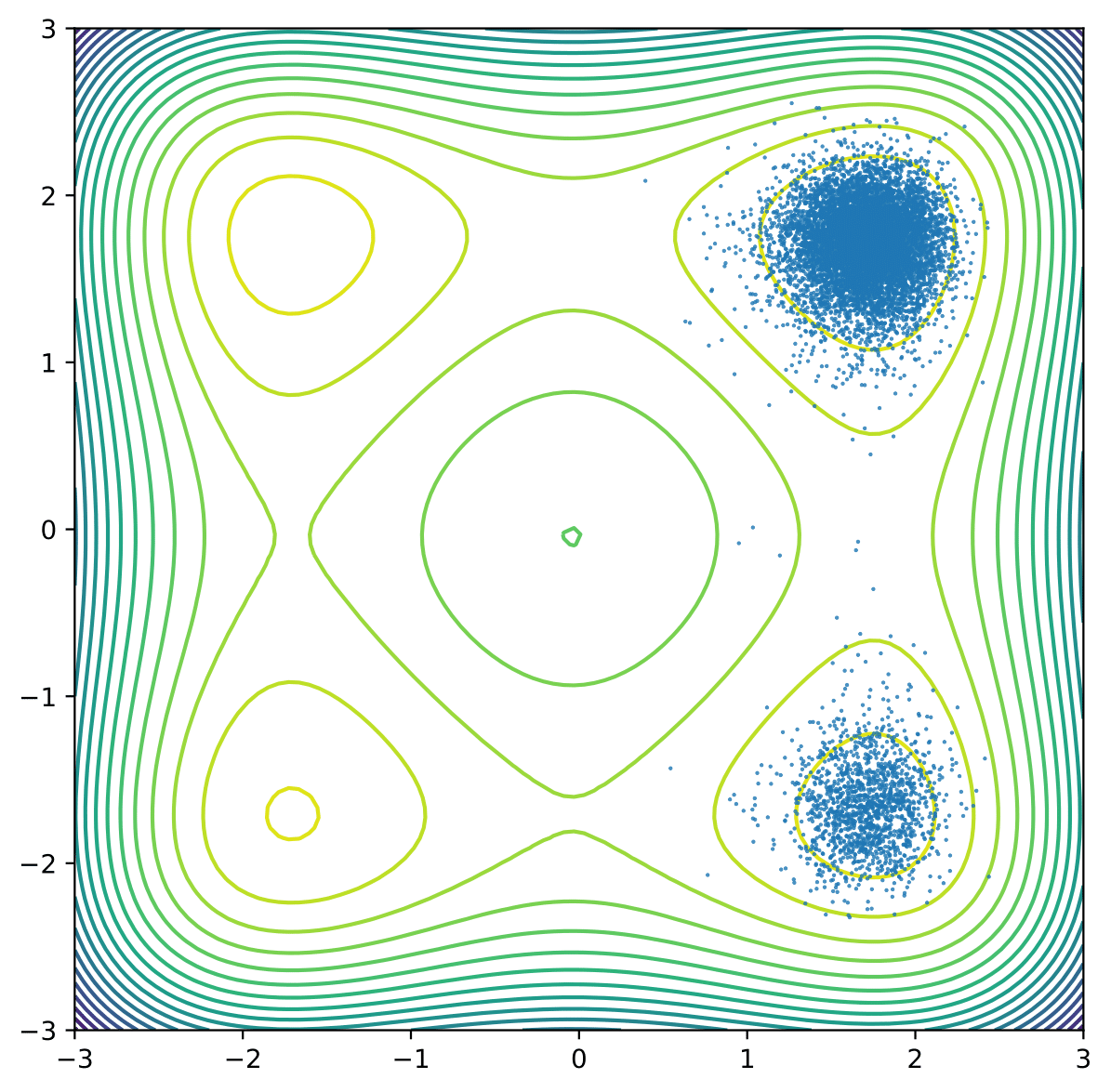}
    \caption{PER}
\end{subfigure}\hfill
\begin{subfigure}[b]{0.16\textwidth}
    \centering
\includegraphics[width=\textwidth]{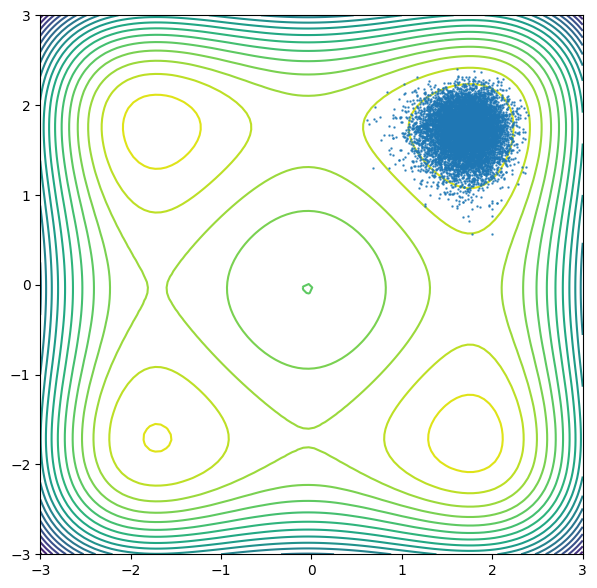}
    \caption{PRT}
\end{subfigure}\hfill
\begin{subfigure}[b]{0.16\textwidth}
    \centering
\includegraphics[width=\textwidth]{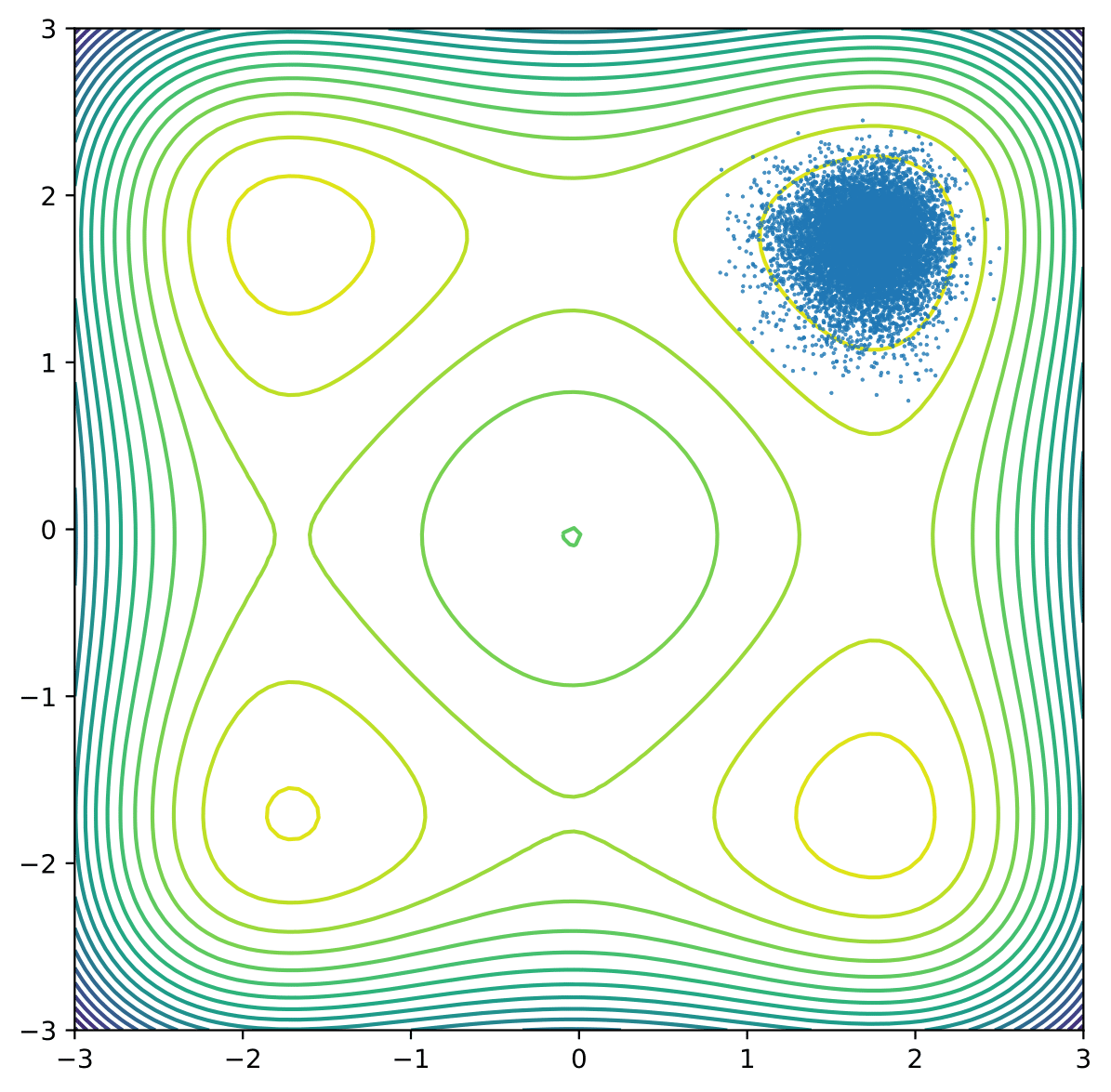}
    \caption{$\epsilon$-expl.}
\end{subfigure}\hfill
\begin{subfigure}[b]{0.16\textwidth}
    \centering
\includegraphics[width=\textwidth]{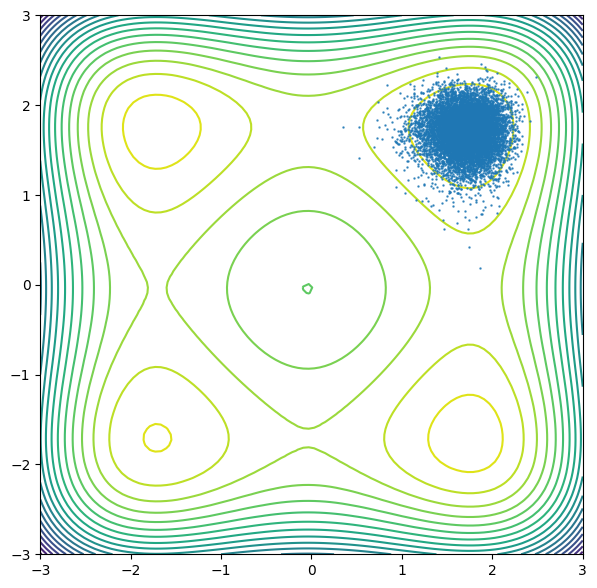}
    \caption{On policy}
\end{subfigure}
\end{minipage}
\vspace{-5pt}
\caption{Samples from trained models on the Manywell task (projected onto the first two dimensions).}
\label{fig:Manywell_plots}
\end{figure}

\begin{figure}[t!]
\vspace{-10pt}
    \captionsetup[subfigure]{labelformat=empty}
    \centering
    \begin{minipage}{0.49\textwidth}
        \centering
        \begin{subfigure}[b]{0.32\textwidth}
            \centering
            \includegraphics[width=\textwidth]{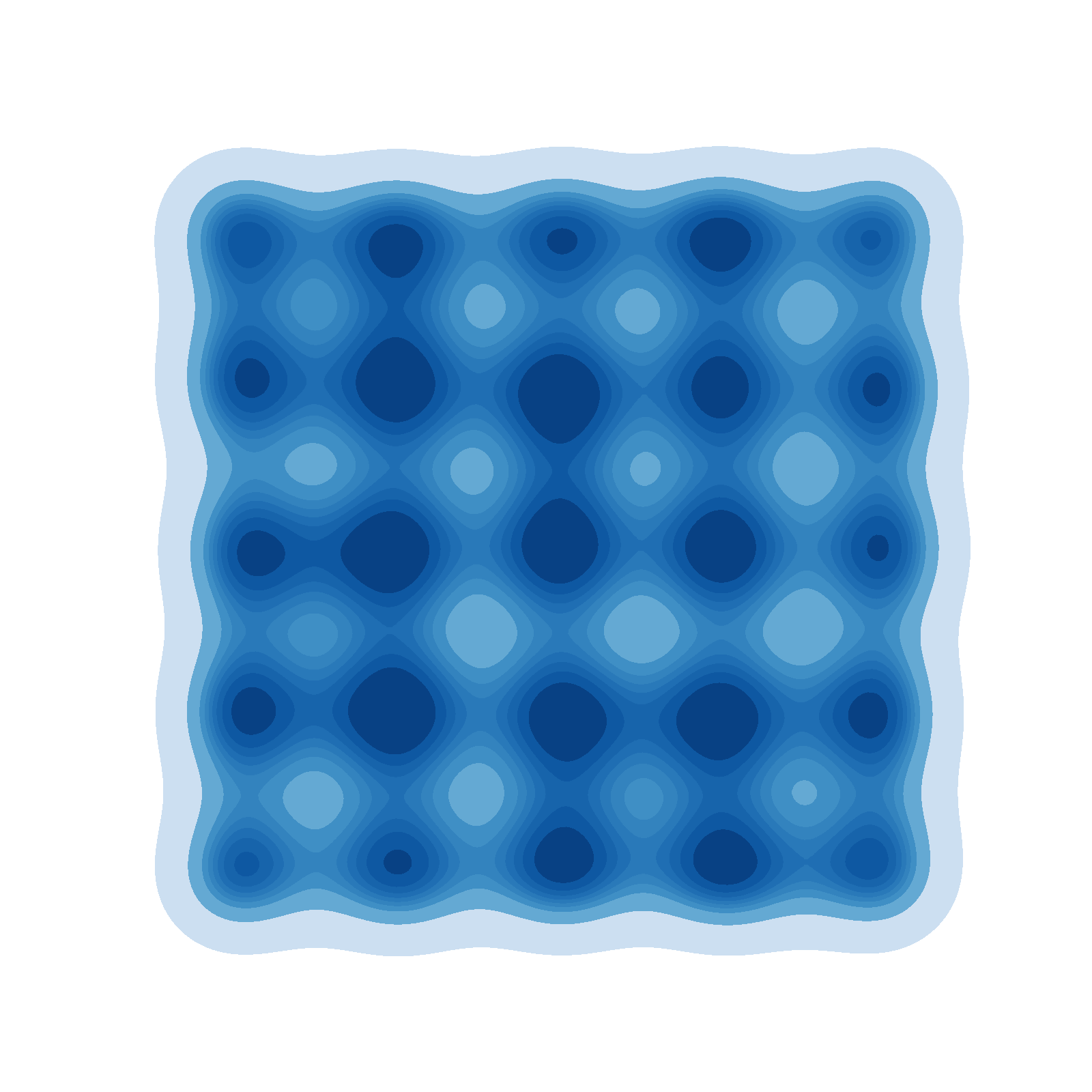}
            \caption{Goal distr.}
        \end{subfigure}\hfill
        \begin{subfigure}[b]{0.32\textwidth}
            \centering
            \includegraphics[width=\textwidth]{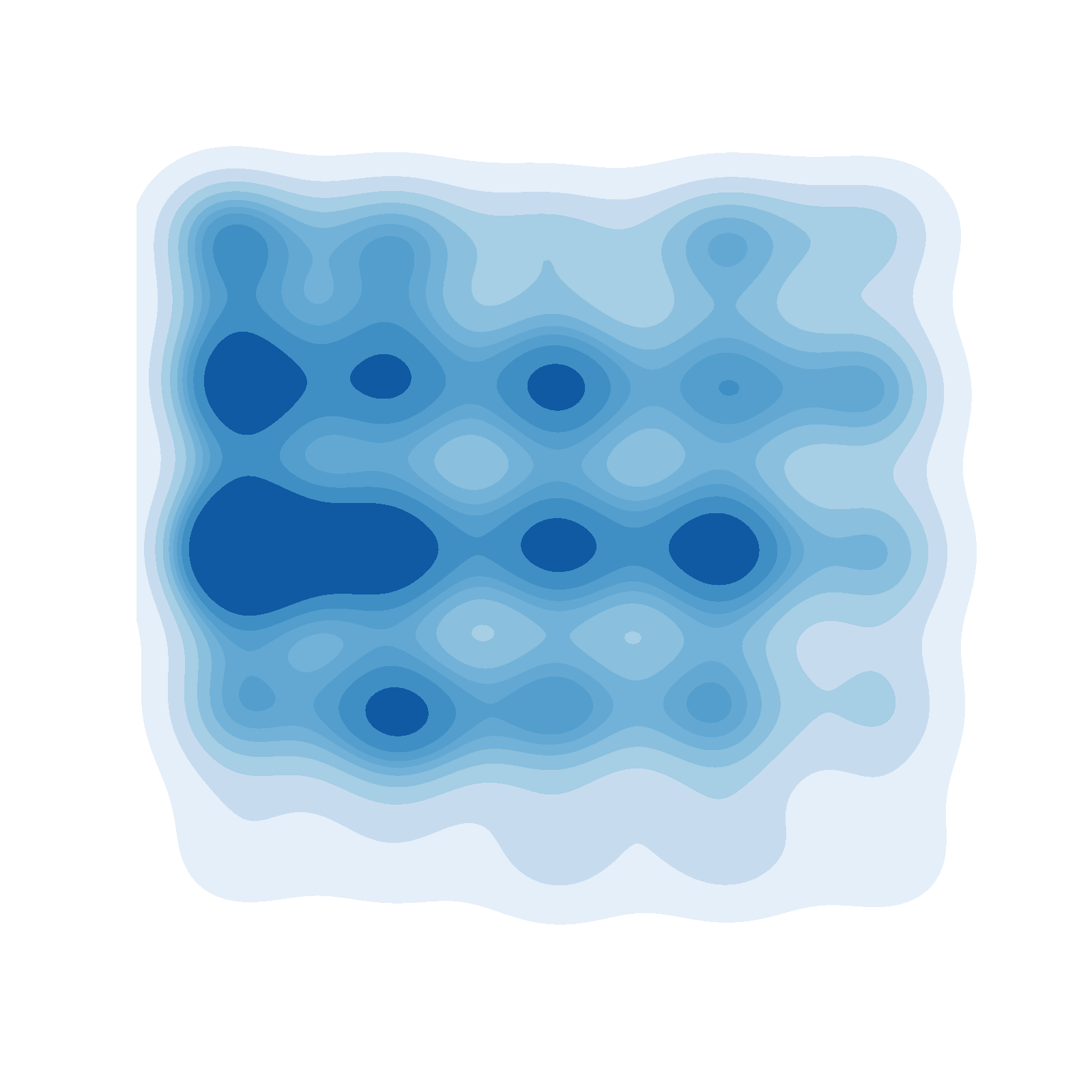}
            \caption{\student (1/5)}
        \end{subfigure}\hfill
        \begin{subfigure}[b]{0.32\textwidth}
            \centering
            \includegraphics[width=\textwidth]{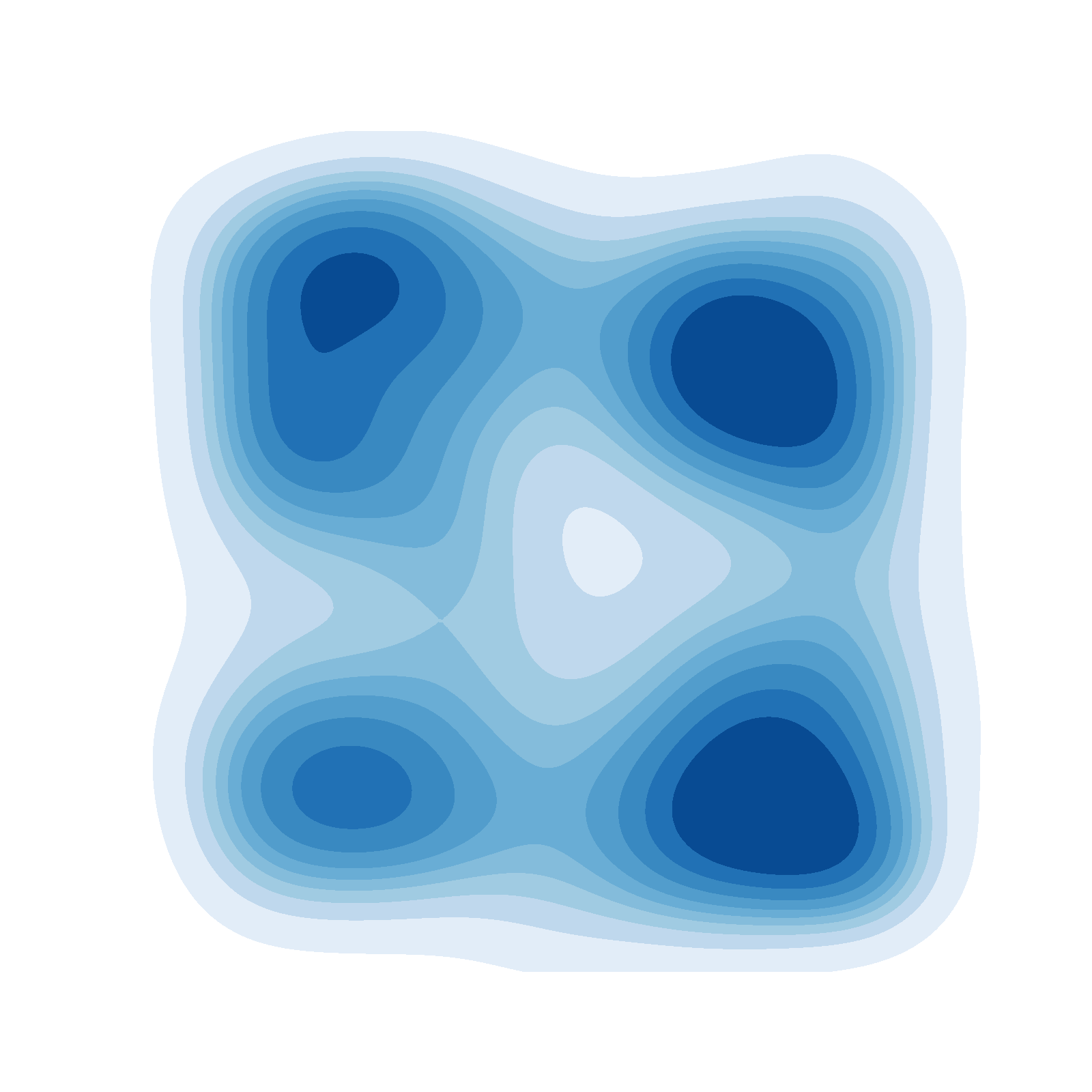}
            \caption{\teacher (1/5)}
        \end{subfigure}
    \end{minipage}
    \hfill
    \begin{minipage}{0.49\textwidth}
        \centering
        \begin{subfigure}[b]{0.32\textwidth}
            \centering
            \includegraphics[width=\textwidth]{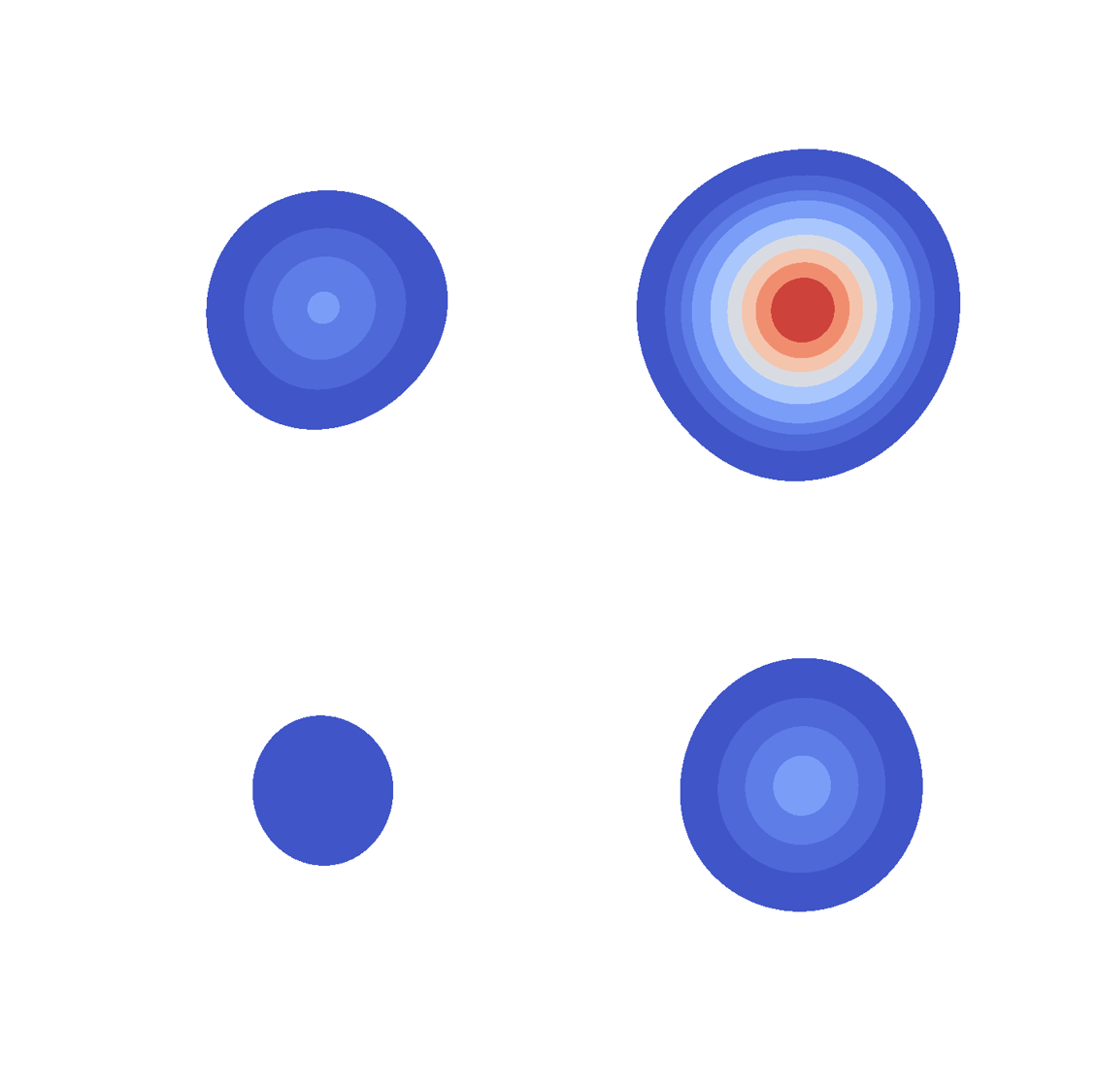}
            \caption{Goal distr.}
        \end{subfigure}\hfill
        \begin{subfigure}[b]{0.32\textwidth}
            \centering
            \includegraphics[width=\textwidth]{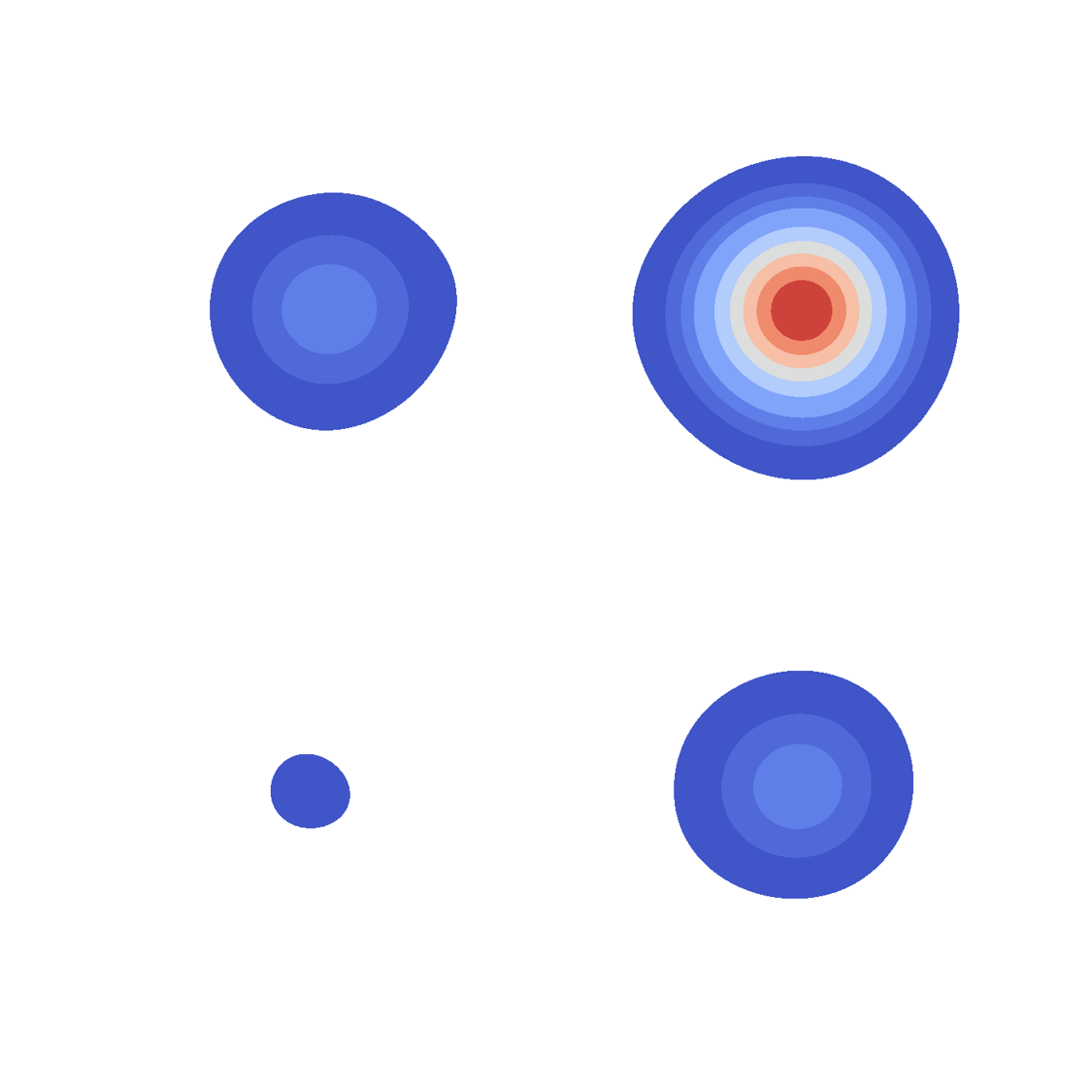}
            \caption{\student (2/5)}
        \end{subfigure}\hfill
        \begin{subfigure}[b]{0.32\textwidth}
            \centering
            \includegraphics[width=\textwidth]{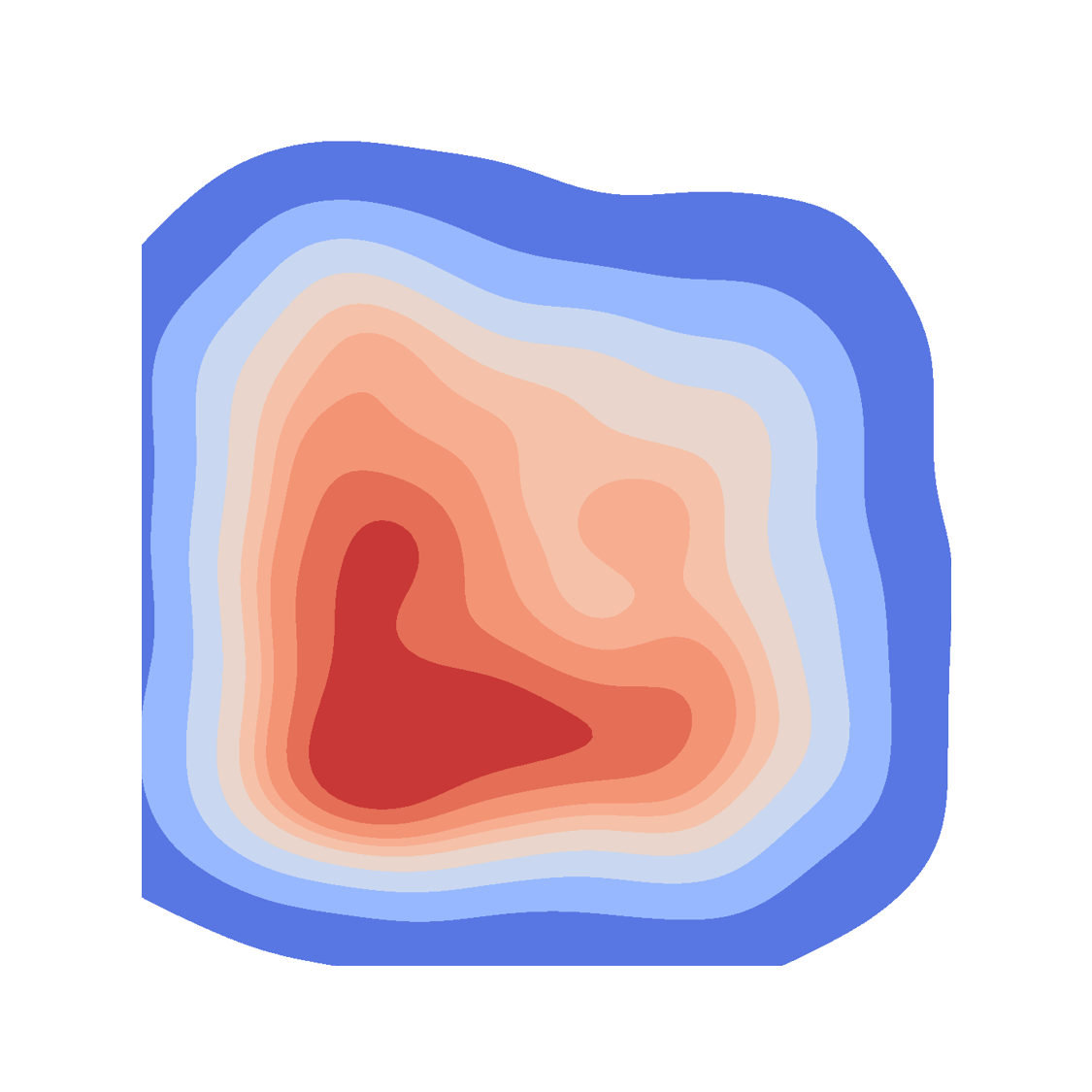}
            \caption{\teacher (2/5)}
        \end{subfigure}
    \end{minipage}
    \vspace{-5pt}
    \caption{KDE plots for 25GMM (left three) and Manywell (right three) at intermediate states of training. The \student (\textit{ratio}) indicates the fraction of total training steps completed. The \teacher adaptively adjusts the training distribution in response to the modes that the \student is missing.}
    \label{fig:diffusion-teacher-student}
    \vspace{-10pt}
\end{figure}

\subsection{Diffusion sampling}
\label{sec:diffusion}

\paragraph{Setting.} The task of learning a \textit{diffusion sampler} is to invert a diffusion process in order to sample the target density function.
These experiments largely follow the setup of \citet{sendera2024diffusion}.
Here, we aim to model a distribution over trajectories $\tau = (\boldsymbol{0}=x_0 \rightarrow x_{\Delta t} \rightarrow x_{2 \Delta t} \rightarrow \ldots \rightarrow x_1)$ (with $\Delta t=\frac1T$; for us, $T=100$), so that $x_1$ is distributed according to an unnormalized density function $p(x_1) \propto R(x_1) = e^{-\mathcal{E}(x_1)}$. 
The transitions are parametrized as to the Euler-Maruyama discretization of a neural stochastic differential equation \citep{tzen2019neural}: the sampler begins at the initial state $(\boldsymbol{0}, t=0)$, and the transition from $x_t$ to $x_{t+\Delta t}$ is sampled from an appropriately scaled Gaussian with mean given by a trained model taking $x_t$ and $t$ as input. The detailed setting and policy parametrization are described in \cref{app:diffusion_sampler}. 

The main challenge in this task to capture the multimodality of $R(x_1)$ without having access to samples from target distribution during training, which is difficult when there are many well-separated modes. It is not possible to apply the forward KL (\ie, log-likelihood variational bound) objectives typically used for diffusion models \citep{song2021maximum}, since target distribution datapoints are not available; thus \textit{exploration} is crucial to mode discovery.

In this work, we benchmark diffusion samplers on two established tasks in the diffusion sampling literature: a 2-dimensional Gaussian mixture with 25 modes (25GMM) and a 32-dimensional Manywell distribution. When performing off-policy exploration, we assume a black-box property for the energy function $\mathcal{E}$, where $\nabla \mathcal{E}$ is not accessible, similar to settings in reinforcement learning. This assumption is meant to mimic a common setting in scientific discovery, where we may have black-box energies requiring expensive simulations to compute; we note that for these tasks, effective methods that use the energy gradient exist (see \citet{sendera2024diffusion}).

\paragraph{Baselines.} We consider two representative MCMC baselines: Sequential Monte Carlo (SMC) and the state-of-the-art GGNS \citep{lemos2023ggns}. We also include the simulation-based continuous stochastic control method Path Integral Sampler \citep[PIS;][]{zhang2021path}. The major baselines are off-policy training methods based on trajectory balance (TB). We include on-policy TB in continuous space and the $\epsilon$-exploration method introduced by \citet{malkin2023gflownets,lahlou2023theory}, which adds additional Gaussian noise at least policy sampling step during training, as main baselines. Additionally, we compare with two replay buffer methods combined with TB: one that prioritizes reward (PRT) as studied by \citet{sendera2024diffusion}, and another that prioritizes loss (PER) \citep{schaul2015prioritized}. The gradient-based local search introduced by \citet{sendera2024diffusion}, while effective, is excluded as it requires access to $\nabla \mathcal{E}(x)$ for the search; but we also study potential integration of the \teacher with these techniques in \cref{app:teacher-local}.

\paragraph{Results.} As shown in \cref{tab:diffusion_sampling}, on-policy TB and PIS yield similar performance, consistent with the fact that they have identical expected gradients \citep{malkin2023gflownets,lahlou2023theory}. This suggests that TB could benefit from additional off-policy methods for improvement. Indeed, the $\epsilon$-exploration techniques improve slightly over on-policy methods. PRT provides larger benefits on 25GMM but shows no meaningful benefits on Manywell. PER offers significant improvements over on-policy methods compared to others. Our method, \teacher, achieves the highest results across all metrics, including the Evidence Lower Bound (ELBO), Importance-Sampled ELBO (ELBO-IS), Evidence Upper Bound (EUBO) (see \cref{app:setting} for detailed definitions of those metrics). Especially in EUBO, a metric suggested by \citet{blessing2024beyond} to measure mode coverage, baseline methods face significant challenges. This indicates that existing methods struggle to perform proper \textit{exploration} across modes, as confirmed by the sample plots in \Cref{fig:Manywell_plots}. Our method offers clear advantages on EUBO metrics.

In \cref{fig:diffusion-teacher-student}, we depict the training dynamics of the \teacher and \student by plotting kernel density estimates of their samples midway through training. This figure illustrates the mechanism by which the \teacher promotes mode discovery by the \student. As the \student struggles to find some modes, especially those with lower rewards than others, the \teacher puts high probability on them, encouraging the \student to reduce its loss in those regions of the space.

\begin{figure}[t]
\vspace*{-1.0em}
    \centering
\begin{minipage}[t]{0.85\textwidth}
\includegraphics[width=1.0\linewidth]{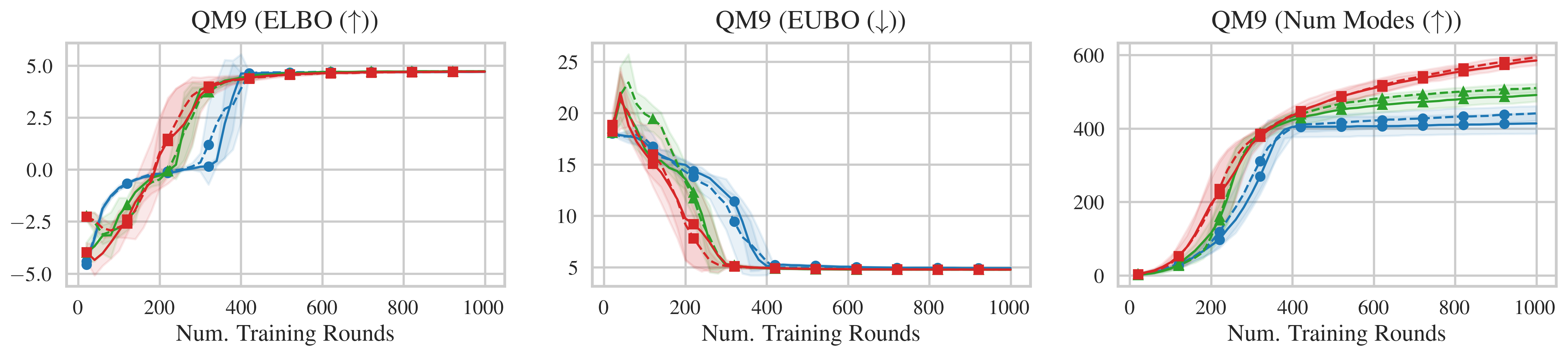}
\includegraphics[width=1.0\linewidth]{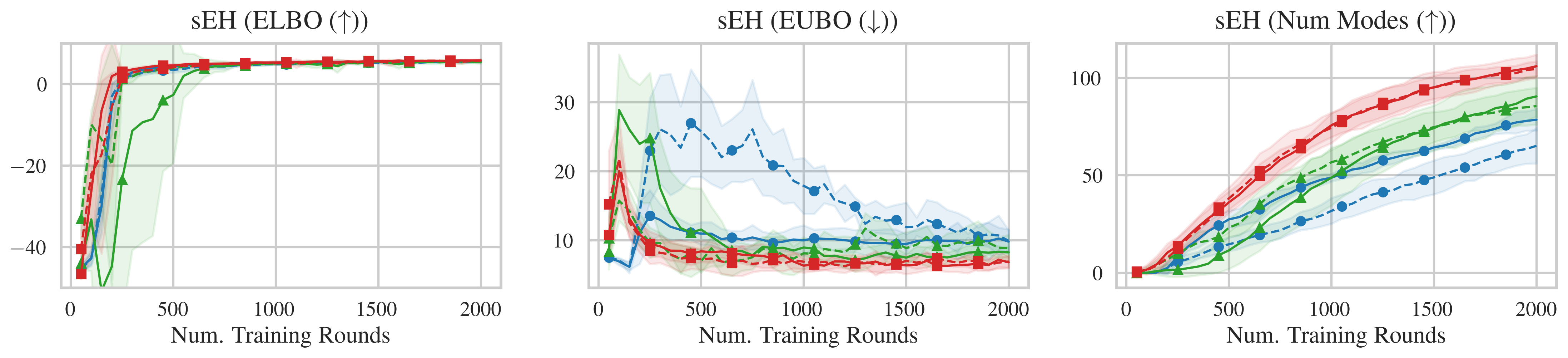}
\includegraphics[width=1.0\linewidth]{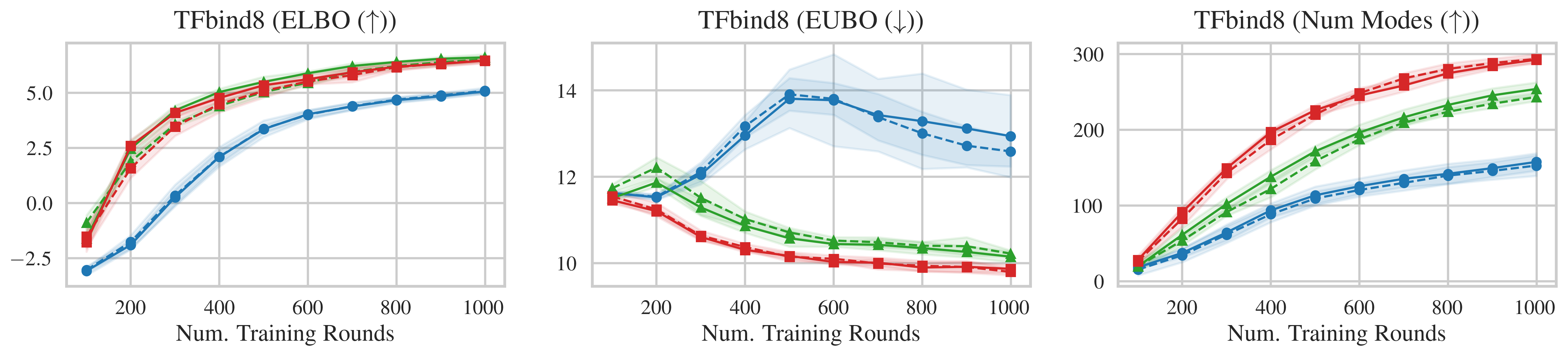}
\includegraphics[width=1.0\linewidth]{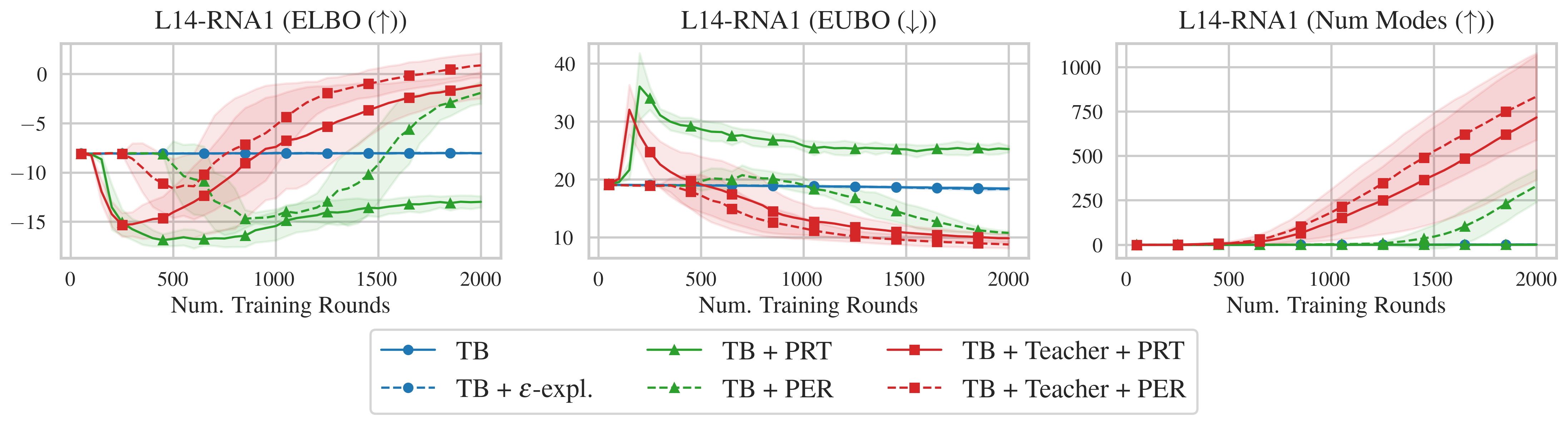}
\vspace*{-1em}
    \caption{Training graphs for molecule design (QM9, sEH) and biological sequence design (TFbind8, L14-RNA1) tasks. Mean and standard deviation over five  runs are shown.}
    \label{fig:tfbind8_l14_rna1_metrics_over_rounds}
\end{minipage}
\vspace*{-2em}
\end{figure}

\subsection{Biological and chemical discovery}
\label{sec:bioseq}

\paragraph{Setting.} GFlowNets have been used to generate biological and chemical structures by sequentially adding predefined substructures. In molecules, the actions add atoms or fragments; in biological sequences, nucleotides or amino acids. We aim to match a target probability distribution over structures given by some proxy reward model. Following \citet{shen2023towards}, we benchmark the number of discovered modes, as well as probabilistic metrics like ELBO and EUBO. We study four biological and chemical discovery problems, following \citet{shen2023towards,kim2024local}:

\begin{itemize}[left=0pt,nosep]
\item
\textbf{QM9.} The objects being sampled are small molecular graphs. Molecules are generated using 12 building blocks with 2 stems, and each molecule contains 5 blocks. The reward function is a HOMO-LUMO gap on the target transcription factor, which is obtained via a pre-trained MXMNet proxy from \citet{zhang2020molecular}. We use a reward exponent of 5. We define modes as the top $0.5\%$ quantile of $R(x)$.
\item
\textbf{sEH.} The generated objects are molecular graphs. Molecules are built using 18 blocks with 2 stems and 6 blocks per molecule. The reward function is a binding affinity to soluble epoxide hydrolase (sEH), which is provided by the pre-trained proxy model from \citet{bengio2021flow}. We use a reward exponent of 6. We define modes as the top $0.01\%$ quantile of $R(x)$, with filtering to exclude candidates too similar based on Tanimoto similarity, following~\citet{kim2023learning}.
\item
\textbf{TFbind8.} The generated objects are DNA sequences with 8 nucleotides. The reward function is a binding affinity to a human transcription factor \citep{barrera2016survey}, which is obtained via a pre-trained proxy model provided by \citet{trabucco2022design}. We use a reward exponent of 3. We use a pre-defined set of modes provided by \citet{shen2023towards}.
\item
\textbf{L14-RNA1.} The generated objects are RNA sequences of length 14. The reward function is a binding affinity to a human transcription factor, which is obtained via a pre-trained proxy model from \citet{sinai2020adalead}. We use a reward exponent of 8. We define modes as the top $0.01\%$ quantile of $R(x)$, with diversity filtering whose threshold is 1 unit of Levenstein distance, also following~\citet{kim2023learning}.
\end{itemize}

Detailed training and hyperparameter settings for each task can be found in \Cref{app:biochemical}.

\textbf{Baselines.} We compare our method with on-policy TB, $\epsilon$-exploration, and the PRT replay buffer method designed for biochemical tasks \citep{shen2023towards}. Additionally, we evaluate it against a loss-prioritized replay buffer (PER) \citep{schaul2015prioritized}. Since the local search method \citep{kim2024local} targets reward exploitation with a different purpose to ours, a separate analysis of its complementarity to our method is presented in \cref{app:teacher-local}.

\textbf{Results.} As shown in \cref{fig:tfbind8_l14_rna1_metrics_over_rounds}, the \teacher method improves mode discovery when combined with both PER and PRT buffers, outperforming on-policy methods on every task. Similar trends are observed across other metrics, with faster convergence under the \teacher method. For TFbind8, our method's dominance is particularly evident for mode coverage metrics like EUBO and the number of modes, although ELBO remains comparable to using PER or PRT. In the largest task, L14-RNA1, the \teacher method surpasses PER and PRT in ELBO, EUBO, and the number of modes.

We draw special attention to the comparison between PER and PRT. While the differences are minimal in the first three tasks (QM9, TFbind8, sEH), in the larger-scale tasks, loss-prioritizing with PER shows a clear advantage. This suggests that loss information is more important for exploration in large-scale tasks, where the \teacher method with PER achieves the best overall performance.

\section{Discussion}
\looseness=-1
We have introduced a \textbf{\teacher} that adaptively generates states for training a \textbf{\student} amortized sampler. The \teacher favors states for which the \student has high loss and thus promotes the discovery of new modes.

This approach paves the way for numerous future research directions. For example, an adaptive \teacher could be applied to amortize intractable inference in large language models (LLMs) \citep{hu2023amortizing} and diffusion models \citep{venkatraman2024amortizing}, or to enhance exploration in automatic red-teaming for LLMs \citep{lee2024learning}. Applying our method to probabilistic models such as amortized Bayesian causal discovery \citep{deleu2022bayesian, deleu2023jsp, nishikawa2022bayesian} and amortized inference in graphical models \citep{falet2024deltaai} are also promising directions. Methodologically, the concept of using amortized prioritized experience replay (PER) as a \teacher can be extended to train general agent-based systems, not only amortized samplers.

\section*{Acknowledgement}

We thank to Anirudh Goyal, David Dobre, Gauthier Gidel for helpful discussion for this project. This research is based on funding from Samsung, Intel, CIFAR and the CIFAR AI Chair program. The research was enabled in part by computational resources provided by the Digital Research Alliance of
Canada (\url{https://alliancecan.ca}), Mila (\url{https://mila.quebec}), and NVIDIA. This work was also partially supported by the National Research Foundation of Korea (NRF) grant funded by the
Korea government (MSIT) (No. RS-2024-00410082).

\bibliography{iclr2025_conference}
\bibliographystyle{iclr2025_conference}

\appendix

\appendix

\clearpage

\section{Limitations}

The primary limitation of the adaptive \teacher is the added complexity in training, as the \teacher's policy network must be trained in addition to the \student's. This introduces a trade-off between increased training complexity and enhanced mode-seeking capabilities. While we believe that the ability to discover multiple modes outweighs the additional complexity, it is important to apply this technique judiciously. For tasks where the reward is unimodal or that do not require extensive exploration, using a \teacher may not be necessary. However, in scenarios where the model tends to collapse to specific modes of the multimodal target distribution, employing a \teacher is a beneficial choice.

Additionally, since the \student struggles to cover entire modes on its own, in significantly larger search spaces the \teacher may also have difficulty covering the full range of modes that the \student fails to discover, potentially collapsing into specific modes. In such large-scale settings, we expect that a multi-agent \teacher system—with multiple agents collaboratively covering the space—could be a beneficial direction for future work to mitigate this limitation.

\section{Theoretical analysis of stationary distributions}
\label{app:theory}

\begin{proposition}
\label{thm:joint-optimum}
Let the behavior policy \( P_{\beta}(\tau) \) be a distribution over trajectories \( \tau \in \mathcal{T} \) that satisfies full support.

If the parameters \( \theta^* \) and \( \phi^* \) of the \student and \teacher policies, respectively, jointly optimize the objective functions to 0 in expectation over \( P_{\beta}(\tau) \), then:
\begin{enumerate}[label=(\alph*),left=0pt,nosep]
    \item The marginal distribution of the \student policy over terminal states satisfies
    \[
    P_F^\top(x; \theta^*) \propto R(x),
    \]
    \item The marginal distribution of the \teacher policy over terminal states satisfies
    \[
    P_F^\top(x; \phi^*) \propto  R(x)^{\alpha},
    \]
\end{enumerate}
where:
\begin{itemize}[left=0pt,nosep]
    \item \( R(x) \) is the reward function,
    \item \( \epsilon > 0 \) is an offset constant introduced in \eqref{eq:teacher-reward1},
    \item \( \alpha > 0 \) is the mixing constant for the \teacher introduced in \eqref{eq:mixing}.
\end{itemize}
\end{proposition}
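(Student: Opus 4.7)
The plan is to chain two applications of the standard trajectory balance (TB) characterization (Section 2): $\delta(\tau;\theta)\equiv 0$ on all complete trajectories iff $Z_\theta=Z$ and $P_F^\top(\cdot;\theta) \propto R$. The key bridge is that the student's loss does not depend on $\phi$, so a joint zero of both losses factors into first a condition on $\theta^*$ alone, then a condition on $\phi^*$ given $\theta^*$.

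First, I would use full support of $P_\beta$ to upgrade the expected zero hypothesis to a pointwise statement. Since $\delta(\tau;\theta^*)^2\ge 0$ and $\mathbb{E}_{P_\beta}[\delta(\tau;\theta^*)^2]=0$ while $P_\beta(\tau)>0$ for every complete trajectory, we conclude $\delta(\tau;\theta^*)=0$ for every $\tau$. The TB characterization then gives part (a): $P_F^\top(x;\theta^*)\propto R(x)$ and $Z_{\theta^*}=Z$.

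Second, I would substitute this into the teacher's reward. Plugging $\delta(\tau;\theta^*)\equiv 0$ into \eqref{eq:teacher-reward1}, the indicator term is irrelevant because its coefficient $\delta^2$ vanishes, so the integrand is the constant $\log\epsilon$ regardless of $\tau$. Hence $\log R^{\rm weighted}_{\text{\teacher}}(x;\theta^*)=\log\epsilon$ for every $x$, and combining with \eqref{eq:mixing} gives $R_{\text{\teacher}}(x)=\epsilon R(x)^{\alpha}$. Applying the same support argument to $\delta_{\text{\teacher}}(\tau;\phi^*)^2$ yields $\delta_{\text{\teacher}}\equiv 0$, and a second invocation of the TB characterization — now with reward $R_{\text{\teacher}}$ — produces $P_F^\top(x;\phi^*)\propto \epsilon R(x)^{\alpha}\propto R(x)^{\alpha}$, which is part (b).

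The only subtlety, and the place where one must be careful, is interpreting the notion of a joint stationary point when the teacher's reward itself depends on $\theta$. The argument avoids any fixed-point gymnastics because the student's objective is $\phi$-independent, so the nested dependence collapses: $\theta^*$ is pinned down by its own TB condition, and once it is pinned down the teacher's target $R_{\text{\teacher}}(\cdot;\theta^*)$ is a fixed reward function. A second minor point worth flagging in the write-up is that the Monte-Carlo estimation of the outer $P_B$-expectation in \eqref{eq:teacher-reward1} is not an issue at the stationary point, since the integrand is constant in $\tau$ whenever $\delta(\cdot;\theta^*)\equiv 0$.
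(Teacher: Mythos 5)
Your proposal is correct and follows essentially the same route as the paper's proof: two invocations of the trajectory balance characterization, first pinning down $\theta^*$ so that $P_F^\top(\cdot;\theta^*)\propto R$, then observing that $\delta(\cdot;\theta^*)\equiv 0$ collapses the \teacher's reward in \eqref{eq:teacher-reward1}--\eqref{eq:mixing} to $\epsilon R(x)^{\alpha}$, whence the same characterization applied to the \teacher's loss yields part (b). The only difference is one of explicitness: you spell out the step from $\mathbb{E}_{P_\beta}[\delta(\tau;\theta^*)^2]=0$ with full support to pointwise $\delta\equiv 0$ (and note that the joint optimization factors because the \student's loss is $\phi$-independent), both of which the paper's proof leaves implicit.
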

\begin{proof}
According to the trajectory balance training theorem \citep{malkin2022trajectory}, the \student policy achieves optimal loss (\ie, $\mathcal{L}_{\text{TB}}(\tau; \theta^*)=0$ for all $\tau$) if and only if its marginal distribution over terminal states $p(x,\theta^*)\propto R(x)$.

Suppose now that the \student policy has reached this optimum. Then the reward for the \teacher is
\begin{align}
\log \tilde{R}_{\text{\teacher}}(x; \theta^*) &= \mathbb{E}_{P_B(\tau \mid x; \theta^*)} \left[ \log \left( \epsilon + \left(1 + C \cdot \mathbb{I}_{\delta(\tau; \theta^*) > 0} \right) \delta(\tau; \theta^*)^2 \right) \right] + \alpha \log R(x) \nonumber \\
&= \mathbb{E}_{P_B(\tau \mid x; \theta^*)} \left[ \log(\epsilon) \right] + \alpha \log R(x) \nonumber \\
&= \log(\epsilon) + \alpha \log R(x) \nonumber \\
&= \log(\epsilon R(x)^{\alpha}).\label{eq:stationary-teacher}
\end{align}
Again by the TB training theorem, the loss is minimized if and only if the marginal distribution of the \student policy over terminal states is proportional to $\epsilon R(\cdot)^\alpha$, equivalently, to $R(\cdot)^\alpha$.

\end{proof}

\clearpage

\section{Detailed implementation} \label{app:implementation}

For all tasks, we set $\alpha = 0.5$ except for the exploration-intensive deceptive grid world tasks, where we use $\alpha = 0.0$. $C$ is set to 19 for all tasks. Our hyperparameter analysis is provided in \cref{app:alpha}. In a diffusion sampling task, to concentrate the teacher on high-reward regions within a vast continuous space, we limit the teacher's training set to samples where \( x > r_{\text{threshold}} \). Here, \( r_{\text{threshold}} \) is the 90th percentile reward from the untrained student's samples.

For neural networks, we use identical architectures for both the \student and \teacher models. Specifically, for the GFN architecture design, we match the architectures used by each baseline for every task. Detailed descriptions are provided in \cref{app:setting}.

Regarding the replay buffer, we adhere to existing implementations for each task and follow prioritized replay rules. In grid world and diffusion sampling tasks, we use rank-based priority as introduced by \citet{tripp2020sample} and \citet{sendera2024diffusion}. For biochemical tasks, we employ portion-wise priority as proposed by \citet{shen2023towards}. Additionally, we implemented two variants of a loss-prioritized buffer: one prioritizes based on TB loss, and the other uses $R_{\text{\teacher}}$ as the priority, serving a similar purpose. We analyze the performance differences between these variants and refer to the use of $R_{\text{\teacher}}$ as ``PER."

When selecting the behavior policy during training, we periodically choose among the \student, \teacher, and buffer in specific proportions: a ratio of 1:1:0 for Grid World tasks, 3:1:2 for Diffusion Sampler tasks, and 2:1:3 for Biochemical tasks. In the diffusion sampler, we use PER for the replay buffer. For the biochemical task, we benchmark the teacher using both PER (referred to as PER + Teacher) and a reward-prioritized buffer (referred to as PRT + Teacher).

A higher \student ratio encourages exploitation, a higher \teacher ratio promotes exploration, and a larger buffer enhances sample efficiency until reaching the \textit{replay ratio barrier} \citep{doro2023sample}.

In deceptive grid world tasks, exploration is crucial, so we assign a higher \teacher ratio. For diffusion sampling tasks, using a prioritized replay buffer with \teacher rewards yields strong performance. Blending this approach slightly with the \teacher enhances both performance and convergence speed, achieving mode coverage for both EUBO and ELBO. In biochemical tasks, sample efficiency is critical. Therefore, existing tasks are set to be on-policy with a buffer ratio of 1:1. We adjust the proportions to favor on-policy learning by setting the \student-to-teacher ratio to 2:1, as biochemical tasks require some level of exploitation, as reported by \citet{shen2023towards}.

Other implementation details and hyperparameters are maintained as in each task's experimental setting.

\section{Detailed experimental setting} \label{app:setting}

\begin{wraptable}{r}{0.4\textwidth}
    \centering
    \vspace{-10pt}
    \caption{The total number of terminal states ($\vert \gX \vert$) and modes ($\vert\mathcal{M}\vert$) for each grid setting, where $d$ is the dimension and $H$ is the horizon of the hypergrid.}
    \begin{tabular}{lcc}
    \toprule
     & $\vert \gX \vert$ & $\vert \mathcal{M} \vert$ \\
    \midrule
    $d=2, H=128$ & 16383 & 676 \\
    $d=2, H=256$ & 65535 & 2601 \\
    $d=4, H=16$ & 64125 & 81 \\
    $d=4, H=32$ & 1042685 & 1296 \\
    \bottomrule
    \end{tabular}
    \label{tab:grid_space}
    \vspace{0pt}
\end{wraptable}

\subsection{Deceptive Grid World}
\label{app:gridworld}

\paragraph{Evaluation metrics.} Following~\cite{bengio2021flow}, we use the number of modes discovered and the empirical $L_1$ distance from target distribution as evaluation metrics. We define the \textit{modes} as $x$'s with $R(x)=R_2+R_0$. The exact size of the search space $\vert \gX \vert$ is $(H-1)^d + d(H-1)$, with $d$ and $H$ representing the dimension and horizon of the hypergrid, respectively. $\vert \gX \vert$ and the total number of modes $\vert\mathcal{M}\vert$ for each grid setting are reported in~\cref{tab:grid_space}. The $L_1$ distance is calculated by $\frac{1}{\vert \gX \vert} \sum_{x \in \gX} \left[ \vert p_\theta(x) - R(x)/Z \vert \right]$, where $Z=\sum_{x} R(x)$, which is known in this synthetic task. Unlike previous works~\citep{bengio2021flow, malkin2022trajectory} where a portion of the final training samples was used to approximate the expectation, we generate $10^5$ new samples from policy to calculate $p_{\theta}(x)$ for evaluation. We use one sample for only one gradient step by default, \ie, update-to-data (UTD) ratio~\citep{chen2021randomized} is 1, except for the case using replay buffer, where the ratio increases to 2.

\paragraph{Hyperparameters.} Following~\citet{bengio2021flow, malkin2022trajectory}, we use a two-layer MLP with 256 hidden units for the parameterized policy $P_F(\cdot;\theta)$ along with a learnable parameter for $\log Z_\theta$. We train them using the Adam optimizer with a learning rate of $10^{-3}$ for policy and $10^{-1}$ for $\log Z_\theta$. The backward policy $P_B$ is fixed as a uniform random policy. When using a replay buffer, its size is dynamically set to $\floor{0.1 \vert \gX \vert}$. The total reward call budget is capped at 96,000, except for $(d=4, H=32)$, which is increased to 384,000, considering the significantly larger search space. We use a batch size of 16. The total number of gradient steps equals the number of reward calls divided by the batch size, and this number doubles when the replay buffer is used.

\paragraph{Baseline implementations.} In our experiments, we set $\epsilon$ for $\epsilon$-exploration to 0.01, as this value yielded the lowest $L_1$ error among the tested options $\{0.001, 0.003, 0.01, 0.03, 0.1\}$ in the setting where $d=2$ and $H=128$. Similarly, for GAFN~\citep{pan2023generative}, we set the intrinsic reward scale to 0.01, which also resulted in the lowest $L_1$ error among the values $\{1.0, 0.1, 0.01, 0.001\}$ under the same conditions. For PRT and PER, we use the same buffer size and prioritization scheme as above. Note that, unlike the original PER for value-based RL, the buffer we used contains only a terminal $x$ rather than all state transitions. From a sampled $x$, the trajectory can be constructed by backward generation using $P_B$. We also explore the integration of a transition-based replay buffer with the detailed balance objective in~\cref{app:db}.

\subsection{Diffusion sampling} \label{app:diffusion_sampler}

In our diffusion sampler, we primarily adhere to the settings of \citet{sendera2024diffusion}, which in turn build upon those of \citet{zhang2021path}.

\paragraph{Tasks.} We benchmark the following two tasks:

\textit{Gaussian Mixture Model with 25 Modes (25GMM).} The 25GMM consists of a two-dimensional Gaussian mixture with 25 modes, each having a variance of $0.3$. The mode centers are positioned on the grid $\{-10, -5, 0, 5, 10\} \times \{-10, -5, 0, 5, 10\}$.

\textit{Manywell \citep{noe2019boltzmann}.} The Manywell is a 32-dimensional distribution formed as the product of 16 identical two-dimensional double-well distributions. Each component is defined by the potential function
\begin{equation}
    \mu(x_1, x_2) = \exp\left(-x_1^4 + 6x_1^2 + 0.5x_1 - 0.5x_2^2\right).
\end{equation}

\paragraph{Evaluation metrics.} We measure evidence lower bound (ELBO), importance sampled ELBO (ELBO-IS), and evidence upper bound (EUBO).

For estimating ELBO, we draw $M$ samples from current policy and take average value of estimated $\log Z$, which is  $\log R(\cdot) + \log P_B(\cdot) - \log P_F(\cdot)$ as follows:
\begin{equation}
    \text{ELBO} \approx \frac{1}{M} \sum_{i=1}^M  (\log R(x^{i}_1) + \log P_B(\tau^{i}|x^{i}_1) - \log P_F(\tau^{i};\theta)), \quad \tau^i \sim P_F(\tau;\theta), \tau^i\rightsquigarrow x_1^i,
\end{equation}
where $\tau^i\rightsquigarrow x_1^i$ means that $x_1^i$ is the final state of $\tau^i$.

Calculation of ELBO-IS is similar to ELBO: 
\begin{equation}
    \text{ELBO-IS} \approx \log \frac{1}{M} \sum_{i=1}^M \exp (\log R(x^{i}_1) + \log P_B(\tau^{i}|x^{i}_1) - \log P_F(\tau^{i};\theta)), \quad \tau^i \sim P_F(\tau;\theta), \tau^i\rightsquigarrow x_1^i.
\end{equation}
EUBO was introduced as a metric that measures mode coverage by \citet{blessing2024beyond}. To calculate EUBO, we sample $M$ samples from the target distribution and take the average of their variational log-likelihood bounds as follows:
\begin{equation}
    \text{EUBO} \approx \frac{1}{M} \sum_{i=1}^M  (\log R(x^{i}_1) + \log P_B(\tau^{i}|x^{i}_1) - \log P_F(\tau^{i};\theta)) \quad x_1^i \sim P^{*}(x_1), \tau^i\sim P_B(\tau\mid x_1^i).
\end{equation}

\paragraph{Forward and backward transition modeling.} The diffusion sampler models discretized SDE trajectories $\tau = (x_0 \rightarrow x_{\Delta t} \rightarrow \ldots \rightarrow x_1)$, starting from $x_0 = (\boldsymbol{0}, t=0)$. Here, $\Delta t = 1/T$, where $T$ is the number of discrete time steps.

The forward policy $P_F(x_{t+\Delta t} \mid x_t; \theta)$ is modeled as a Gaussian distribution with mean $x_t + u(x_t, t; \theta) \Delta t$ and covariance $\sigma^2 \Delta t \, \mathbb{I}$:
\begin{equation}
P_F(x_{t+\Delta t} \mid x_t; \theta) = \mathcal{N}\left(x_{t+\Delta t};\ x_t + u(x_t, t; \theta) \Delta t,\ \sigma^2 \Delta t \, \mathbb{I}\right).
\end{equation}
Here, $u(x_t, t; \theta)$ is the learnable score function, $\sigma$ is the standard deviation, and $\mathbb{I}$ denotes the identity matrix to ensure isotropic covariance.

The backward policy $P_B(x_{t-\Delta t} \mid x_t)$ is defined as a discretized Brownian bridge with noise rate $\sigma$:
\begin{equation}
P_B(x_{t-\Delta t} \mid x_t) = \mathcal{N}\left(x_{t-\Delta t};\ \frac{t - \Delta t}{t} x_t,\ \frac{t - \Delta t}{t} \sigma^2 \Delta t \, \mathbb{I}\right).
\end{equation}

The densities of the distributions over complete forward and backward trajectories are given by:
\begin{equation}
P_F(\tau; \theta) = \prod_{i=0}^{T-1} P_F(x_{(i+1)\Delta t} \mid x_{i\Delta t}; \theta), \quad 
P_B(\tau \mid x_1) = \prod_{i=1}^{T-1} P_B(x_{i\Delta t} \mid x_{(i+1)\Delta t}).
\end{equation}
The diffusion policy parameter $\theta$ is trained using the TB loss.

\paragraph{Hyperparameters.} We set $\sigma^2 = 5.0$ for 25GMM and $\sigma = 1.0$ for Manywell, with the number of time steps $T=100$, following \citet{sendera2024diffusion}. We employ the same architecture as \citet{zhang2021path} and \citet{sendera2024diffusion}, increasing the hidden dimension from 64 to 256 for Manywell to accommodate the 32-dimensional tasks, and apply this adjustment to all baselines. For replay buffer capacity, we set it to 5,000 for the 25GMM task and 20,000 for the Manywell task. All learning hyperparameters remain identical to those in \citet{sendera2024diffusion}. For evaluation, we set the number of samples to $M = 2,000$.

\subsection{Biological and Chemical Discovery}\label{app:biochemical}
For biochemical design tasks, we mostly follow the setting of \citet{shen2023towards}. For all tasks, we use a prepend-append MDP (PA-MDP), where the action is defined as adding a token at the beginning or the end of a partial sequence. This MDP formulation makes it possible to have multiple trajectories associated with the same design configuration $x$. 

\paragraph{Hyperparameters.} For training GFlowNets, we use similar setting proposed by prior works \citep{shen2023towards, kim2024local}. We use Adam optimizer \citep{kingma2014adam} with learning rate $10^{-2}$ for log $Z_{\theta}$, $10^{-4}$ for forward policy, $5 \times 10^{-4}$ for teacher policy. To parametrize forward policy, we adopt relative edge flow policy parametrization mapping (SSR) from \citet{shen2023towards}. For QM9 and sEH tasks, we employ a two-layer architecture with 1024 hidden units, while for the other tasks, we choose to use a two-layer architecture with 128 hidden units. We initialize $\log Z_{\theta}$ to 5.0 for all methods. For backward policy, we use a fixed uniform policy. In terms of reward exponent, we use a value of 20 for both QM9 and TFbind8. For sEH and L14-RNA1, we use relatively higher values, 200 and 40, respectively. 

\paragraph{Evaluation metrics.} For evaluation, we compute the number of modes using all the samples collected over the course of training. What we count as a mode should have a high reward.  Unlike with other designs, we define ``mode" as a configuration whose reward is above a certain threshold. This is different from previously used metrics of mode counting to assess diversity. For QM9 and TFbind8, we use a default mode set suggested by \citep{shen2023towards}. For sEH, we set the reward threshold as the top 0.01\% of $\gX$ in terms of the reward and the diversity threshold as 0.4 Tanimoto diversity. For L14-RNA1, we set the reward threshold as the top 0.01\% of $\gX$ in terms of the reward and the diversity threshold as 1 unit of Levenstein distance. We also report ELBO and EUBO which are described in~\cref{app:diffusion_sampler}. We generate $M=2,048$ samples from both the trained policy and the target distribution for the ELBO and EUBO calculation. To sample from the target distribution, we evaluate all possible configurations and sample them proportionally to their reward. As the number of possible configurations is enormously large for sEH and L14-RNA1 tasks, we use the Gumbel-max trick \citep{gumbel1954statistical} for sampling from the discrete probability distribution.

\clearpage

\section{Additional experimental results}
\label{app:exp}

\begin{figure}[t]
    \centering
\begin{minipage}[t]{0.92\textwidth}
\includegraphics[width=1.0\linewidth]{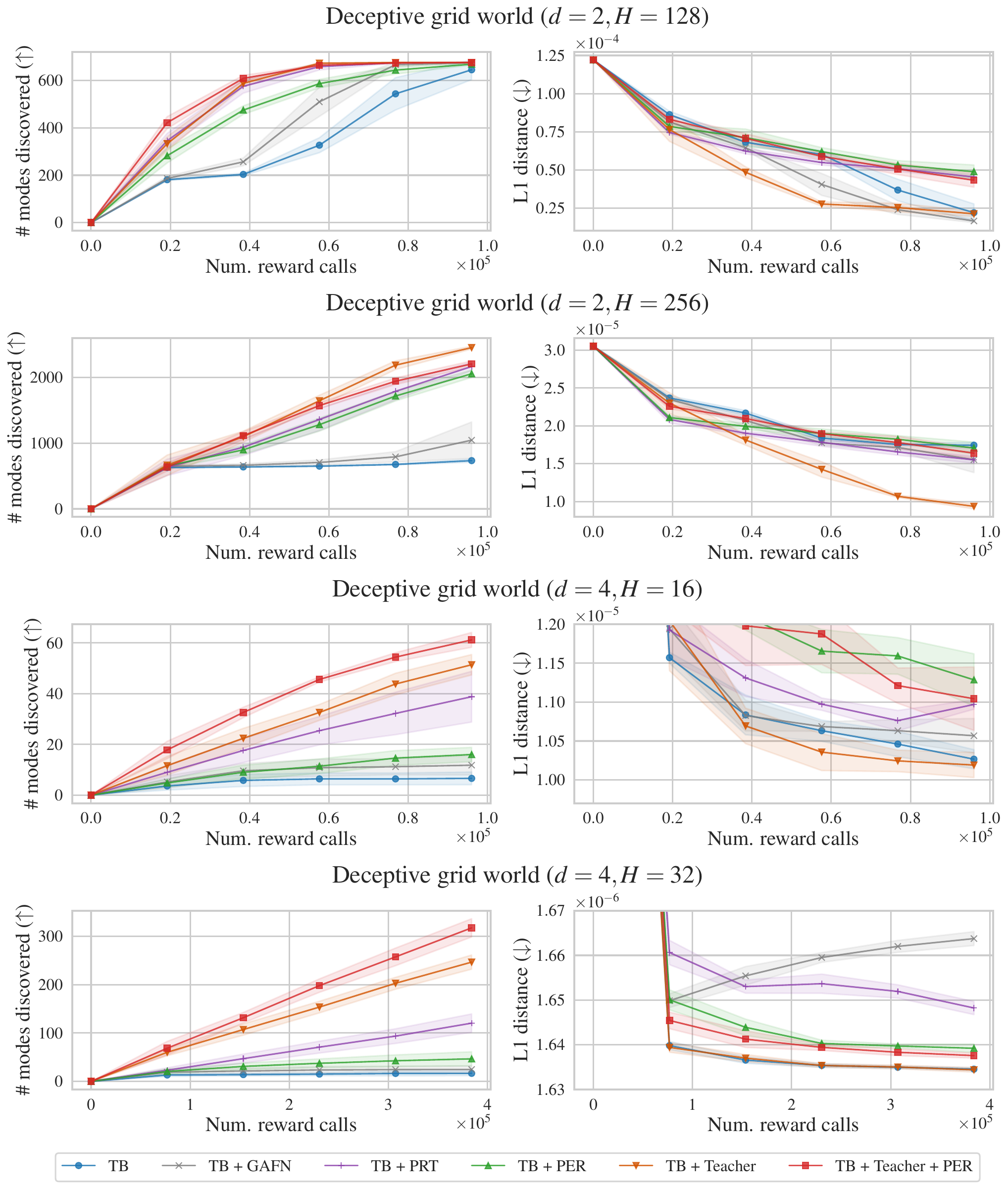}
\caption{Evolution of evaluation metrics for each method in deceptive grid world task. The mean value with standard deviation is depicted from five independent runs.}
    \label{fig:grid_all}
\end{minipage}
\vspace{-10pt}
\end{figure}

\subsection{Extended experimental results of deceptive grid world}
\label{app:grid-results}

\cref{fig:grid_all} shows the changes in two evaluation metrics during training: the number of modes discovered and the empirical $L_1$ distance between the target and sampled distributions. The results indicate that \teacher is highly effective at discovering modes and also generally performs well in terms of the $L_1$ distance. In contrast, while PER and PRT improve mode discovery, they tend to slightly worsen $L_1$ performance. We believe this is because the replay buffer provides a regularizing signal that prevents mode collapse, making the policy learning process more challenging compared to using purely on-policy methods without regularization. This leads to slight underfitting. On the other hand, on-policy methods can easily overfit to the target distribution (but into specific modes), resulting in descent $L_1$ performance. However, without off-policy regularization, they tend to drop modes and fail to cover all modes in the target distribution. \teacher achieves high performance in both $L_1$ and mode coverage, indicating that it not only offers off-policy regularization to ensure comprehensive mode coverage but also provides efficient curricula for faster convergence to the target distribution, leading to strong performance across both metrics.

\clearpage

\begin{table*}[t]\label{tab:nmc}
\vspace*{-0.5em}
    \caption{The effect of number of Monte Carlo samples ($N_{\text{MC}}$) in deceptive grid world task.}\vspace*{-0.75em}
\label{tab:grid_mc}
\centering
    \resizebox{0.75\linewidth}{!}{
    \begin{tabular}{@{}lcccc}
        \toprule
        Grid config. $\rightarrow$
        & \multicolumn{2}{c}{$d = 2$, $H = 256$} 
        & \multicolumn{2}{c}{$d = 4$, $H = 32$} 
        \\
        \cmidrule(lr){2-3}\cmidrule(lr){4-5}
        Algorithm $\downarrow$ Metric $\rightarrow$
        & \# modes ($\uparrow$) & $L_1 \times 10^{-5}$ ($\downarrow$) 
        & \# modes ($\uparrow$) & $L_1 \times 10^{-6}$ ($\downarrow$) 
        \\
        \midrule
        $N_{\text{MC}} = 1$ (default)
        & $2452.6 \pm \text{\footnotesize 21.7}$ & $\mathbf{0.94} \pm \text{\footnotesize 0.03}$ 
        & $\mathbf{246.6} \pm \text{\footnotesize 14.7}$ & $\mathbf{1.634} \pm \text{\footnotesize 0.001}$ 
        \\
        $N_{\text{MC}} = 3$
        & $2489.6 \pm \text{\footnotesize 15.8}$ & $0.96 \pm \text{\footnotesize 0.07}$ 
        & $234.6 \pm \text{\footnotesize 14.2}$ & $1.635 \pm \text{\footnotesize 0.000}$ 
        \\
        $N_{\text{MC}} = 5$
        & $2490.4 \pm \text{\footnotesize 30.3}$ & $\mathbf{0.94} \pm \text{\footnotesize 0.06}$ 
        & $230.8 \pm \text{\footnotesize 3.4}$ & $\mathbf{1.634} \pm \text{\footnotesize 0.000}$ 
        \\
        $N_{\text{MC}} = 10$
        & $\mathbf{2492.0} \pm \text{\footnotesize 26.3}$ & $0.96 \pm \text{\footnotesize 0.05}$ 
        & $239.0 \pm \text{\footnotesize 11.8}$ & $\mathbf{1.634} \pm \text{\footnotesize 0.000}$ 
        \\   
        \bottomrule
    \end{tabular}
    }
\end{table*}

\subsection{Study on Monte Carlo approximation for \texorpdfstring{$R_{\text{\teacher}}$}{TEXT}} \label{app:r_teacher_mc}

We use Monte Carlo estimate with a single trajectory to approximate the expectation in \cref{eq:teacher-reward0} and \cref{eq:teacher-reward1}. To validate that this single-sample approximation is reasonable, we test our algorithm in the deceptive grid world task with an increased number of samples, ranging from 3 to 10. We do not use the replay buffer in this analysis. As described in \cref{tab:nmc}, the performance is not significantly affected by $N_{\text{MC}}$. This suggests that using a Monte Carlo approximation with a sample size of 1 to estimate the stochastic reward $R_{\text{Teacher}}$ was reasonable.

\begin{table*}[t]
\vspace*{-0.5em}
    \caption{Ablation study on $C$ in deceptive grid world task.}\vspace*{-0.75em}
\label{tab:grid_C}
\centering
    \resizebox{0.75\linewidth}{!}{
    \begin{tabular}{@{}lcccc}
        \toprule
        Grid config. $\rightarrow$
        & \multicolumn{2}{c}{$d = 2$, $H = 256$} 
        & \multicolumn{2}{c}{$d = 4$, $H = 32$} 
        \\
        \cmidrule(lr){2-3}\cmidrule(lr){4-5}
        Algorithm $\downarrow$ Metric $\rightarrow$
        & \# modes ($\uparrow$) & $L_1 \times 10^{-5}$ ($\downarrow$) 
        & \# modes ($\uparrow$) & $L_1 \times 10^{-6}$ ($\downarrow$) 
        \\
        \midrule
        $C = 0$
        & $\mathbf{2469.4} \pm \text{\footnotesize 29.7}$ & $1.06 \pm \text{\footnotesize 0.08}$ 
        & $253.2 \pm \text{\footnotesize 11.3}$ & $\mathbf{1.634} \pm \text{\footnotesize 0.000}$ 
        \\
        $C = 9$
        & $2454.2 \pm \text{\footnotesize 18.7}$ & $0.95 \pm \text{\footnotesize 0.02}$ 
        & $\mathbf{256.0} \pm \text{\footnotesize 15.4}$ & $1.635 \pm \text{\footnotesize 0.000}$ 
        \\
        $C = 19$ (default)
        & $2452.6 \pm \text{\footnotesize 21.7}$ & $\mathbf{0.94} \pm \text{\footnotesize 0.03}$ 
        & $246.6 \pm \text{\footnotesize 14.7}$ & $\mathbf{1.634} \pm \text{\footnotesize 0.001}$ 
        \\
        $C = 29$
        & $2465.2 \pm \text{\footnotesize 30.6}$ & $\mathbf{0.94} \pm \text{\footnotesize 0.04}$ 
        & $243.4 \pm \text{\footnotesize 8.5}$ & $\mathbf{1.634} \pm \text{\footnotesize 0.000}$ 
        \\   
        \bottomrule
    \end{tabular}
    }
\end{table*}

\subsection{Ablation study on the choice of \texorpdfstring{$C$}{TEXT} value}
\label{app:C-ablation}

We set the hyperparameter $C$ in \cref{eq:teacher-reward1} to 19 across all experiments without an extensive hyperparameter search. To evaluate this choice, we perform an ablation study on the deceptive grid world task, testing alternative values of 0, 9, and 29. Note that the replay buffer is not used for this analysis. The results are summarized in \cref{tab:grid_C}. Although the effect on the number of modes discovered is somewhat mixed, $C=0$ performs slightly worse than the other values regarding the empirical $L_1$ error, supporting our hypothesis that focusing more on undersampled regions is beneficial. We also found that the results are not highly sensitive to the choice of $C$.

\clearpage

\subsection{Study on \texorpdfstring{$\alpha$}{TEXT} of reward mixing} \label{app:alpha}
We introduce $\alpha$ to mix the reward based on the \student's loss and \student's log reward to help \teacher target both high-loss and high-reward areas effectively. In this section, we investigate the effect of $\alpha$ on the performance of our method.

\begin{table*}[h]
\vspace*{-0.5em}
    \caption{Mixing component study on deceptive grid world task}\vspace*{-0.75em}
\label{tab:grid_alpha}
\centering
    \resizebox{0.7\linewidth}{!}{
    \begin{tabular}{@{}lcccc}
        \toprule
        Grid config. $\rightarrow$
        & \multicolumn{2}{c}{$d = 2$, $H = 256$} 
        & \multicolumn{2}{c}{$d = 4$, $H = 32$} 
        \\
        \cmidrule(lr){2-3}\cmidrule(lr){4-5}
        Algorithm $\downarrow$ Metric $\rightarrow$
        & \# modes ($\uparrow$) & $L_1 \times 10^{-5}$ ($\downarrow$) 
        & \# modes ($\uparrow$) & $L_1 \times 10^{-6}$ ($\downarrow$) 
        \\
        \midrule
        $\alpha = 0.0$
        & $2452.6 \pm \text{\footnotesize 21.7}$ & $0.94 \pm \text{\footnotesize 0.03}$ 
        & $246.6 \pm \text{\footnotesize 14.7}$ & $1.634 \pm \text{\footnotesize 0.001}$ 
        \\
        $\alpha = 0.5$
        & $2415.2 \pm \text{\footnotesize 262.8}$ & $0.90 \pm \text{\footnotesize 0.11}$ 
        & $85.6 \pm \text{\footnotesize 8.5}$ & $1.634 \pm \text{\footnotesize 0.000}$ 
        \\   
        \bottomrule
    \end{tabular}
    }
\end{table*}

\textbf{Deceptive grid world.} \cref{tab:grid_alpha} shows that reward mixing with $\alpha = 0.5$ degrades performances for mode seeking in high dimensional tasks as deceptive grid world task is exploration intensive task; teacher solely focusing on high loss region is more beneficial. Still mixing with $\alpha = 0.5$ outperforms other baselines. 

\begin{table*}[h]
    \caption{Mixing component study on diffusion sampler task}\vspace*{-0.8em}
\label{tab:diffusion_sampling_hyperparameter}
\centering
    \resizebox{0.9\linewidth}{!}{
    \begin{tabular}{@{}lcccccc}
        \toprule
        Energy $\rightarrow$
        & \multicolumn{3}{c}{25GMM ($d=2$, $\log Z = 0$)} 
        & \multicolumn{3}{c}{Manywell ($d=32$, $\log Z = 164.696$)} 
        \\
        \cmidrule(lr){2-4}\cmidrule(lr){5-7}
        Algorithm $\downarrow$ Metric $\rightarrow$ & ELBO ($\uparrow$) &  ELBO-IS ($\uparrow$) & EUBO ($\downarrow$)  & ELBO ($\uparrow$) & ELBO-IS ($\uparrow$) & EUBO  ($\downarrow$) \\
        \midrule
        $\alpha = 0.0$
        & -0.144{\scriptsize$\pm$0.001} 
        & -0.009{\scriptsize$\pm$0.006}
        & 0.122{\scriptsize$\pm$0.010}
        
        & 163.447{\scriptsize$\pm$0.063} 
        & 164.694{\scriptsize$\pm$0.060}
        & 166.024{\scriptsize$\pm$0.001}
        \\   
        $\alpha = 0.5$
        & -0.137{\scriptsize$\pm$0.004} 
        & -0.005{\scriptsize$\pm$0.007}
        & 0.115{\scriptsize$\pm$0.009}
        
        & 163.484{\scriptsize$\pm$0.049} 
        & 164.676{\scriptsize$\pm$0.048}
        & 165.800{\scriptsize$\pm$0.045}
        \\   
        \bottomrule
    \end{tabular}
    }
\end{table*}

\textbf{Diffusion sampling.} As shown in \cref{tab:diffusion_sampling_hyperparameter}, mixing with $\alpha = 0.5$ shows slightly better performance, though both achieve significantly higher sampling efficiency compared to the baselines. Both $\alpha = 0.0$ and $\alpha = 0.5$ come close to reaching the target $\log Z$.

\begin{figure}[h]
    \centering
\begin{minipage}[t]{0.7\textwidth}
\includegraphics[width=1.0\linewidth]{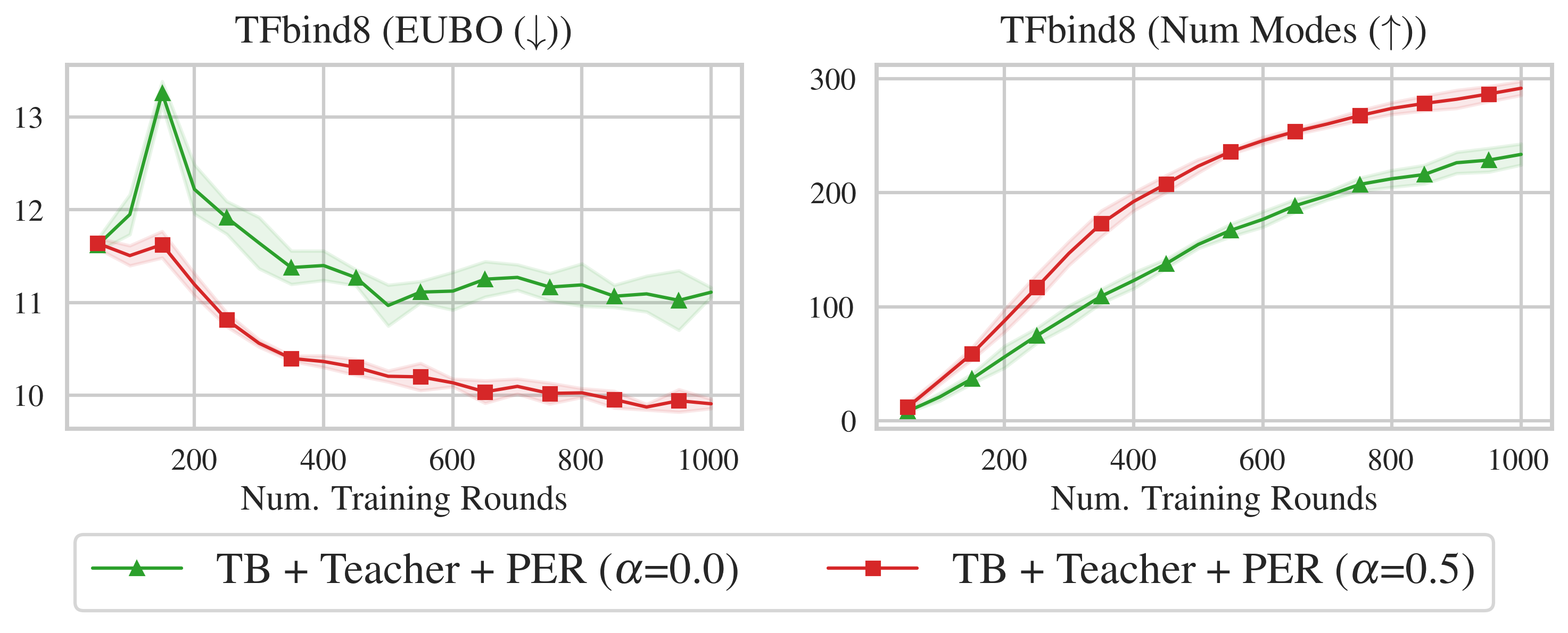}
\caption{Training graph for TFbind8 task by varying $\alpha$ of reward mixing. Mean value with standard deviation is depicted five independent runs.}
    \label{fig:tfbind8_alpha_ablation}
\end{minipage}
\end{figure}

\textbf{Biological and Chemical Discovery (TFbind8).} As shown in \cref{fig:tfbind8_alpha_ablation}, mixing with $\alpha=0.5$ yields significantly better performance in TFbind8 task. This highlights the importance of having the teacher focus on both high-loss and high-reward areas.

\clearpage
\subsection{\teacher with Local search} \label{app:teacher-local}

\begin{figure}[t]
    \centering
\begin{minipage}[t]{0.95\textwidth}
\includegraphics[width=1.0\linewidth]{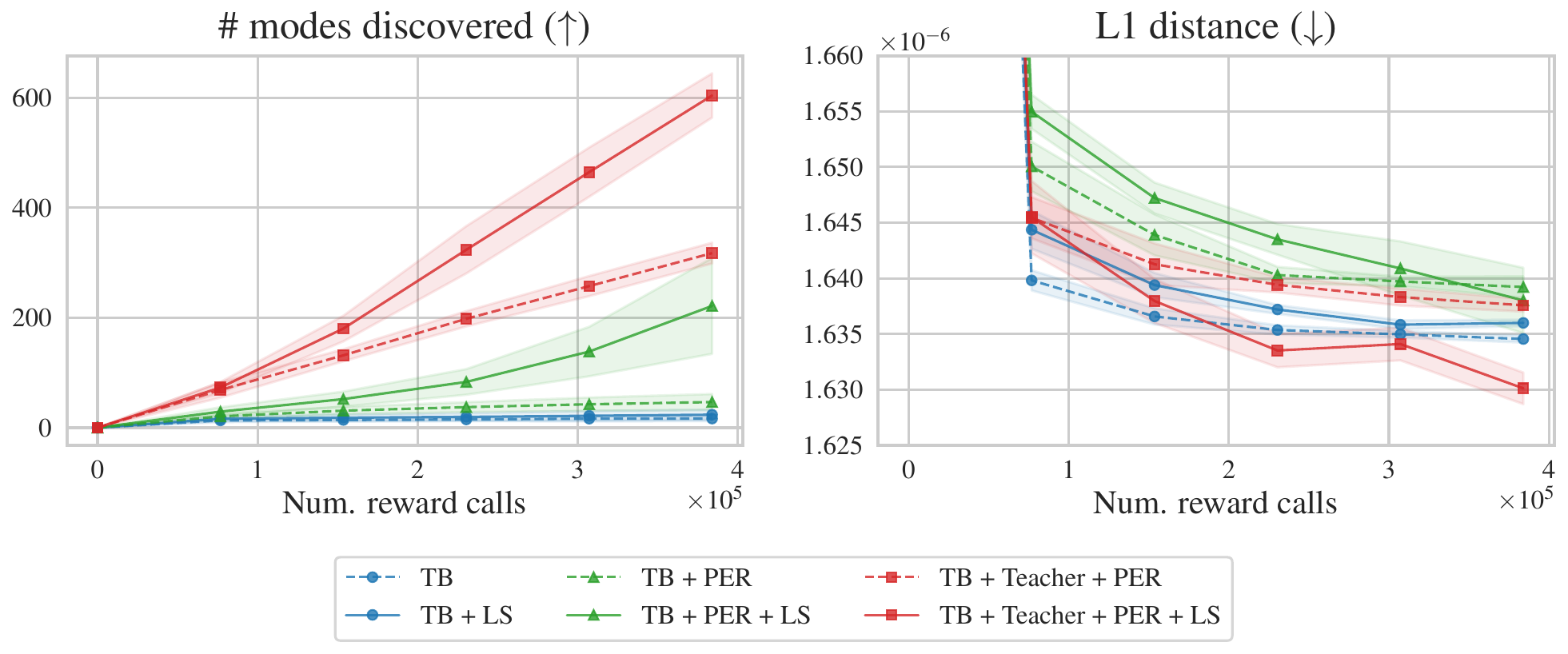}
\caption{Evolution of evaluation metrics with or without the local search in deceptive grid worlds ($d=4, H=32$). The mean value with standard deviation is depicted from five independent runs.}
    \label{fig:grid_LS_modes_and_l1}
\end{minipage}
\end{figure}

\begin{table*}[t]
\vspace*{-0.5em}
    \caption{Comparison with local search (LS) \citep{sendera2024diffusion} on Manywell}\vspace*{-0.8em}
\label{tab:diffusion_local_search}
\centering
    \resizebox{0.6\linewidth}{!}{
    \begin{tabular}{@{}lccc@{}}
        \toprule
        Energy $\rightarrow$ 
        & \multicolumn{3}{c}{Manywell ($d=32$, $\log Z = 164.696$)} 
        \\
        \cmidrule(lr){2-4} 
        Algorithm $\downarrow$ & ELBO ($\uparrow$) & ELBO-IS ($\uparrow$) & EUBO ($\downarrow$)  
        \\
        \midrule
        PER 
        & 161.537{\scriptsize$\pm$0.186}
        & 162.582{\scriptsize$\pm$0.268}
        & 210.440{\scriptsize$\pm$6.888}
        \\
        PER + LS 
        & 163.308{\scriptsize$\pm$0.189}
        & 164.508{\scriptsize$\pm$0.168}
        & 168.953{\scriptsize$\pm$3.209}
        \\
        \midrule
        Teacher
        & \textbf{163.484}{\scriptsize$\pm$0.049}
        & 164.676{\scriptsize$\pm$0.048}
        & 165.800{\scriptsize$\pm$0.045}
        \\
        Teacher + LS
        & 163.472{\scriptsize$\pm$0.086}
        & \textbf{164.685}{\scriptsize$\pm$0.063}
        & \textbf{165.787}{\scriptsize$\pm$0.044}
        \\
        \bottomrule
    \end{tabular}
    }
\end{table*}

Local search is a useful technique to improve the sampling quality of GFlowNets \citep{hu2023gfnem,kim2024local}. In this section, we investigate the possible integration of local search and \teacher and compare it with existing local search integrated solely on \student.

\textbf{Deceptive grid world.} We tested the backtracking-and-reconstruction local search method introduced in~\cref{sec:ls} in the deceptive grid world, using grid configurations of $(d=2, H=256)$ and $(d=4, H=32)$. Four iterative local searches were performed every 16th training batch. The backtracking ratio is 0.5, meaning the last half of a trajectory is destroyed and reconstructed by policy. We applied a deterministic acceptance rule, accepting a new trajectory if it had a higher $R_{\text{\teacher}}$. For comparison, we used two baselines: on-policy TB and TB with PER, both using the same local search but using the task reward $R$ to determine the acceptance.

The experimental results extending is illustrated in~\cref{fig:grid_LS_modes_and_l1}. \teacher with PER outperforms both PER and on-policy TB with a large margin in terms of mode coverage, regardless of whether local search is applied. When combined with local search, \teacher achieves the best results in both the number of modes discovered and the empirical $L_1$ distance. We believe this performance gain is largely due to the reduction of non-stationarity~\cref{sec:ls}, though isolating the exact contribution is complex and left for future work.

\textbf{Diffusion sampling.} We utilize the Manywell task to compare the effect of local search on the \teacher model. Specifically, we employ parallel local search methods \citep{sendera2024diffusion} that leverage Metropolis-Hastings-guided Langevin dynamics (MALA) on samples from the replay buffer to refine sample quality. As shown in the \cref{tab:diffusion_local_search}, by integrating this local search with Prioritized Experience Replay (PER), we observe significant performance improvements.

Remarkably, the \teacher model—even without local search—still outperforms these results. This is notable because the \teacher's exploration does not require gradient information of the energy function, whereas MALA relies on such gradient information. Since the \teacher rapidly achieves optimal sampling quality on the Manywell task, we observe not much improvement when applying local search to the \teacher in the diffusion sampling task.

\begin{figure}[t]
    \centering
\begin{minipage}[t]{0.7\textwidth}
\includegraphics[width=1.0\linewidth]{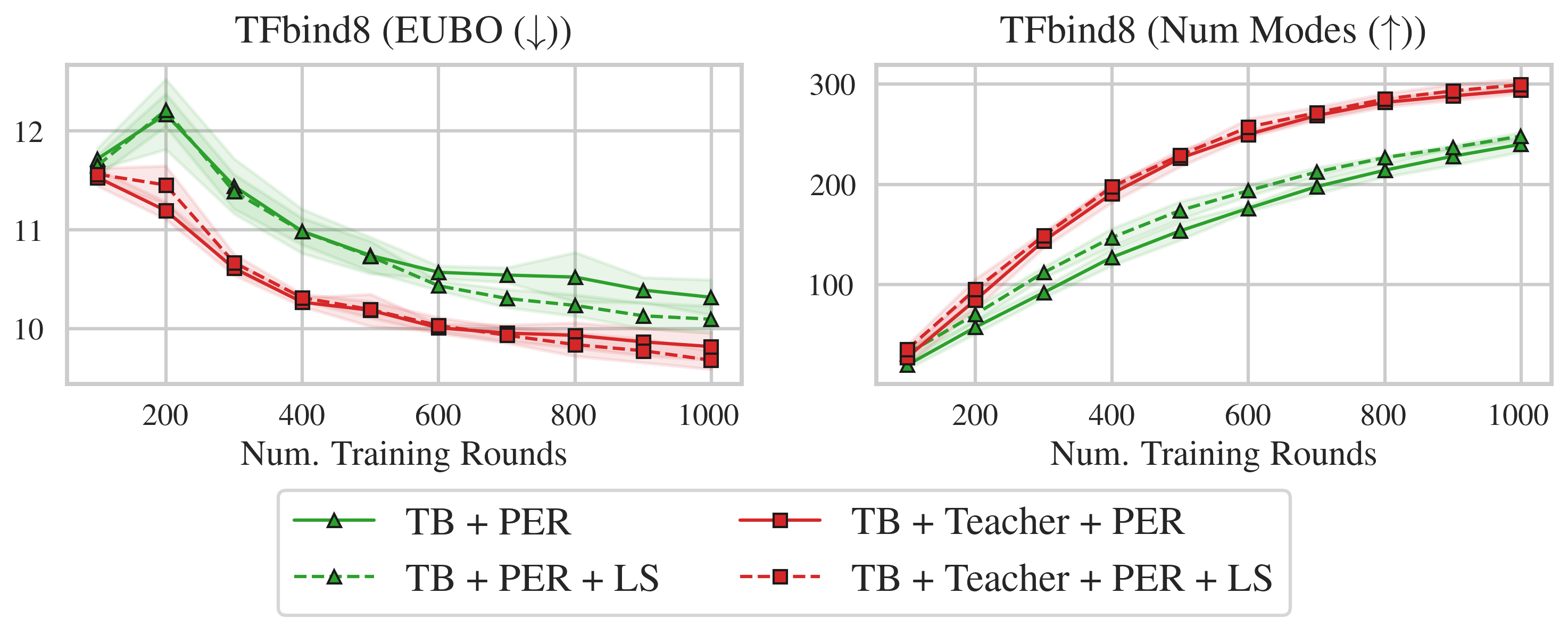}
\caption{Training graph for TFbind8 task by integrating local search \citep{kim2024local}. Mean value with standard deviation is depicted five independent runs.}
    \label{fig:tfbind8_metrics_over_rounds_ls}
\end{minipage}
\end{figure}

\textbf{Biological and Chemical Discovery.} We utilize the TFbind8 task to compare the effect of local search on the \teacher model. Specifically, we employ the local search method suggested by \citep{kim2024local}, which involves backtracking and reconstructing sequences using the forward and backward policies of GFlowNets. The decision to accept adjusted samples is based on whether $R(x') > R(x)$, where $x'$ is the new sample. For the \teacher model, as described in \cref{sec:ls}, we perform local search to mitigate non-stationary in the student and to optimize $R_{\text{teacher}}(x)$. For both the \student and \teacher models, we employ Prioritized Experience Replay (PER).

We visualize the results in \cref{fig:tfbind8_metrics_over_rounds_ls}. As shown in the figure, the \teacher model without local search still outperforms the \student model with local search. This demonstrates that the exploration capability of the \teacher model is far more efficient than conducting local search with the current policy. Moreover, we observe that integrating local search with the \teacher model leads to further improvement in terms of EUBO, highlighting the synergistic effect of combining the \teacher model with local search.

\clearpage

\subsection{2D plots of \teacher and \student over training}

\begin{figure}[t!]
    \captionsetup[subfigure]{labelformat=empty}
    \centering
    \begin{minipage}[t]{1.0\textwidth} 
        \begin{subfigure}[b]{0.16\textwidth}
            \centering
            \includegraphics[width=\textwidth]{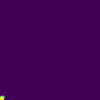}
            \caption{\student (0/5)}
        \end{subfigure}\hfill
        \begin{subfigure}[b]{0.16\textwidth}
            \centering
            \includegraphics[width=\textwidth]{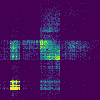}
            \caption{\student (1/5)}
        \end{subfigure}\hfill
        \begin{subfigure}[b]{0.16\textwidth}
            \centering
            \includegraphics[width=\textwidth]{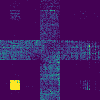}
            \caption{\student (2/5)}
        \end{subfigure}\hfill
        \begin{subfigure}[b]{0.16\textwidth}
            \centering
            \includegraphics[width=\textwidth]{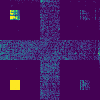}
            \caption{\student (3/5)}
        \end{subfigure}\hfill
        \begin{subfigure}[b]{0.16\textwidth}
            \centering
            \includegraphics[width=\textwidth]{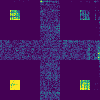}
            \caption{\student (4/5)}
        \end{subfigure}\hfill
        \begin{subfigure}[b]{0.16\textwidth}
            \centering
            \includegraphics[width=\textwidth]{figures/grid/student_5-5.png}
            \caption{\student (5/5)}
        \end{subfigure}       
    \end{minipage}
    
    \vspace{1em} 
    
    \begin{minipage}[t]{1.0\textwidth} 
        \begin{subfigure}[b]{0.16\textwidth}
            \centering
            \includegraphics[width=\textwidth]{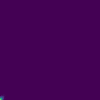}
            \caption{\teacher (0/5)}
        \end{subfigure}\hfill
        \begin{subfigure}[b]{0.16\textwidth}
            \centering
            \includegraphics[width=\textwidth]{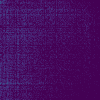}
            \caption{\teacher (1/5)}
        \end{subfigure}\hfill
        \begin{subfigure}[b]{0.16\textwidth}
            \centering
            \includegraphics[width=\textwidth]{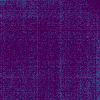}
            \caption{\teacher (2/5)}
        \end{subfigure}\hfill
        \begin{subfigure}[b]{0.16\textwidth}
            \centering
            \includegraphics[width=\textwidth]{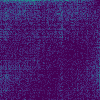}
            \caption{\teacher (3/5)}
        \end{subfigure}\hfill
        \begin{subfigure}[b]{0.16\textwidth}
            \centering
            \includegraphics[width=\textwidth]{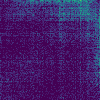}
            \caption{\teacher (4/5)}
        \end{subfigure}\hfill
        \begin{subfigure}[b]{0.16\textwidth}
            \centering
            \includegraphics[width=\textwidth]{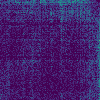}
            \caption{\teacher (5/5)}
        \end{subfigure}
    \end{minipage}
    
    \caption{Illustration of the distribution dynamics between the \teacher and \student models, along with their stationary distributions. The \student(\textit{ratio}) represents the fraction of completed training steps. }
    \label{fig:grid-teacher-student-dynamics-appendix}
\end{figure}

\begin{figure}[t!]
    \captionsetup[subfigure]{labelformat=empty}

    \centering
    \begin{minipage}[t]{1.0\textwidth} 
        \begin{subfigure}[b]{0.16\textwidth}
            \centering
            \includegraphics[width=\textwidth]{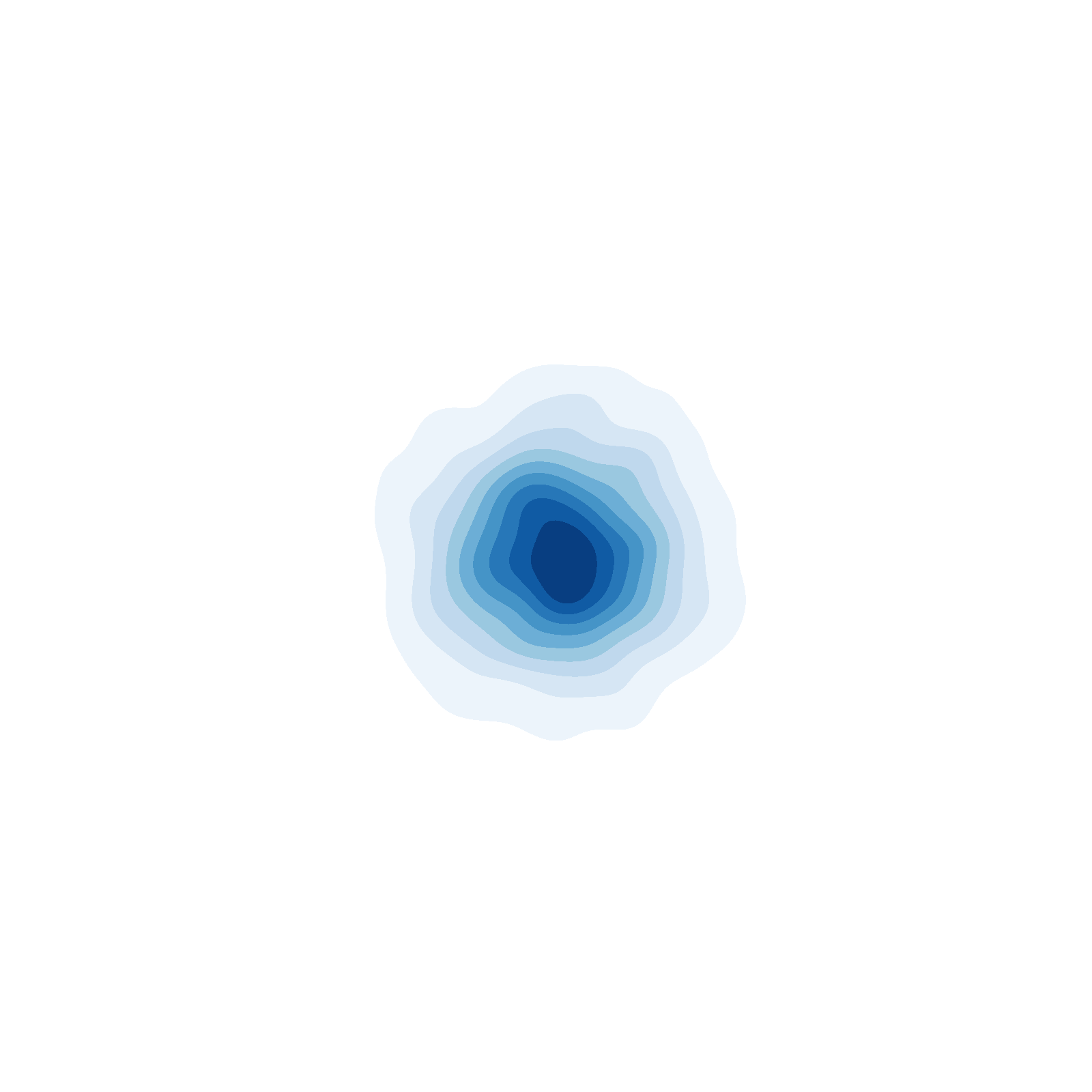}
            \caption{\student (0/5)}
        \end{subfigure}\hfill
        \begin{subfigure}[b]{0.16\textwidth}
            \centering
            \includegraphics[width=\textwidth]{figures/gmm/student-gmm.png}
            \caption{\student (1/5)}
        \end{subfigure}\hfill
        \begin{subfigure}[b]{0.16\textwidth}
            \centering
            \includegraphics[width=\textwidth]{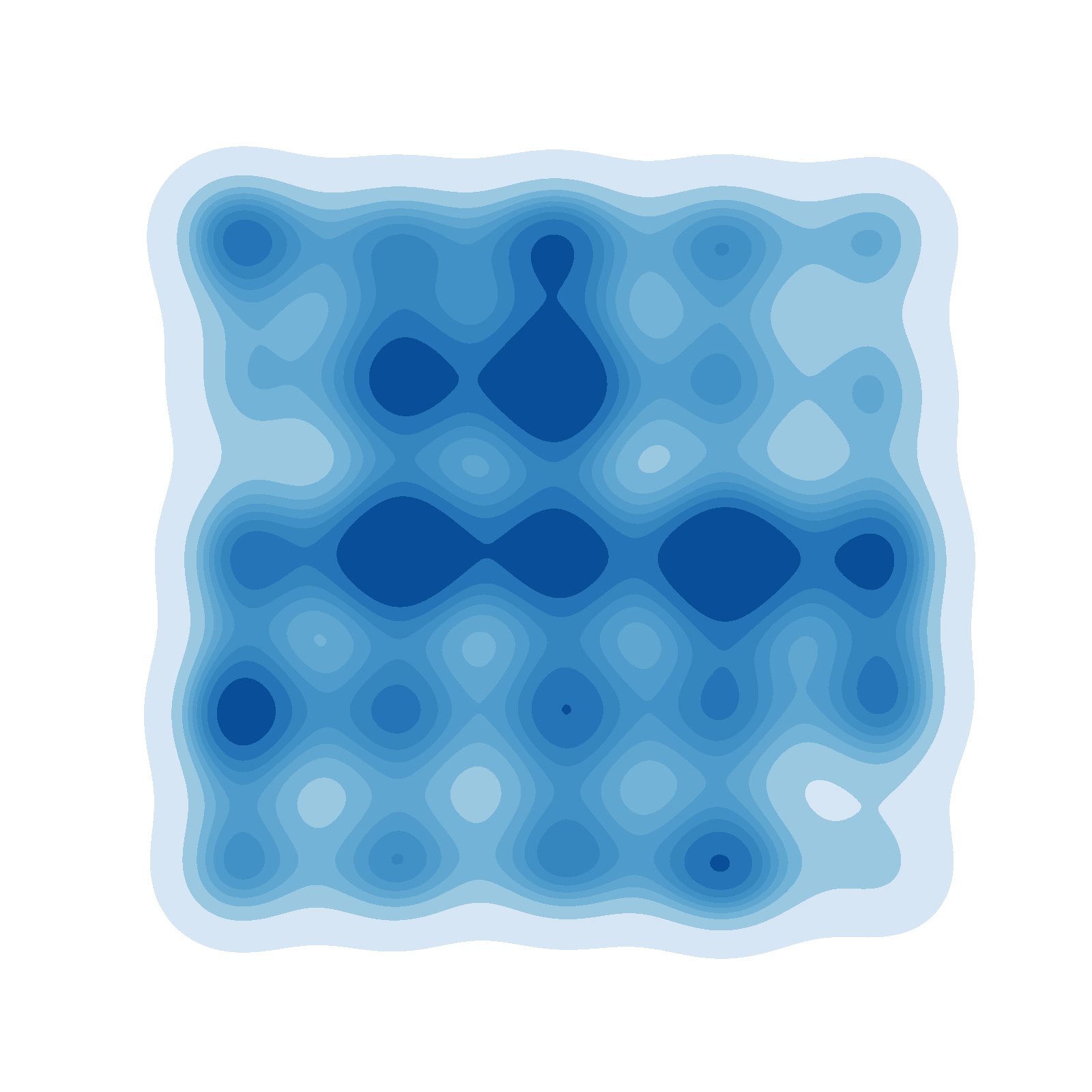}
            \caption{\student (2/5)}
        \end{subfigure}\hfill
        \begin{subfigure}[b]{0.16\textwidth}
            \centering
            \includegraphics[width=\textwidth]{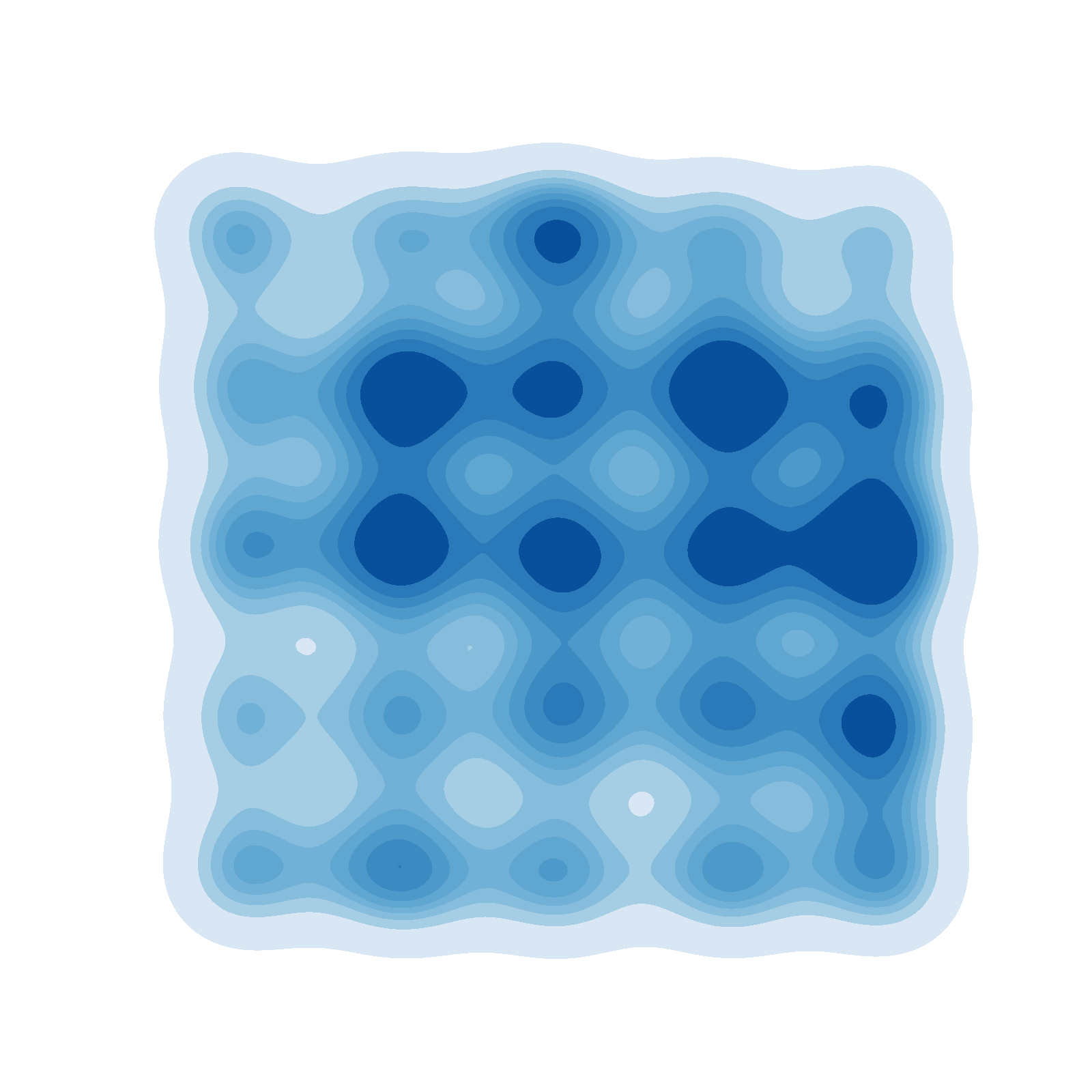}
            \caption{\student (3/5)}
        \end{subfigure}\hfill
        \begin{subfigure}[b]{0.16\textwidth}
            \centering
            \includegraphics[width=\textwidth]{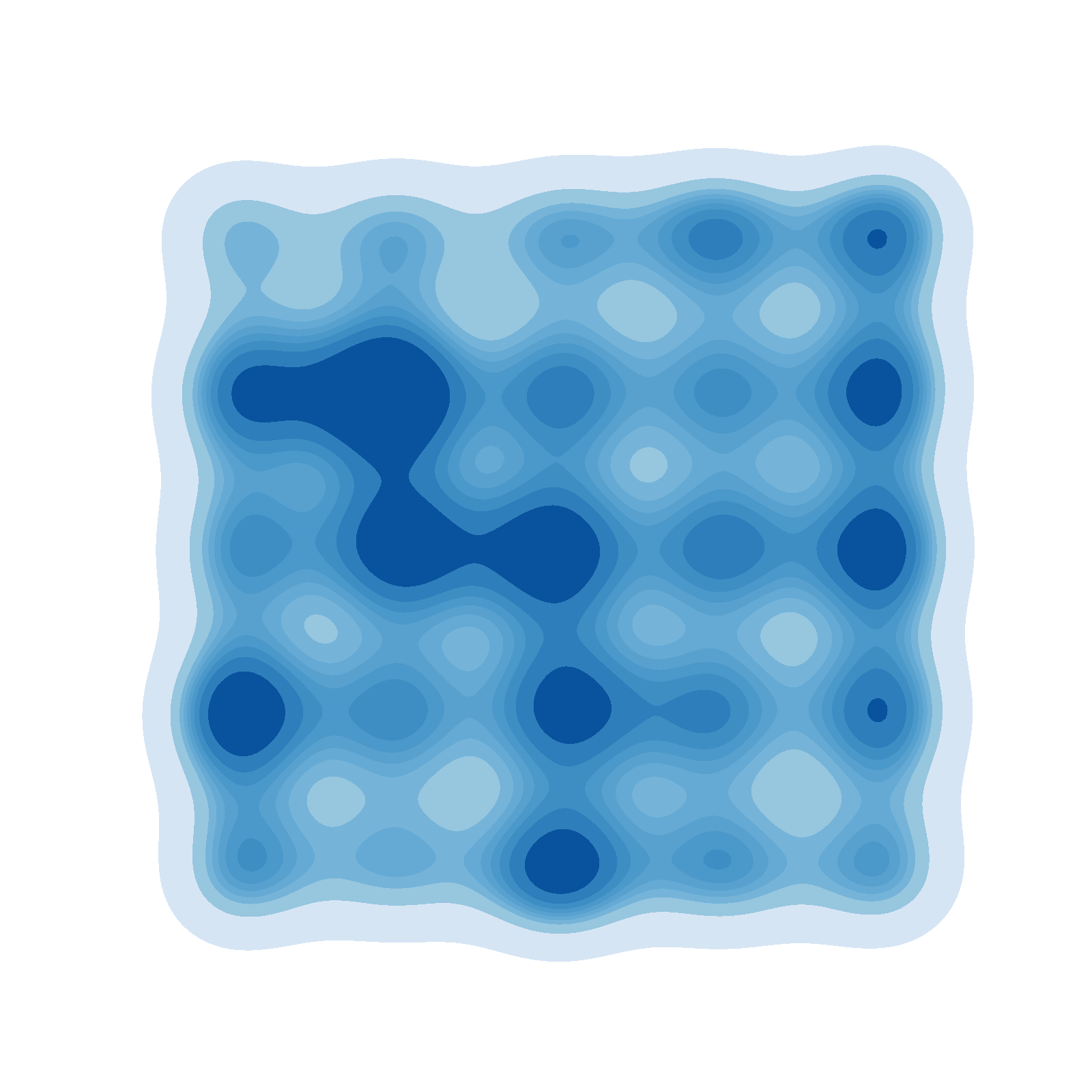}
            \caption{\student (4/5)}
        \end{subfigure}\hfill
        \begin{subfigure}[b]{0.16\textwidth}
            \centering
            \includegraphics[width=\textwidth]{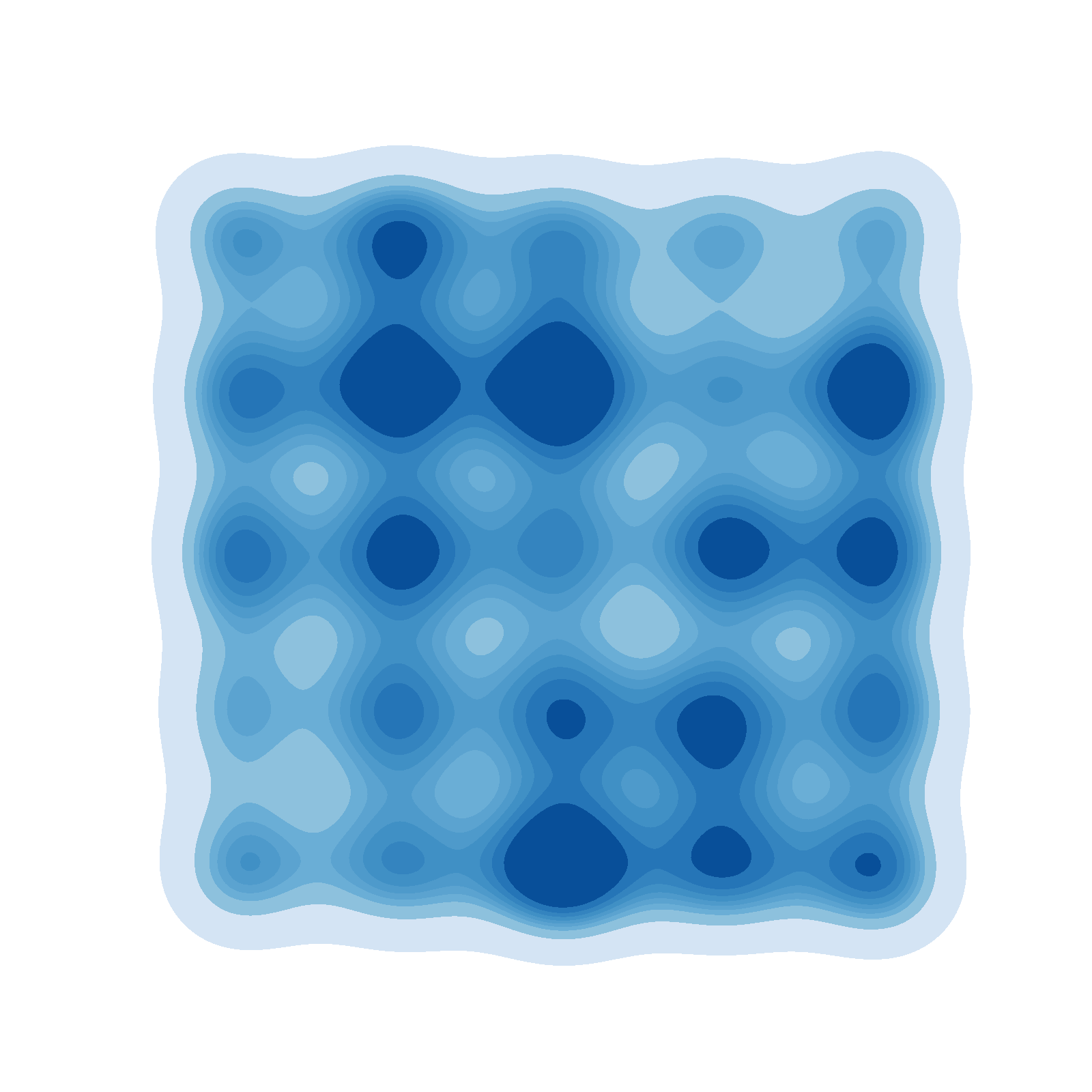}
            \caption{\student (5/5)}
        \end{subfigure}       
    \end{minipage}
    
    \vspace{1em} 
    
    \begin{minipage}[t]{1.0\textwidth} 
        \begin{subfigure}[b]{0.16\textwidth}
            \centering
            \includegraphics[width=\textwidth]{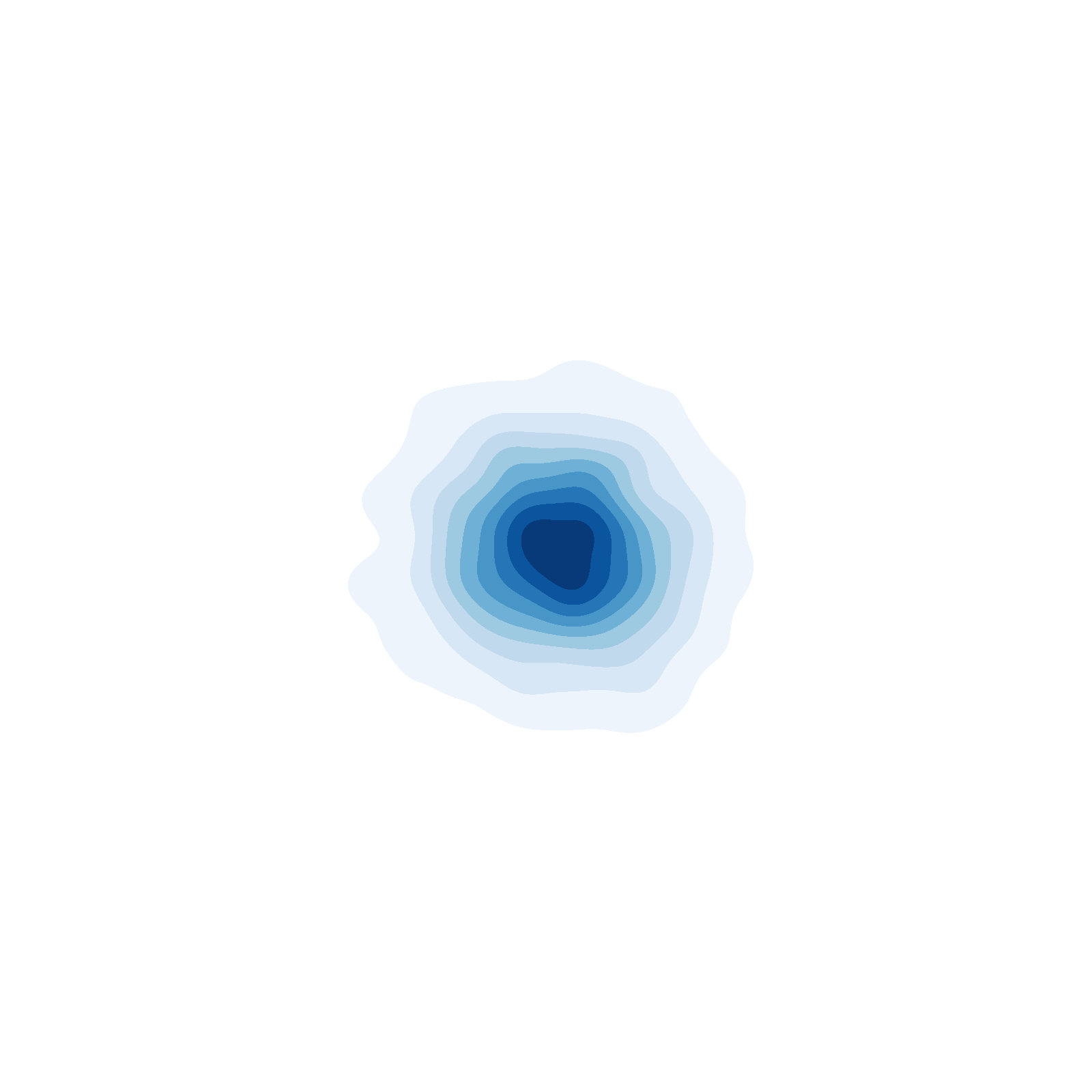}
            \caption{\teacher (0/5)}
        \end{subfigure}\hfill
        \begin{subfigure}[b]{0.16\textwidth}
            \centering
            \includegraphics[width=\textwidth]{figures/gmm/teacher-gmm.png}
            \caption{\teacher (1/5)}
        \end{subfigure}\hfill
        \begin{subfigure}[b]{0.16\textwidth}
            \centering
            \includegraphics[width=\textwidth]{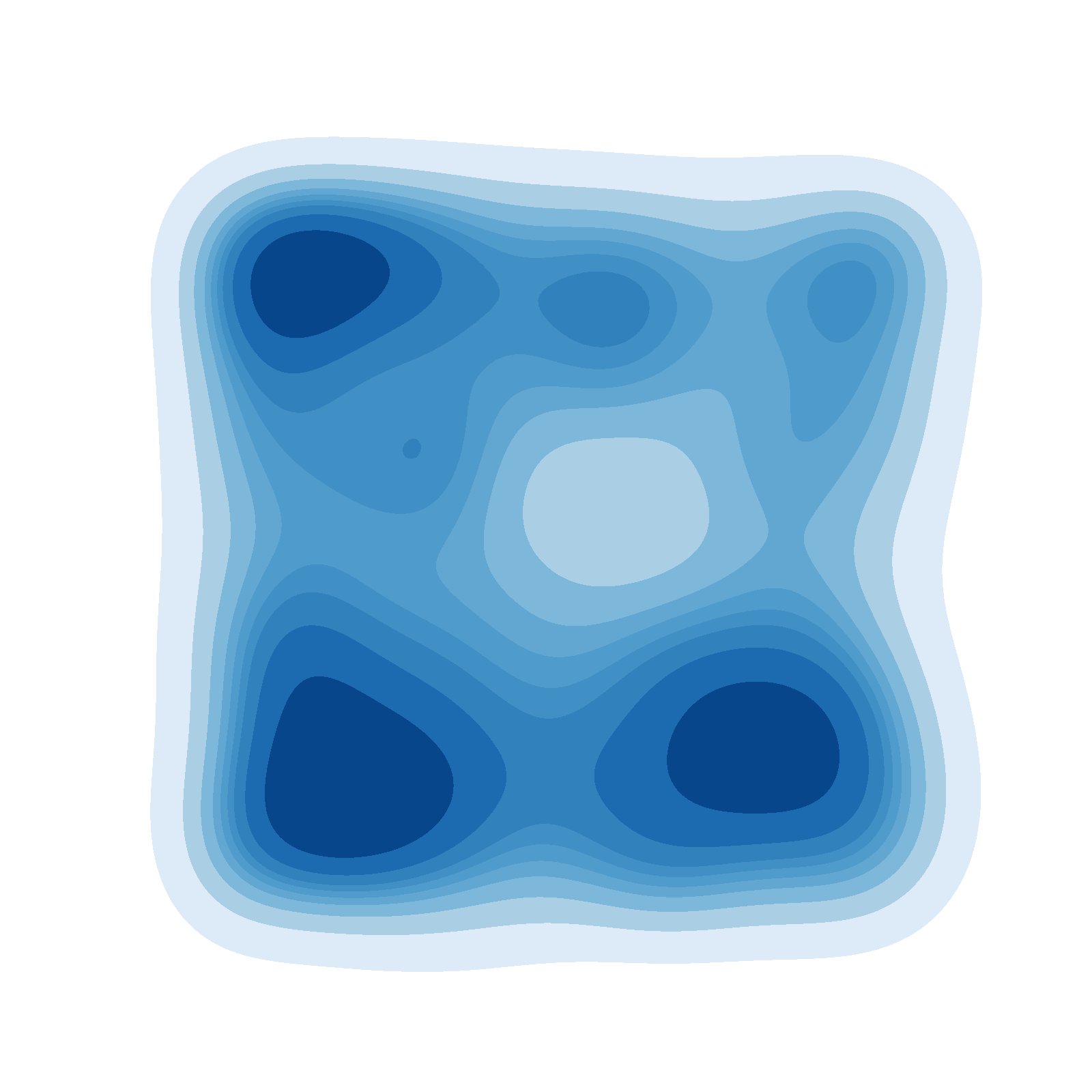}
            \caption{\teacher (2/5)}
        \end{subfigure}\hfill
        \begin{subfigure}[b]{0.16\textwidth}
            \centering
            \includegraphics[width=\textwidth]{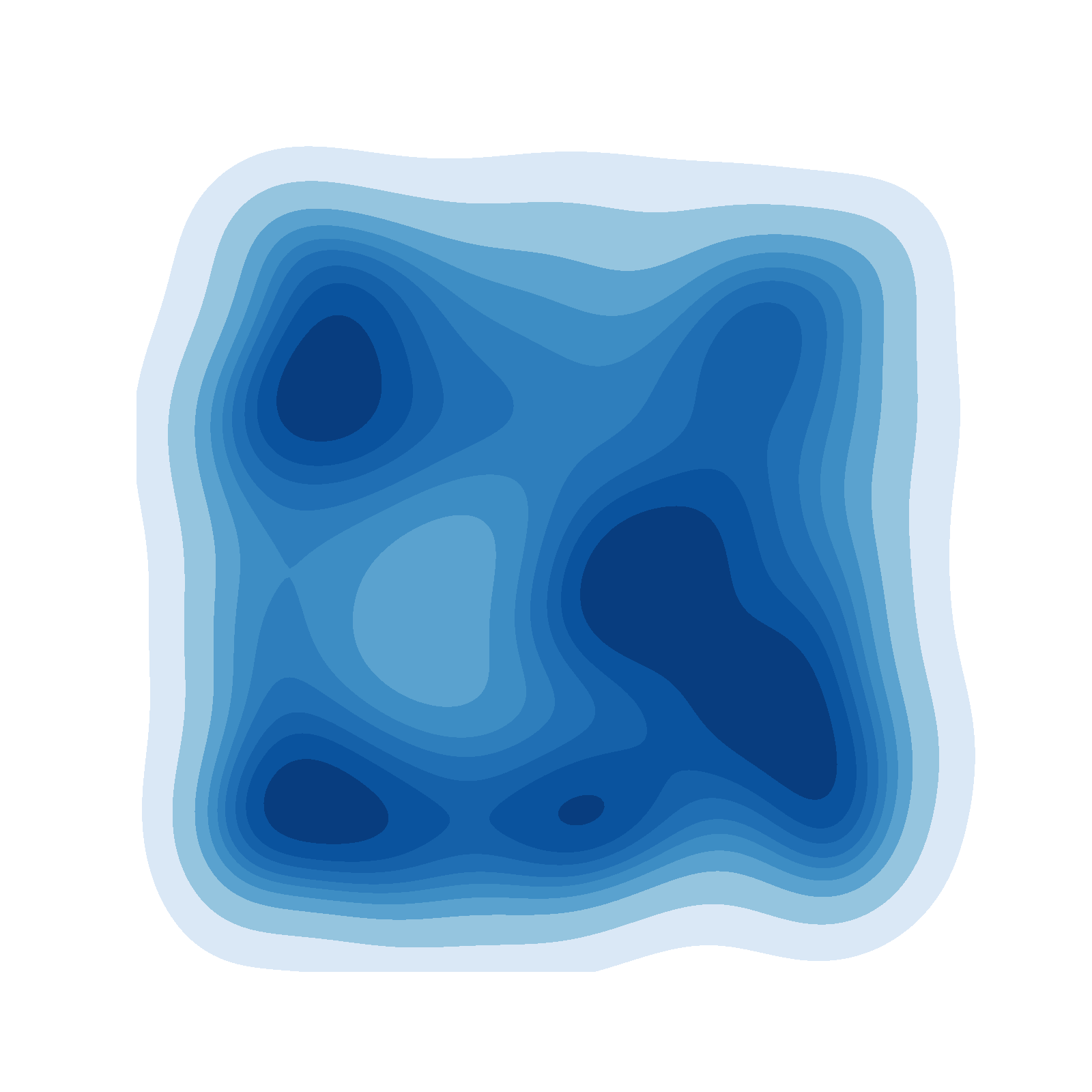}
            \caption{\teacher (3/5)}
        \end{subfigure}\hfill
        \begin{subfigure}[b]{0.16\textwidth}
            \centering
            \includegraphics[width=\textwidth]{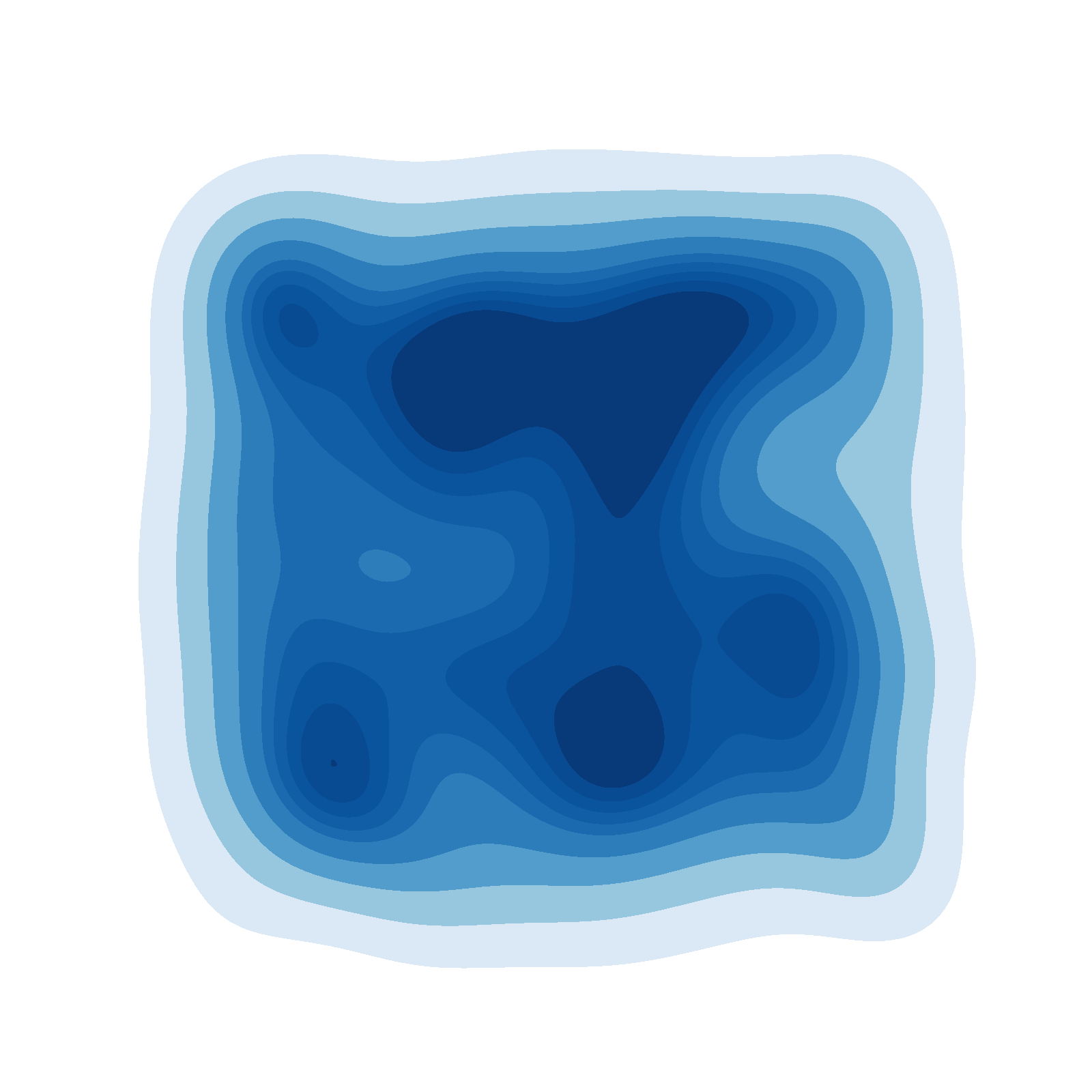}
            \caption{\teacher (4/5)}
        \end{subfigure}\hfill
        \begin{subfigure}[b]{0.16\textwidth}
            \centering
            \includegraphics[width=\textwidth]{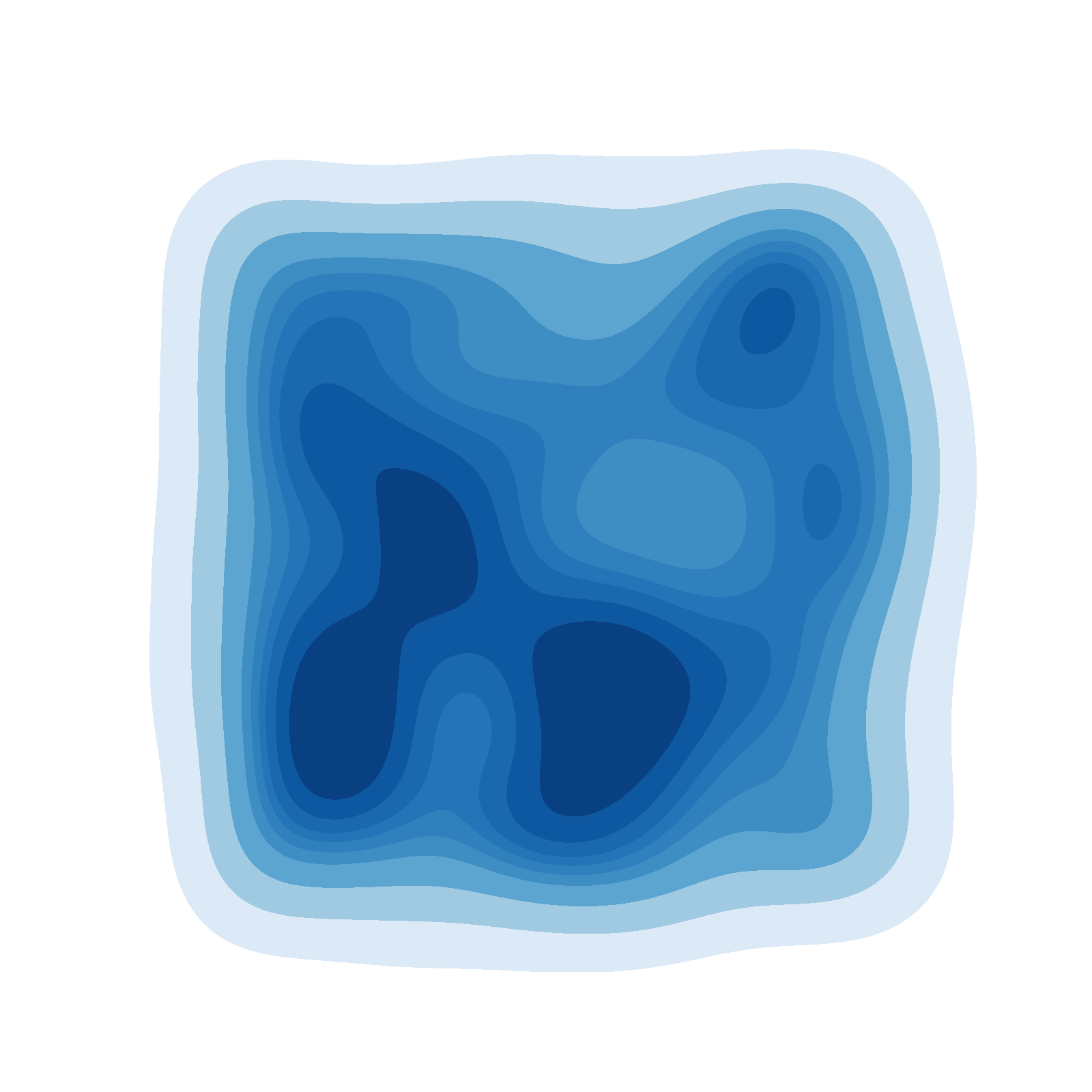}
            \caption{\teacher (5/5)}
        \end{subfigure}       
    \end{minipage}
    
    \caption{Illustration of the distribution dynamics between the \teacher and \student models, along with their stationary distributions. The \student(\textit{ratio}) represents the fraction of completed training steps. }
    \label{fig:diffusion-teacher-student-dynamics-appendix}
\end{figure}

This section presents the distributions of the \teacher and \student models during training. For this visualization, we set the mixing component $\alpha = 0$, meaning the \teacher's major objective is to explore the \student's loss regions. As shown in \cref{fig:grid-teacher-student-dynamics-appendix} and \cref{fig:diffusion-teacher-student-dynamics-appendix}, the \teacher effectively identifies the \student's missing modes, providing a suitable training distribution throughout the epochs, ultimately enabling the \student to discover all the modes successfully.

\clearpage
\section{\teacher for detailed balance}
\label{app:db}

In this section, we extend the proposed idea in~\cref{sec:method} to detailed balance~\cite[DB;][]{bengio2023gflownet}, another GFlowNet learning objective. 

\subsection{Detailed balance and \teacher's reward for DB}

The general problem settings, including MDP formulation and reward function, are the same as~\cref{sec:preliminaries}. Unlike TB, which requires parameterizing $Z_\theta$, DB parameterizes the \emph{state flow function} $F(s;\theta)$ for each state, along with $P_F$ and $P_B$. Note that $F(x;\theta)=R(x)$ for every terminal state $x \in \gX$, and the initial state flow $F(s_0)$ is an estimate of the total reward. The detailed balance discrepancy is defined for any transition of states $(s, a, s')$ as
\begin{equation}\label{eq:db}
\delta_{\text{DB}}(s, a, s'; \theta):= \underbrace{[\log F(s'; \theta) + \log P_B(s \mid s'; \theta)]}_{\text{backward edge flow}} - \underbrace{[\log F(s; \theta) + \log P_F(s' \mid s; \theta)]}_{\text{forward edge flow}}.
\end{equation}

Same as TB, when $\delta_{\text{DB}}(s, a, s'; \theta) = 0$ for all $(s, a, s')$, then $P_F^\top(x)=R(x)/Z$ is achieved for all $x$, where $P_F^\top$ defined as~\cref{eq:terminating_dist}. We can naturally define a DB loss as $\delta_{\text{DB}}(s, a, s'; \theta)^2$ on each transition $(s, a, s')$ sampled from a behavior policy $\pi$. For more formal derivation, please refer to \citet{bengio2023gflownet}.

Analogous to \cref{eq:teacher-reward0} and \cref{eq:teacher-reward1}, we define the basic form as and the re-weighted version of \teacher's reward for DB. The basic form is 
\begin{equation}
    \log R^{\rm basic}_{\text{\teacher-DB}}(x; \theta) = \mathbb{E}_{P_B( \tau \mid x)} \left[\log \left( \sum_{(s, a, s') \in \tau} \delta_{\text{DB}} \left(s, a, s'; \theta\right)^2 \right)\right],
    \label{eq:db-teacher-reward0} 
\end{equation}
and the re-weighted version is
\begin{equation}
    \log R_{\text{\teacher-DB}}^{\rm weighted}(x; \theta) = \mathbb{E}_{P_B(\tau \mid x; \theta)} \left[\log \left( \epsilon + \sum_{(s, a, s') \in \tau} \left(1 + C\mathbb{I}_{\delta_{\text{DB}} \left(s, a, s'; \theta\right)>0}\right)\delta_{\text{DB}} \left(s, a, s'; \theta\right)^2  \right)\right] ,\label{eq:db-teacher-reward1}
\end{equation}
where we approximate the expectation using a single trajectory.

The reward mixing in~\cref{eq:mixing} is not directly available in the DB case since it entails a non-trivial credit assignment problem. Thus, we set $R_{\text{\teacher-DB}}=R_{\text{\teacher-DB}}^{\rm weighted}$.

The overall training procedure is similar to~\cref{sec:training} and Algorithm~\ref{algorithm1}, except we use DB loss for both \student and \teacher training. For a given transition $(s, a, s') \sim \pi$, the DB-loss functions are defined by
\begin{align}
    \mathcal{L}_{\text{\student-DB}}(s, a, s'; \theta) &= \delta_{\text{DB}}(s, a, s'; \theta)^2 = \left( \log \frac{F(s; \theta) P_F(s' \mid s; \theta)}{F(s'; \theta) P_B(s \mid s')} \right)^2,
    \label{eq:db-student_loss}
    \\
    \mathcal{L}_{\text{\teacher-DB}}(s, a, s'; \phi) &= \delta_{\text{\teacher-DB}}(s, a, s'; \phi)^2 = \left( \log \frac{F(s; \phi) P_F(s' \mid s; \phi)}{F(s'; \phi) P_B(s \mid s')} \right)^2,
    \label{eq:db-teacher_loss}
\end{align}
where $F(x;\theta)=R(x)$ and $F(x;\phi)=R_{\text{\teacher-DB}}$.


\begin{table*}[t!]
    \caption{Evaluation results of DB algorithms on deceptive grid worlds with dimension $d$ and grid length $H$. Mean and standard deviation from 5 independent runs are reported. The \textbf{bold} is applied to the best mean value among DB-based methods.}\vspace*{-0.75em}
    \label{tab:db-grid-results}
    \resizebox{1\linewidth}{!}{
    \begin{tabular}{@{}lcccccccc}
        \toprule
        Grid config. $\rightarrow$
        & \multicolumn{2}{c}{$d = 2$, $H = 128$}
        & \multicolumn{2}{c}{$d = 2$, $H = 256$}
        & \multicolumn{2}{c}{$d = 4$, $H = 16$}
        & \multicolumn{2}{c}{$d = 4$, $H = 32$}
        \\
        \cmidrule(lr){2-3}\cmidrule(lr){4-5}\cmidrule(lr){6-7}\cmidrule(lr){8-9}
        Algorithm $\downarrow$ Metric $\rightarrow$ 
        & \# modes ($\uparrow$) & $L_1 \times 10^{-5}$ ($\downarrow$) 
        & \# modes ($\uparrow$) & $L_1 \times 10^{-5}$ ($\downarrow$) 
        & \# modes ($\uparrow$) & $L_1 \times 10^{-5}$ ($\downarrow$) 
        & \# modes ($\uparrow$) & $L_1 \times 10^{-6}$ ($\downarrow$) \\
        \midrule
        TB \hfill(on-policy $\rightarrow$)
        & $645.4 \pm \text{\footnotesize 41.5}$ & $2.20 \pm \text{\footnotesize 0.58}$ 
        & $733.6 \pm \text{\footnotesize 25.1}$ & $1.74 \pm \text{\footnotesize 0.04}$ 
        & $6.6 \pm \text{\footnotesize 2.5}$ & $1.027 \pm \text{\footnotesize 0.012}$ 
        & $16.6 \pm \text{\footnotesize 4.8}$ & $1.635 \pm \text{\footnotesize 0.000}$ 
        \\
        \hspace{2pt} + \teacher
        & $676.0 \pm \text{\footnotesize 0.0}$ & $2.13 \pm \text{\footnotesize 0.18}$ 
        & $2452.6 \pm \text{\footnotesize 21.7}$ & $0.94 \pm \text{\footnotesize 0.03}$ 
        & $51.4 \pm \text{\footnotesize 4.0}$ & $1.019 \pm \text{\footnotesize 0.016}$ 
        & $246.6 \pm \text{\footnotesize 14.7}$ & $1.634 \pm \text{\footnotesize 0.001}$ 
        \\

        \midrule
        DB \hfill(on-policy $\rightarrow$)
        & $644.0 \pm \text{\footnotesize 13.3}$ & $4.57 \pm \text{\footnotesize 0.12}$ 
        & $1025.4 \pm \text{\footnotesize 132.8}$ & $1.69 \pm \text{\footnotesize 0.03}$ 
        & $1.8 \pm \text{\footnotesize 1.7}$ & $\mathbf{1.003} \pm \text{\footnotesize 0.009}$ 
        & $6.8 \pm \text{\footnotesize 2.9}$ & $1.634 \pm \text{\footnotesize 0.000}$ 
        \\
        \hspace{2pt} + $\epsilon$-expl.
        & $578.2 \pm \text{\footnotesize 25.9}$ & $5.23 \pm \text{\footnotesize 0.07}$ 
        & $814.6 \pm \text{\footnotesize 23.4}$ & $1.75 \pm \text{\footnotesize 0.03}$ 
        & $2.4 \pm \text{\footnotesize 0.8}$ & $1.010 \pm \text{\footnotesize 0.004}$ 
        & $25.2 \pm \text{\footnotesize 2.6}$ & $1.635 \pm \text{\footnotesize 0.000}$ 
        \\
        \hspace{2pt} + PER*
        & $316.6 \pm \text{\footnotesize 166.5}$ & $6.01 \pm \text{\footnotesize 0.62}$ 
        & $899.4 \pm \text{\footnotesize 241.7}$ & $1.75 \pm \text{\footnotesize 0.03}$ 
        & $2.6 \pm \text{\footnotesize 1.0}$ & $1.005 \pm \text{\footnotesize 0.012}$ 
        & $17.2 \pm \text{\footnotesize 12.5}$ & $1.634 \pm \text{\footnotesize 0.000}$ 
        \\
        \hspace{2pt} + \teacher
        & $\mathbf{675.4} \pm \text{\footnotesize 0.5}$ & $\mathbf{4.42} \pm \text{\footnotesize 0.15}$ 
        & $\mathbf{1817.6} \pm \text{\footnotesize 49.8}$ & $\mathbf{1.59} \pm \text{\footnotesize 0.01}$ 
        & $\mathbf{58.2} \pm \text{\footnotesize 5.1}$ & $1.009 \pm \text{\footnotesize 0.007}$ 
        & $\mathbf{345.2} \pm \text{\footnotesize 41.2}$ & $\mathbf{1.632} \pm \text{\footnotesize 0.004}$ 
        \\

        \bottomrule
    \end{tabular}
    }
    \vspace{-5pt}
\end{table*}

\subsection{Experiments in deceptive grid world}

We use the same experimental settings as~\cref{sec:grid}. We incorporate a transition-based replay buffer, meaning that we save all state transitions along the trajectory rather than saving only the terminal state $x$ as in the TB case. This allows a closer implementation of the original PER, where the prioritization is performed with a TD error for each transition. To distinguish the PER used in \cref{sec:grid}, we call the transition-based PER as PER*. Regarding the baselines, we do not benchmark PRT as it is not trivial to assign an episodic reward at the terminal state to each transition. We omit the GAFN since its source code only supports the TB algorithm. We also include TB, TB with \teacher for reference. 

The result is summarized in~\cref{tab:db-grid-results}. Similar to the TB case (\cref{tab:grid-results}), \teacher provides a significant improvement over baselines for DB. This confirms that our method offers flexibility across different GFlowNets objective functions.

\section{Scaling experiments}

In this section, we demonstrate the scalability of our method. We first test it on larger-scale tasks on Deceptive Gridworlds, to show that its effectiveness remains consistent as the scale increases. Then we apply our method to a real-world task of prompt sampling on large language models (LLMs), where we discover desirable prompt sentences that require effective exploration of the combinatorially large language search space.

\begin{table*}[htbp]
    \centering
    \caption{Evaluation results on large-scale deceptive grid worlds with dimension $d$ and grid length $H$. The mean and standard deviation of the number of modes discovered (\# modes) from 3 independent runs are reported. Due to the computational expense of obtaining the exact target distribution in large-scale problems, $L_1$ distance is excluded from the analysis. The best mean values are highlighted in \textbf{bold}, while the second-best are marked with an \underline{underline}.}
    \vspace*{-0.5em}
    \label{tab:large-grid-results}
    \resizebox{0.9\linewidth}{!}{
    \begin{tabular}{@{}lcccc}
        \toprule
        Grid config. $\rightarrow$
        & $d = 4$, $H = 64$ 
        & $d = 4$, $H = 128$ 
        & $d = 6$, $H = 32$ 
        & $d = 6$, $H = 64$ 
        \\
        \cmidrule(lr){2-2}\cmidrule(lr){3-3}\cmidrule(lr){4-4}\cmidrule(lr){5-5}
        Num. terminal states $\vert \mathcal{X} \vert$ $\rightarrow$
        & $1.68 \times 10^7$
        & $2.68 \times 10^8$
        & $1.06 \times 10^9$
        & $6.85 \times 10^{10}$
        \\
        \midrule
        TB \hfill(on-policy $\rightarrow$)
        & $24.0 \pm \text{\footnotesize 5.7}$
        & $228.7 \pm \text{\footnotesize 38.1}$
        & $0.3 \pm \text{\footnotesize 0.5}$
        & $3.7 \pm \text{\footnotesize 2.6}$
        \\
        \hspace{2pt} + $\epsilon$-expl.
        & $49.7 \pm \text{\footnotesize 17.2}$
        & $\mathbf{866.7} \pm \text{\footnotesize 154.4}$
        & $0.7 \pm \text{\footnotesize 0.5}$
        & $\underline{11.3} \pm \text{\footnotesize 4.0}$
        \\
        \hspace{2pt} + GAFN
        & $42.0 \pm \text{\footnotesize 6.5}$
        & $180.0 \pm \text{\footnotesize 51.9}$
        & $1.3 \pm \text{\footnotesize 1.2}$
        & $4.0 \pm \text{\footnotesize 2.2}$
        \\
        \hspace{2pt} + PRT
        & $\underline{119.3} \pm \text{\footnotesize 16.0}$
        & $222.7 \pm \text{\footnotesize 11.6}$
        & $\underline{6.7} \pm \text{\footnotesize 0.5}$
        & $7.3 \pm \text{\footnotesize 0.9}$
        \\
        \hspace{2pt} + PER
        & $70.7 \pm \text{\footnotesize 12.3}$
        & $164.3 \pm \text{\footnotesize 23.7}$
        & $2.0 \pm \text{\footnotesize 1.4}$
        & $5.3 \pm \text{\footnotesize 0.9}$
        \\
        \hspace{2pt} + \teacher (\emph{ours})
        & $\mathbf{299.0} \pm \text{\footnotesize 10.7}$
        & $\underline{728.0} \pm \text{\footnotesize 192.0}$
        & $\mathbf{9.7} \pm \text{\footnotesize 3.4}$
        & $\mathbf{21.3} \pm \text{\footnotesize 6.0}$
        \\
        \bottomrule
    \end{tabular}
    }
\end{table*}

\subsection{Large-scale experiment on deceptive hypergrids}

We evaluate our algorithm on larger-scale settings of deceptive grid worlds. Details of the grid configurations and the total number of terminal states (representing the size of the search space) are provided in \cref{tab:large-grid-results}. We use the same experimental settings we used for the grid with ($d=4, H=32$), which are described in~\cref{app:gridworld}. Note that we omit $L_1$ distance from our metrics, as calculating the exact target distribution is computationally infeasible for these large-scale problems.

As shown in the results~\cref{tab:large-grid-results}, our algorithm generally outperforms other baselines even in environments with a much larger search space. We conjecture the strong performance of $\epsilon$-exploration in the ($d=4, H=128$) configuration is because, when $H$ is large, clusters of adjacent high-reward states are large. In such cases, the random, brittle actions from $\epsilon$-exploration can be advantageous in discovering multiple adjacent modes within the same region.

\subsection{Sampling LLM attack prompts}

\begin{figure}[htbp]
  \centering
  \begin{minipage}[b]{0.45\textwidth}
    \centering
    \includegraphics[width=\textwidth]{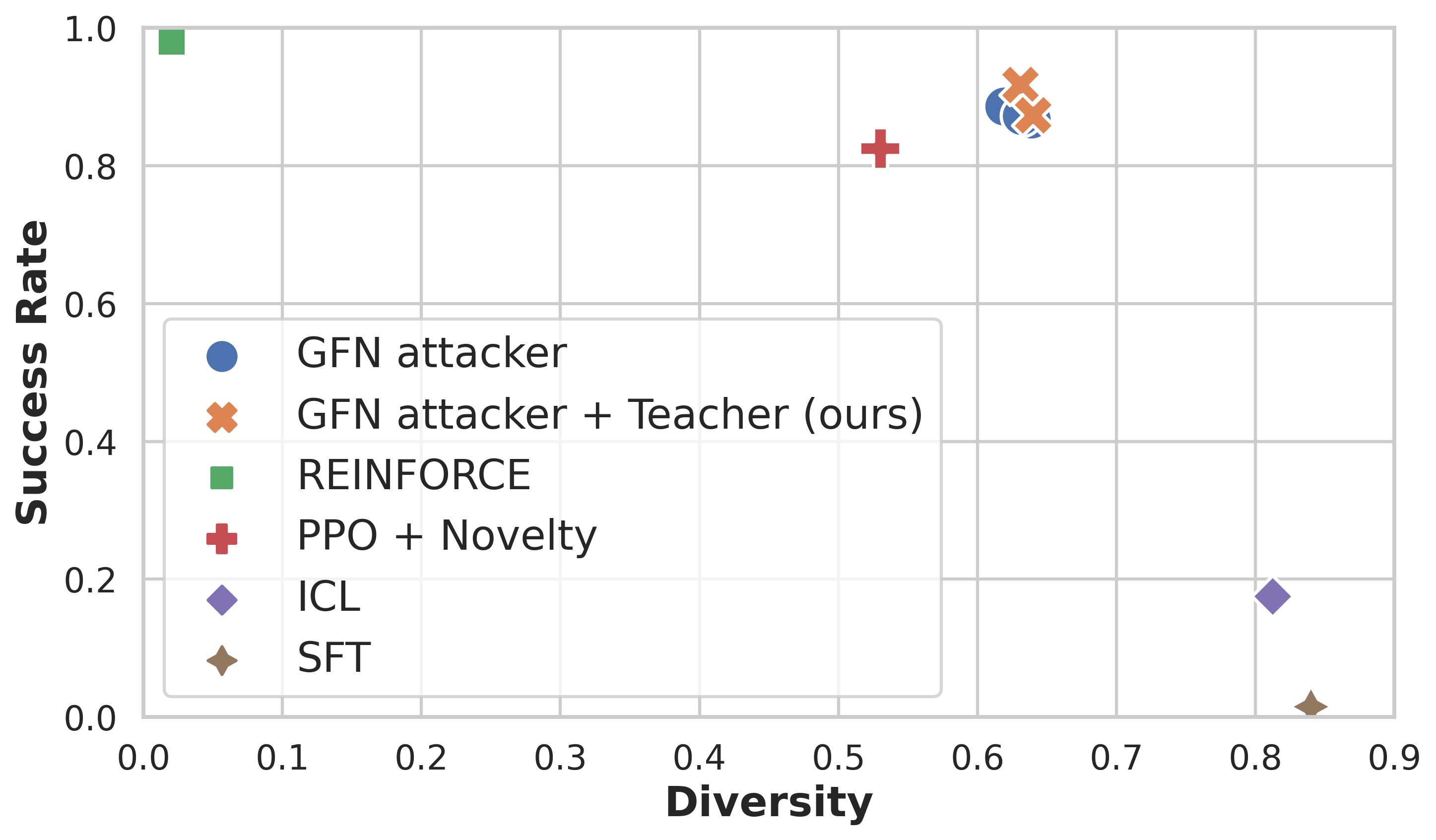}
    \subcaption{vs. All baselines}
    \label{fig:overall_result_redteam}
  \end{minipage}
  \hfill
  \begin{minipage}[b]{0.45\textwidth}
    \centering
    \includegraphics[width=\textwidth]{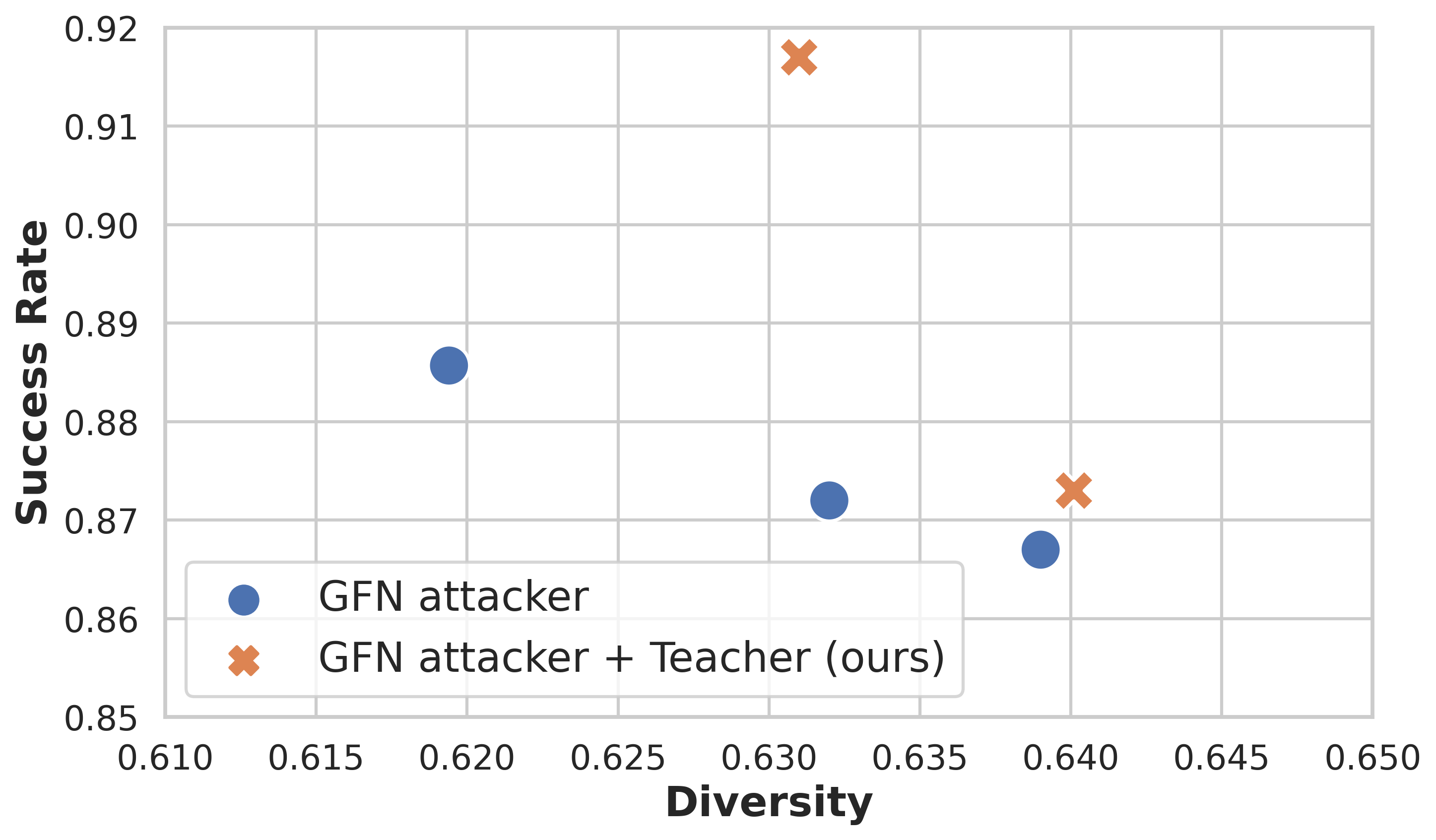} 
    \subcaption{vs. GFN methods}
    \label{fig:zoom_in_results}
  \end{minipage}
\label{fig:results_redteam}
\caption{Percentage of toxic prompts (y-axis) versus prompt diversity (x-axis, measured as 1$-$cosine similarity) for attack methods on GPT-2.}
\end{figure}

We benchmark our off-policy training method on the automated red-teaming task using GFlowNets, following the approach of \citet{lee2024learning}. In this task, the problem is formulated as inference over prompt sequences proportional to a target reward, which is computed by a toxicity classifier evaluated on the response the prompt induces from a fixed victim model. We adhere to the same settings, baselines, evaluation model, and model architecture as the mentioned work. Specifically, we fine-tune GPT-2 as a GFlowNet policy to serve as the attack model.

\paragraph{Setting.} The log-reward is defined as a weighted mixture of the log-likelihood of the language model and the toxicity score of the prompt. Toxicity is evaluated by a classifier that measures how likely the prompt is to induce toxic outputs from the victim model.

\paragraph{Baselines.} We compare our method with several baselines: In-Context Learning (ICL), Supervised Fine-Tuning (SFT), REINFORCE, and Proximal Policy Optimization (PPO) with a novelty reward as suggested by \citet{hong2024curiosity}. We also include the GFlowNet attacker proposed by \citet{lee2024learning}, which is the state-of-the-art method leveraging a GFlowNet sampler with sophisticated off-policy techniques using a replay buffer.

\paragraph{Implementation.} We implement a teacher network over the GFlowNet attacker. While the original GFlowNet attacker uses on-policy updates and a 1:1 ratio in the replay buffer, we use Teacher, Student (on-policy), and replay buffer trajectories in a 2:1:3 ratio. We aim to observe whether this adjustment provides any benefits over the baseline. We reproduce the GFN attacker results with $\beta \in {0.06, 0.07, 0.08}$, where $\beta$ is the temperature parameter for the toxicity reward. For our teacher method, we test with $\beta \in {0.02, 0.05}$. For other baseline results, we directly use the data reported in the figures by \citet{lee2024learning}.

\paragraph{Results.} As shown in \cref{fig:overall_result_redteam}, the teacher network achieves slightly higher diversity and success rates compared to the state-of-the-art GFlowNet attacker. Other baselines fail to produce both diverse and toxic prompts: REINFORCE leads to mode collapse, and ICL and SFT do not generate meaningful toxic prompts (see \citet{lee2024learning} for a more detailed analysis of these baselines). The GFlowNet attacker provides well-balanced results, achieving high toxicity in successful prompt sentences with diversity. Our Teacher network offers a slight improvement over the basleine GFlowNet attacker method (see \cref{fig:zoom_in_results}), demonstrating that our approach can be flexibly applied to real-world tasks. Notably, even when using a lower $\beta$ (indicating a peaky toxicity reward) than the GFlowNet attacker, the diversity achieved is higher. This suggests that the teacher encourages exploration into missing modes to enhance diversity.

These findings show the potential applicability of the teacher concept in large language model reasoning tasks where amortized inference using off-policy RL has been applied, including automated red-teaming, infilling, chain-of-thought reasoning, and planning \citep[as studied in][]{hu2023amortizing,song2024latent,yu2024flow}.

\end{document}